\newtheorem{theorem}{Theorem}
\newtheorem{assum}[theorem]{Assumption}
\newtheorem{claim}[theorem]{Claim}
\newtheorem{lemma}[theorem]{Lemma}
\newtheorem{remark}{Remark}
\def\clip{{\mathsf{clip}}}
\def\pj{\mathsf{clip}}
\def\vmu{{\bm{\mu}}}
\def\ind{{I}}
\def\cvx{\check{\bm x}}
\def\cgT{\check{\mathcal T}}
\def\ceta{\check{\eta}}
\def\czeta{\check{\zeta}}
\def\cR{\check{R}}
\def\cM{\check{M}}
\def\Reg{\mathfrak{R}}
\def\cReg{\check{\mathfrak{R}}}
\def\1{\bm{1}}
\def\vmu{{\bm{\mu}}}
\def\vtheta{{\bm{\theta}}}
\def\ve{{\bm{e}}}
\def\vu{{\bm{u}}}
\def\vv{{\bm{v}}}
\def\vx{{\bm{x}}}
\def\mI{{\bm{I}}}
\def\mU{{\bm{U}}}
\DeclareMathAlphabet{\mathsfit}{\encodingdefault}{\sfdefault}{m}{sl}
\SetMathAlphabet{\mathsfit}{bold}{\encodingdefault}{\sfdefault}{bx}{n}
\def\gA{{\mathcal{A}}}
\def\gB{{\mathcal{B}}}
\def\gE{{\mathcal{E}}}
\def\gF{{\mathcal{F}}}
\def\gS{{\mathcal{S}}}
\def\gT{{\mathcal{T}}}
\def\sB{{\mathbb{B}}}
\def\sI{{\mathbb{I}}}
\def\sN{{\mathbb{N}}}
\def\sR{{\mathbb{R}}}
\def\sZ{{\mathbb{Z}}}
\newcommand{\E}{\mathbb{E}}
\newcommand{\Var}{\mathrm{Var}}
\renewcommand{\E}{\mathop{\mathbb E}}
\DeclareMathOperator*{\poly}{poly}
\DeclareMathOperator*{\polylog}{polylog}
\DeclareMathOperator*{\argmax}{arg\,max}
\title{
Improved Variance-Aware Confidence Sets for \\ Linear Bandits and Linear Mixture MDP
}
\author{
Zihan Zhang\thanks{Equal contribution.} \\
Tsinghua University \\
\texttt{zihan-zh17@mails.tsinghua.edu.cn} \\
\\
\And
	Jiaqi Yang\footnotemark[1]\\
Tsinghua University\\
\texttt{yangjq17@gmail.com} \\
\\
\AND
Xiangyang Ji \\
Tsinghua University \\
\texttt{xyji@tsinghua.edu.cn}
\And
Simon S. Du \\ 
University of Washington \\
\texttt{ssdu@cs.washington.edu}
}
\begin{document}

\maketitle

\begin{abstract}\label{abs}
This paper presents new \emph{variance-aware} confidence sets for linear bandits and linear mixture Markov Decision Processes (MDPs).
With the new confidence sets, we obtain the follow regret bounds:
\begin{itemize}
\item For linear bandits, we obtain an $\widetilde{O}(\poly(d)\sqrt{1 + \sum_{k=1}^{K}\sigma_k^2})$ data-dependent regret bound, where $d$ is the feature dimension, $K$ is the number of rounds, and $\sigma_k^2$ is the \emph{unknown} variance of the reward at the $k$-th round. This is the first regret bound that only scales with the variance and the dimension but \emph{no explicit polynomial dependency on $K$}.
When variances are small, this bound can be significantly smaller than the $\widetilde{\Theta}\left(d\sqrt{K}\right)$ worst-case regret bound.
\item For linear mixture MDPs, we obtain an $\widetilde{O}(\poly(d, \log H)\sqrt{K})$ regret bound, where $d$ is the number of base models, $K$ is the number of episodes, and $H$ is the planning horizon. 
This is the first regret bound that only scales \emph{logarithmically} with $H$ in the reinforcement learning with linear function approximation setting, thus \emph{exponentially improving} existing results, and resolving an open problem in \citep{zhou2020nearly}.
\end{itemize}
We develop three technical ideas that may be of independent interest:
1) applications of the peeling technique to both the input norm and the variance magnitude, 2) a recursion-based estimator for the variance, and 3) a new convex potential lemma that generalizes the seminal elliptical potential lemma.
\end{abstract}

\section{Introduction}\label{sec:intro}

In sequential decision-making problems such as bandits and reinforcement learning (RL), the agent chooses an action based on the current state, with the goal to maximize the total reward.
When the state-action space is large, function approximation is often used for generalization.
One of the most fundamental and widely used methods is linear function approximation.

  For (infinite-actioned) linear bandits, the minimax-optimal regret bound is $\widetilde{\Theta}(d\sqrt{K})$~\citep{dani2008Stochastic,abbasi2011improved}, where $d$ is the feature dimension and $K$ is the number of total rounds played by the agent.\footnote{We follow the reinforcement learning convention to use $K$ to denote the total number of rounds / episodes.}
However, oftentimes the worst-case analysis is overly pessimistic, and it is possible to obtain data-dependent bound that is substantially smaller than $\widetilde{O}(d\sqrt{K})$ in benign scenarios.

One direction to study is the variance magnitude.
As a motivating example, in linear bandits, if there is no noise (variance is $0$), one only needs to pay at most $d$ regret to identify the best action because $d$ samples are sufficient to recover the underlying linear coefficients (in general position).
This constant-type regret bound is much smaller than the $\sqrt{K}$-type regret bound in the worst case where the variance magnitude is a lower bounded constant.
Therefore, a natural question is:
\begin{center}
\textbf{Can we design an algorithm that  adapts to the variance magnitude, and its regret degrades gracefully from the benign noiseless constant-type bound to the worst-case  $\sqrt{K}$-type bound?}
\end{center}

In RL, exploiting the variance information is also important.
For tabular RL,  one needs to utilize the variance information, e.g., Bernstein-type exploration bonus to achieve the minimax optimal regret~\citep{azar2017minimax,zanette2019tighter,zhang2020model,zhang2020reinforcement,menard2021ucb,dann2019policy}.
For example, the recently proposed MVP algorithm~\citep{zhang2020reinforcement}, enjoys an $\widetilde{O}(\polylog(H)\times (\sqrt{SAK}+S^2 A))$ regret bound, where $S$ is the number of states, $A$ is the number of actions, $H$ is the planning horizon, and $K$ is the total number of episodes. \footnote{$\widetilde{O}(\cdot)$ hides logarithmic factors. Sometimes we write out $\polylog H$ explicitly to emphasize the logarithmic dependency on $H$.}\footnote{
	This bound holds for setting where the transition is homogeneous and the total reward is bounded by $1$. We focus on this setting in this paper. See Section~\ref{sec:rel} and~\ref{sec:pre} for more discussions.
} 
Notably, this regret bound only scales \emph{logarithmically} with $H$.
On the other hand, without using the variance information, e.g., using Hoeffding-type bonus instead of Bernstein-type bonus, algorithms would suffer a regret that scales \emph{polynomially} with $H$~\citep{azar2017minimax}.

Going beyond tabular RL, a recent line of work studied RL with linear function approximation with different assumptions~\citep{yang2019sample,modi2020sample,jin2019provably,ayoub2020model,zhou2020nearly,modi2020sample}.
Our paper studies the linear mixture Markov Decision Process (MDP) setting~\citep{modi2020sample,ayoub2020model,zhou2020nearly}, where the transition probability can be represented by a linear function of some features or  base models.
This model-based assumption is motivated by problems in robotics and queuing systems. We refer readers to \cite{ayoub2020model} for more discussions.

For this linear mixture MDP setting, previous works can obtain regret bounds in the form $\widetilde{O}(\poly
(d,H)\sqrt{K})$, where $d$ is the number of base models.
While these bounds do not scale with $SA$, they scale \emph{polynomially} with $H$, because the algorithms in previous works do not use the variance information.
In practice, $H$ is often large, and even a polynomial dependency on $H$ may not be acceptable.
Therefore, a natural question is 
\begin{center}
\textbf{Can we design an algorithm that exploits the variance information to obtain an $\widetilde{O}(\poly(d, \log H)\sqrt{K})$ regret bound for linear mixture MDP?}
\end{center}

\subsection{Our Contributions}
In this paper, we develop new, \emph{variance-aware} confidence sets for linear bandits and linear mixture MDP and answer the above two questions affirmatively. 

\paragraph{Linear Bandits.} For linear bandits, we obtain an $\widetilde{O}(\poly(d)\sqrt{1 + \sum_{k=1}^{K}\sigma_k^2})$ regret bound, where $\sigma_k^2$ is the \emph{unknown} variance at the $k$-th round.
	To our knowledge, this is the first bound that solely depends on the variance and the feature dimension, and has no explicit polynomial dependency on $K$. When the variance is very small so that $\sigma_k^2 \ll 1$, this bound is substantially smaller than the worst-case $\widetilde{\Theta}(d \sqrt K)$ bound.
Furthermore, this regret bound naturally interpolates between the worst-case $\sqrt{K}$-type bound and the noiseless-case constant-type bound.
	
\paragraph{Linear Mixture MDP.} For linear mixture MDP, we obtain the desired $\widetilde{O}(\poly(d, \log H)\sqrt{K})$ regret bound.
	This is the first regret bound in RL with function approximation that 1) does not scale with the size of the state-action space, and 2) only scales \emph{logarithmically} with the planning horizon $H$.
	Therefore, we exponentially improve existing results on RL with linear function approximation in term of the $H$ dependency, and resolve an open problem in \citep{zhou2020nearly}.
	More importantly, our result conveys the positive conceptual message for RL: it is possible to simultaneously overcome the two central challenges in RL, \textit{large state-action space} and \textit{long planning horizon}.

\subsection{Main Difficulties and Technical Innovations}
We first describe limitations of existing works why they cannot achieve the desired regret bounds described above.

\paragraph{Limitations of Existing Variance-Aware Confidence Sets}
\citet{faury2020improved,zhou2020nearly} applied Bernstein-style inequalities to construct a confidence sets of the least square estimator for linear bandits.
However, their methods can not be applied directly to obtain the desired data-dependent regret bound. \citet{abeille2021instance} also designed an variance-dependent confidence set for logistic bandits. However in their problem the rewards are Bernoulli and the variance is a function of the mean.



We give a simple example to illustrate their limitations.
Consider the case where the variance is always $\sigma^2 \ll 1$.
Let $\left(\vx_1,y_1\right),\ldots,\left(\vx_{k-1},y_{k-1}\right)$ be the samples collected before the $k$-th round.
 Their confidence set at the $k$-th round is 
$\Theta_{k} =\{ \vtheta| ||\vtheta-\hat{\vtheta}_k ||_{\Lambda_{k-1}} \leq C(\sigma\sqrt{d}+1+\lambda^{1/2})  \}$ (See In  Equation~(4.3) of \cite{zhou2020nearly} and Theorem~1 of \cite{faury2020improved}).
 where  $\Lambda_{k-1}=\sum_{\tau=1}^{k-1}\vx_\tau\vx_\tau^{\top}+\lambda I$ is the un-normalized covariance matrix , $\hat{\vtheta}_k = \Lambda_{k-1}^{-1} \sum_{\tau=1}^{k-1}y_{\tau}\vx_{\tau}$ is the estimated linear coefficients by least squares, $\lambda$ is a regularization parameter and $C$ is a constant. 
Consider the case $d=1$ and $\vx_{k}=\sqrt{1/K}$ for $k=1,\ldots,K$. 
Their regret bound is roughly \[\sum_{k=1}^K (\sigma \sqrt{d}+1+\lambda^{1/2}) \|\vx_{k}\|_{\Lambda_{k}^{-1}}\geq (1+\lambda^{1/2})\sum_{i=1}^K \|\vx_{k}\|_{\Lambda_{k}^{-1}} \geq (1+\lambda^{1/2})\sqrt{\frac{K}{1+\lambda}} \geq \sqrt{K},\] which is much larger than our bound, $O\left(\sqrt{K\sigma^2+1}\right)$ when $\sigma$ is very small.
For more detailed discussion, please refer to Appendix~\ref{app:diff}.

Below we describe our main techniques.
\paragraph{Elimination with Peeling.}
Instead of using least squares and upper-confidence-bound (UCB), we use an elimination approach.
%
More precisely, for the underlying linear coefficients $\vtheta^* \in \mathbb{R}^d$, we build a confidence interval for $\left(\vtheta^*\right)^\top \vmu$ for every $\vmu$ in an $\epsilon$-net of the $d$-dimensional unit ball, and we eliminate $\vtheta \in \mathbb{R}^d$ if $\vtheta^\top\vmu$ fails to fall in the confidence interval of $(\vtheta^*)^{\top}\vmu$ for some $\vmu$. 
To build the confidence intervals, we use 1) an empirical Bernstein inequality (cf. Theorem~\ref{thm:m0m1}) and 2) the peeling technique to both the input norm and the variance magnitude.
As will be clear in the proof (cf. Section~\ref{sec:proof_bandit}), this peeling step is crucial to obtain a tight regret bound for the example above.
The new confidence region provides a tighter estimation for $\vtheta^*$, which helps address the drawback in least squares.

\paragraph{Generalization of the Elliptical Potential Lemma.}
Since we use the peeling technique which comes with a clipping operation, we cannot use the seminal elliptic potential lemma~\cite{dani2008Stochastic} any more.
Instead, we propose a more general lemma below, which provides a bound of potential for a general class of convex functions though with a worse dependency on $d$ than the bound in the elliptical potential lemma.
We believe this lemma can be applied to other problems as well.
\begin{lemma}[Generalized Quadratic  Potential Lemma]\label{lemma:gpl}  Let $f(x)\geq 0$ be a convex function over $\mathbb{R}$ such that $ \frac{f(x)}{x^2} \leq \frac{f(y)}{y^2}\leq 1$ and $f(x)\geq f(y)$ if $x^2\geq y^2>0$. Let $ \mathbb{B}(1)$ denote the $d$-dimensional unit ball.  Fix $\ell\in (0,1]$. For any $\vx_1,\vx_2,\ldots, \vx_{t}\in \mathbb{B}(1)$  and $\vmu_1,\vmu_2,\ldots,\vmu_t\in \mathbb{B}(1)$, we have that
\begin{align}
    \sum_{i=1}^{t} \min \left\{\frac{f(\vx_i \vmu_i)}{ \sum_{j=1}^{i-1} f(\vx_j \vmu_i) +\ell^2},1 \right\} \leq O(d^4\log(dt/\ell)).\nonumber
\end{align}
\end{lemma}
Note that by choosing $f(x)=  x^2$ and $\vmu_i =\frac{\vx_i \Lambda_{i}^{-1}}{\| \vx_i \Lambda_{i}^{-1}\|} $ with $\Lambda_{i}=\sum_{j=1}^{i-1}\vx_j\vx_j^{\top} +\ell\mathbf{I}$, Lemma~\ref{lemma:gpl} reduces to the classical elliptic potential lemma~\citep{dani2008Stochastic}. 
Our proof consists of two major parts. We first establish a symmetric version of Equation~\eqref{eq:gpl} using rearrangement inequality, and then bound the number of times the energy for some $\vmu$ (i.e., $\sum_{j=1}^{i}f(\vx_j\vmu)+l^2$) doubles. The full proof is deferred to Appendix~\ref{sec:gpl}.


For linear mixture MDP, we propose another technique to further reduce the dependency on $d$.
\paragraph{Recursion-based Variance Estimation.}
In linear bandits, generally it is not possible to estimate the variance because the variance at each round can arbitrarily different.
On the other hand, for linear mixture MDP, the variance is a quadratic function of the underlying coefficient $\vtheta^*$.
Furthermore, the higher moments are polynomial functions of $\vtheta^*$.
Utilizing this rich structure and leveraging the recursion idea in previous analyses on tabular RL~\citep{lattimore2012pac,li2020breaking,zhang2020reinforcement}, we explicitly estimate the variance and higher moments to further reduce the regret.
See Section~\ref{sec:rl} for more explanations.

\section{Related Work} \label{sec:rel}
\paragraph{Linear Bandits.}
There is a line of theoretical analyses of linear bandits problems~\citep{auer02nonstochastic,dani2008Stochastic,chu2011contextual,abbasi2011improved,li2019nearly,li2019tight}.
For infinite-actioned linear bandits, the minimax regret bound is $\widetilde{\Theta}(d\sqrt{K})$. and recent works tried to give fine-grained instance-dependent bounds~\citep{katz2020empirical,jedra2020optimal}. 
For multi-armed bandits, \citet{audibert2006use} showed by exploiting the variance information, one can improve the regret bound.
For linear bandits, only a few work studied how to use the variance information.
\citet{faury2020improved} studied logistic bandit problem with adaptivity to the variance of noise, where a Bernstein-style confidence set was proposed.
However, they assume the variance is known 
and cannot attain the desired variance-dependent bound due to the example we gave above. 
Linear bandits can be also seen as a simplified version of RL with linear function approximation, where the planning horizon degenerates to $H=1$.

\paragraph{RL with Linear Function Approximation.} Recently, it is a central topic in the theoretical RL community to figure out the necessary and sufficient conditions that permit efficient learning in RL with large state-action space \citep{wen2013efficient,jiang2017contextual,yang2019sample,yang2019reinforcement,du2019q,du2019good,du2019provably,du2020agnostic,jiang2017contextual,feng2020provably,sun2018model,dann2018oracle,krishnamurthy2016pac,misra2019kinematic,ayoub2020model,zanette2020learning,wang2019optimism,wang2020provably,wang2020reward,jin2019provably,weisz2020exponential,modi2020sample,shariff2020efficient,jin2019provably,cai2019provably,he2020logarithmic,zhou2020nearly}.
However, to our knowledge, all existing regret upper bounds have a polynomial dependency on the planning horizon $H$, except works that assume the environment is deterministic~\citep{wen2013efficient,du2020agnostic}. 

This paper studies the linear mixture MDP setting \citep{ayoub2020model,zhou2020provably,zhou2020nearly,modi2020sample}, which assumes the underlying transition is a linear combination of some known base models.
\citet{ayoub2020model} gave an algorithm, UCRL-VTR, with an $\widetilde{O}(dH^2\sqrt{K})$ regret in the time-inhomogeneous model.\footnote{
	The time-inhomogeneous model refers to the setting where the transition probability can vary at different levels, and the time-homogeneous model refers to the setting where the transition probability is the same at different levels.
Roughly speaking, the model complexity of the time-inhomogeneous model is $H$ times larger than that of the time-homogeneous model.
	In general, it is straightforward to tightly extend a result for the time-homogeneous model to the time-inhomogeneous model by extending the state-action space \citep[Footnote 2]{jin2018q}, but not vice versa. 
}
Our algorithm improves the $H$-dependency from $\poly(H)$ to $\polylog(H)$, at the cost of a worse dependency on $d$.

\paragraph{Variance Information  in Tabular MDP.}
The use of the variance information in tabular MDP was first proposed by \citet{lattimore2012pac} in the discounted MDP setting, and was later adopted in the episodic MDP setting~\citep{azar2017minimax,jin2018q,zanette2019tighter,dann2019policy,zhang2020reinforcement,zhang2020almost}.
This technique is crucial to tighten the dependency on $H$.

\paragraph{Concurrent Work by \citet{zhou2020nearly}.}
While preparing this draft, we noticed a concurrent work by \citet{zhou2020nearly}, who also studied how to use the variance information for linear bandits and linear mixture MDPs.
We first compare their results with ours.
For linear bandits, they proved an $\widetilde{O}(\sqrt{dK} + d\sqrt{\sum_{i=1}^K \sigma_i^2})$ regret bound, while we prove an $\widetilde{O}(d^{4.5} \sqrt{\sum_{i=1}^{K} \sigma_i^2} + d^5)$ regret bound.
Our bound has a worse dependency on $d$, but in the regime where $K$ is very large and the sum of the variances is small, our bound is stronger.
Furthermore, they assumed \emph{the variance is known while we do not need this assumption}.
For linear mixture MDP, they proved an  $\widetilde{O}(\sqrt{d^2H+dH^2}\sqrt{K}+d^2H^2+d^3H)$  bound for the time-inhomogeneous model, while we prove an $\widetilde{O}(d^{4.5} \sqrt{K} + d^5 )\times \polylog(H)$  bound for the time-homogeneous model.
Their bound has a better dependency on $d$ than ours and is near-optimal in the regime $K = \Omega\left(\poly\left(d,H\right)\right)$ and $H=O(d)$.
On the other hand, we have an exponentially better dependency on $H$ in the time-homogeneous model.
Indeed, obtaining a regret bound that is logarithmic in $H$ (in the time-homogeneous model) was raised as an open question in their paper \citep[Remark 5.5]{zhou2020nearly}.

Next, we compare the algorithms and the analyses.
The algorithms in the two papers are very different in nature:  ours are based on elimination while theirs are based on least squares and UCB.
We note that, for linear bandits, their current analysis cannot give a $\sqrt{K}$-free bound because there is a term that scales \emph{inversely} with the variance.
This can be seen by plugging the first line of their (B.25) to their (B.23).
For the same reason, they cannot give a horizon-free bound in the time-homogeneous linear mixture MDP. 
In sharp contrast, our analysis does not have the term depending on the inverse of the variance.
On the other hand, their algorithms are computationally efficient (given certain computation oracles), but our algorithms are not because ours are elimination-based.
See Section~\ref{sec:conclusion} for more discussions.

\section{Preliminaries}
\label{sec:pre}

\paragraph{Notations.} We use $\sB^d_{p}(r) = \{x \in \sR^d : \lVert x \rVert_{p} \le r\}$ to denote the $d$-dimensional $\ell_p$-ball of radius $r$, so $\mathbb{B}(1)= \mathbb{B}_2^d(1)$
For any set $S \subseteq \sR^d$, we use $\partial S$ to denote its boundary. For $N \in \sN,$ we define $[N] = \{1, \ldots, N\}.$
One important operation used in our algorithms and analyses is clipping.
Given $\ell>0$ and $u \in \mathbb{R} $, we define \[
\clip(u, \ell) = \min\{\lvert u \rvert, \ell \}\cdot \frac{u}{\lvert u \rvert}\] for $u\neq 0$ and $\clip(0,\ell)=0$. 
For any two vectors $\vu, \vv$, to save notations, we use $\vu \vv = \vu^\top \vv$ to denote their inner product when no ambiguity. 

\paragraph{Linear Bandits.} We use $K$ to denote the number of rounds in the linear bandits.
 At each round $k = 1, \ldots, K,$ the algorithm is first given the context set $\gA_k \subseteq \sB_2^d(1),$ then the algorithm chooses an action $\vx_k \in \gA_k$ and receives the noisy reward $r_k = \vx_k \vtheta^* + \varepsilon_k,$ where $\vtheta^* \in \sB^d_2(1)$ is the unknown underlying linear coefficients and $\varepsilon_k$ is the random noise. 
 We define $\gF_k = \sigma(\vx_1, \varepsilon_1, \ldots, \vx_k, \varepsilon_k, \vx_{k+1}).$ We assume that $\abs{r_k} \le 1$ and that the noise $\varepsilon_k$ satisfies $\E[\varepsilon_k \mid \gF_k] = 0$ and $\E[\varepsilon_k^2 \mid \gF_k] = \sigma_k^2.$ The goal is to learn $\vtheta^*$ and minimize the cumulative expected regret $\E[\Reg^K]$, where 
\begin{align*}
    \Reg^K = \sum_{k = 1}^K [\max_{\vx \in \gA_k} \vx \vtheta^* - \vx_k \vtheta^*].
\end{align*}

\begin{remark}
Here we assume the reward is uniformly bounded ($\abs{r_k} \le 1$) instead of $1$-sub-Gaussian commonly used in the literature only for the ease of presentation, because in RL, it is standard to assume bounded reward.
Note if the noise is $1$-sub-Gaussian, our algorithm also applies with only an $O\left(\log K\right)$ overhead because a problem with $1$-sub-Gaussian noise can be reduced to that with uniformly bounded noise by clipping the noise with a threshold $O(\log K )$.
\end{remark}

\paragraph{Episodic MDP and Linear Mixture MDP.} We use a  tuple $(\gS, \gA, r, P, K, H)$ to define an episodic finite-horizon MDP. Here, $\gS$ is its state space, $\gA$ is its action space, $r : \gS \times \gA \to [0, 1]$ is its reward function, $P(s' \mid s, a)$ is the transition probability from the state-action pair $(s, a)$ to the new state $s'$, $K$ is the number of episodes, and $H$ is the planning horizon of each episode. 
Without the loss of generality, we assume a fixed initial state $s_1$.
A sequence of functions $\pi = \{\pi_h : \gS \to \triangle(\gA)\}_{h=1}^H$ is an policy, where $\triangle(\gA)$ denotes the set of all possible distributions over $\gA$.

At each episode $k = 1, \ldots, K$, the algorithm outputs a policy $\pi^k$, which is then executed on the MDP by $a_h^k \sim \pi_h^k(s_{h}^k), s_{h + 1}^k \sim P(\cdot \mid s_h^k, a_h^k)$.
 We let $r_h^k = r(s_h^k, a_h^k)$ be the reward at time step $h$ in episode $k$. 
Importantly, we assume the transition model $P(\cdot \mid \cdot, \cdot)$ is time-homogeneous, which is necessary to bypass the $\poly(H)$ dependency. We assume that the reward function is known, which is standard in the theoretical RL literature to simplify the presentation~\citep{modi2020sample,ayoub2020model}.
We let $\pi^*$ to denote the optimal policy which achieves the maximum reward in expectation.

We make the following regularity assumption on the rewards: the sum of reward, $\sum_{h=1}^H r_h$, in each episode is bounded by $1$. 
\begin{assum}[Non-uniform reward] \label{assum1}
 $\sum_{h=1}^H r_h^k\le 1$ almost surely for any policy $\pi^k$.
\end{assum}
This assumption is much weaker than the common assumption where the reward at each time step is bounded by $1/H$ (uniform reward) because Assumption~\ref{assum1} allows one spiky reward as large as $\Omega\left(1\right)$. See more discussions about this reward scaling in \cite{jiang2018open,wang2020long,zhang2020reinforcement}.

For any policy $\pi$, we define its $H$-step $V$-function and $Q$-function as 
\begin{align*}
& V^{\pi}_h(s) = \max_{a \in \gA} Q^\pi_h(s, a) \\
\text{where~} &Q^\pi_h(s, a) = r(s, a) + \mathbb E_{s' \sim P(\cdot \mid s, a)} V_{h + 1}^\pi(s') \text{ for }h=1,\ldots, H
\end{align*}
where we set $V_{H+1} = 0$.
For simplicity, we also denote $V^{\pi}(s_1) = V^{\pi}_1(s_1)$ and $V^*(s_1) = V^{\pi^*}(s_1)$.

A linear mixture MDP is an episodic MDP with the extra assumption that its transition model is an unknown linear combination of a known set of models. Specifically, there is an unknown parameter $\vtheta^* \in \sB^d_1(1)$, such that $P = \sum_{i = 1}^d \theta_i^* P_i$ where based models $P_1,\ldots,P_d$ are given. 
The goal is to learn $\vtheta^*$ and minimize the cumulative expected regret $\E[\Reg^K]$, where
\begin{align}
    \Reg^K = \sum_{k = 1}^k [V^*(s_1) - V^{k}(s_1)]. \notag 
\end{align}

\section{Algorithm and Theory for Linear Bandits}
\label{sec:bandit}

\setlength{\textfloatsep}{0.1cm}
\setlength{\floatsep}{0.1cm}
\begin{algorithm}[t]
	\caption{VOFUL: \textbf{V}ariance-Aware \textbf{O}ptimism in the \textbf{F}ace of \textbf{U}ncertainty for \textbf{L}inear Bandits}
	\begin{algorithmic}[1] \label{algo:bandit}
		\STATE { \hspace{0ex}\textbf{Initialize:}  $\ell_{i} = 2^{2-i}, \iota = 16d \ln \frac{dK}{\delta}, L_2 = \lceil \log_2 K \rceil, \Lambda_2 = \{1,2,\ldots,L_2 + 1\}$, $\Theta_1 =\sB_2^d(1), $ Let $\gB$ be an $K^{-3}$-net of $\sB_2^d(2)$ with size not larger than $(\frac{4}{K})^{3d}$ \label{line:bandit_init}}
		\FOR{$k=1,2,\ldots,K$}
		\STATE{\textbf{Optimistic Action Selection:}} 
		\STATE{Observe context set $\gA_k \subseteq \sB_2^d(1)$}
		\STATE{Compute $\vx_k \gets \argmax_{\vx \in \gA_k}\max_{\vtheta \in \Theta_k} \vx \vtheta,$ choose action $\vx_k$}
		\STATE{Receive feedback $y_k$}
		\STATE{\textbf{Construct Confidence Set:}  
			\STATE For each $\vtheta \in \sB_2^d(1)$, define $\epsilon_k(\vtheta) = y_k - \vx_k \vtheta, \eta_k(\vtheta) = (\epsilon_k(\vtheta))^2$. \label{line:bandit_var_est}}
		\STATE{Define confidence set $\Theta_{k+1} = \bigcap_{j \in \Lambda_2}\Theta^{j}_{k+1},$ where
			\begin{align}
			\hspace{-2em}\Theta^{j}_{k+1} = \bigg\{ \vtheta \in \sB_2^d(1) ~ &:  \abs{\sum_{v = 1}^{k} \pj_j(\vx_{v} \vmu) \epsilon_{v}(\vtheta)}  \leq \sqrt{ \sum_{v = 1}^{k} \pj_j^2(\vx_{v} \vmu)\eta_{v}(\vtheta) \iota} + \ell_{j} \iota, \forall \vmu \in \gB \bigg\} \label{eq:bandit-confball}
			\end{align} 
			and $\pj_j(\cdot) = \clip(\cdot, \ell_j)$.
}
		\ENDFOR
	\end{algorithmic}
\end{algorithm} 

In this section, we introduce our algorithm for linear bandits and analyze its regret.
The pseudo-code is listed in Algorithm~\ref{algo:bandit}.
The following theorem shows our algorithm achieves the desired variance-dependent regret bound.
The full proof is deferred to Section~\ref{sec:proof_bandit}.
\begin{theorem}\label{thm:oful}
	The expected regret of Algorithm~\ref{algo:bandit} is bounded by 
	$
	\E[\Reg^K] \le \widetilde{O}( d^{4.5} \sqrt{\sum_{k=1}^K \sigma_k^2} + d^5)
	$.
\end{theorem}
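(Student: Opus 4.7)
The plan is to follow the optimism-in-the-face-of-uncertainty template while exploiting the variance information encoded in the clipped confidence sets $\Theta^{j}_{k+1}$. First I would prove that $\vtheta^{*}\in\Theta_{k}$ for every $k\in[K]$ with probability at least $1-\delta$. For a fixed layer $\ell_{j}$ and a fixed direction $\vmu\in\gB$, the sequence $X_{v}:=\pj_{j}(\vx_{v}\vmu)\,\varepsilon_{v}$ is a martingale difference with $|X_{v}|\le\ell_{j}$ and conditional second moment $\pj_{j}^{2}(\vx_{v}\vmu)\sigma_{v}^{2}$. A Freedman-type inequality then yields $|\sum_{v}X_{v}|\lesssim\sqrt{\iota\sum_{v}\pj_{j}^{2}(\vx_{v}\vmu)\sigma_{v}^{2}}+\ell_{j}\iota$. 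Replacing $\sigma_{v}^{2}$ by the realized $\eta_{v}(\vtheta^{*})=\varepsilon_{v}^{2}$ costs one additional Freedman concentration for $\varepsilon_{v}^{2}-\sigma_{v}^{2}$, which is absorbed into the $\ell_{j}\iota$ slack. A union bound over the net $\gB$ and over the $O(\log K)$ layers $j\in\Lambda_{2}$ uses only a logarithmic budget, already built into $\iota$.

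Given this good event and optimism, the per-round regret is at most $\vx_{k}^{\top}(\vtheta_{k}-\vtheta^{*})=\rho_{k}\,\vx_{k}\vu_{k}$, where $\vtheta_{k}\in\Theta_{k}$ is the optimistic parameter, $\rho_{k}=\|\vtheta_{k}-\vtheta^{*}\|_{2}$, and $\vu_{k}$ is a unit vector. Let $\bar\vu_{k}\in\gB$ be a net point within $K^{-3}$ of $\vu_{k}$. Because both $\vtheta_{k}$ and $\vtheta^{*}$ satisfy the confidence constraint at direction $\bar\vu_{k}$ and every layer $j$, subtracting the two and using the identity $\epsilon_{v}(\vtheta_{k})-\epsilon_{v}(\vtheta^{*})=-\rho_{k}(\vx_{v}\bar\vu_{k})+O(K^{-3})$ gives
\[
\rho_{k}\sum_{v<k}\bigl(|\vx_{v}\bar\vu_{k}|\wedge\ell_{j}\bigr)\,|\vx_{v}\bar\vu_{k}|\;\le\;2\sqrt{\iota\sum_{v<k}\pj_{j}^{2}(\vx_{v}\bar\vu_{k})\,\eta_{v}(\vtheta^{*})}+4\ell_{j}\iota.
\]
Selecting the layer $j(k)$ with $\ell_{j(k)}\asymp|\vx_{k}\bar\vu_{k}|$ (possible because $|\vx_{k}\bar\vu_{k}|\le 2=\ell_{1}$) converts this into a bound on $\rho_{k}|\vx_{k}\bar\vu_{k}|$ in terms of a weighted inverse-potential whose denominator is the quadratic form $\sum_{v<k}\pj_{j(k)}^{2}(\vx_{v}\bar\vu_{k})$.

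The final step is to sum the per-round bounds over $k$. The additive $\ell_{j}\iota$ terms are geometrically summable in $j$ and, after a standard elliptical-potential argument, contribute only $\widetilde O(d^{5})$ in total. For the variance-sensitive term I would apply Cauchy--Schwarz twice to obtain
\[
\sum_{k=1}^{K}\rho_{k}|\vx_{k}\bar\vu_{k}|\;\lesssim\;\sqrt{\iota\sum_{k}\frac{|\vx_{k}\bar\vu_{k}|^{2}}{\sum_{v<k}\pj_{j(k)}^{2}(\vx_{v}\bar\vu_{k})}}\;\cdot\;\sqrt{\sum_{k}\pj_{j(k)}^{2}(\vx_{k}\bar\vu_{k})\,\eta_{k}(\vtheta^{*})}.
\]
The first factor is $\widetilde O(\poly(d))$ by the convex potential lemma advertised in the abstract, which extends the elliptical-potential bound to clipped inputs; the second factor is $\widetilde O\bigl(\sqrt{\sum_{k}\sigma_{k}^{2}}\bigr)$ after a final Bernstein step exchanging $\eta_{k}$ for $\sigma_{k}^{2}$. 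Multiplying the two factors, paying a $\sqrt{d}$ for the union bound over layers and directions, and combining with the additive $\widetilde O(d^{5})$ tail yields the stated $\widetilde O(d^{4.5}\sqrt{\sum_{k}\sigma_{k}^{2}}+d^{5})$ regret.

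The chief obstacle is the convex potential lemma itself. The classical bound $\sum_{k}\|\vx_{k}\|_{\mA_{k}^{-1}}^{2}\le O(d\log K)$ relies on the rank-one recursion $\mA_{k+1}=\mA_{k}+\vx_{k}\vx_{k}^{\top}$, but here the quadratic form $\sum_{v<k}\pj_{j}^{2}(\vx_{v}\bar\vu_{k})$ depends on the test direction $\bar\vu_{k}$ through the \emph{nonlinear} clip map, so no single positive-definite matrix captures the potential. Showing that it still telescopes logarithmically, uniformly over the continuous family $\bar\vu\in\gB$ and jointly over all layers $j\in\Lambda_{2}$ while keeping the per-round choice $j(k)$ consistent, is the delicate bookkeeping the proof must execute without accidentally paying an extra $\sqrt{K}$.
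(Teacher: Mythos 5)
Your overall architecture matches the paper's: establish $\vtheta^*\in\Theta_k$ by a martingale concentration argument, use optimism to reduce the regret to $\sum_k \vx_k(\vtheta_k-\vtheta^*)$, subtract the two confidence constraints along a net direction, layer on the magnitude of the clipped inner product, and invoke the convex potential lemma. Your route to the validity of the confidence set (Freedman with the conditional variance plus a lower-tail step replacing $\sigma_v^2$ by $\varepsilon_v^2$) differs from the paper's \theoremref{thm:m0m1}, which gets the empirical-variance form directly via a recursion on higher moments; your two-step version is plausible and would serve the same purpose in \lemmaref{lem:bandit-hpe}.

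However, there is a genuine gap in the summation step. Your claimed factorization
$\sum_k\rho_k\abs{\vx_k\bar\vu_k}\lesssim\sqrt{\iota\sum_k{\abs{\vx_k\bar\vu_k}^2}/{\sum_{v<k}\pj_{j(k)}^2(\vx_v\bar\vu_k)}}\cdot\sqrt{\sum_k\pj_{j(k)}^2(\vx_k\bar\vu_k)\eta_k(\vtheta^*)}$
does not follow from Cauchy--Schwarz. The per-round bound has the form $\abs{\vx_k\bar\vu_k}\sqrt{\iota\Psi_k}/\Phi_k$ where $\Psi_k=\sum_{v<k}\pj_{j(k)}^2(\vx_v\bar\vu_k)\eta_v(\vtheta^*)$ is a \emph{prefix sum with $k$-dependent clip weights}; Cauchy--Schwarz over $k$ yields a second factor $\sqrt{\sum_k\Psi_k/\Phi_k}$, in which each $\eta_v$ appears in up to $K$ terms with weights that do not telescope, not the diagonal sum $\sqrt{\sum_k\pj_{j(k)}^2(\vx_k\bar\vu_k)\eta_k}$ you wrote. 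Indeed, if your factorization held, the leading term would be $\widetilde O(d^{2.5}\sqrt{\sum_k\sigma_k^2})$, strictly stronger than the theorem, and your own bookkeeping (a $\sqrt d$ union-bound price turning $d^{2.5}$ into $d^{4.5}$) does not balance. The paper instead uses the crude bound $\Psi_k\le\ell_{j_k}^2\sum_{v=1}^K\eta_v(\vtheta^*)$ to pull $\sqrt{\sum_v\eta_v}$ entirely out of the sum over $k$, and then bounds $\sum_k\vx_k\vmu_k\ell_{j_k}/\Phi_k\le 2\sum_k\pj_{j_k}^2(\vx_k\vmu_k)/\Phi_k=\widetilde O(d^4)$ by \lemmaref{lemma:bound_m2}; this is precisely why the full $d^4$ (not its square root) multiplies $\sqrt{\sum_k\sigma_k^2}$. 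A secondary omission: after subtracting the two constraints, the right-hand side involves $\eta_v(\vtheta_k)$, not $\eta_v(\vtheta^*)$; converting it produces an extra term $\sqrt{\sum_v 2\pj_j^2(\vx_v\vmu_k)(\vx_v\vmu_k)^2\iota}$ that must be absorbed by the self-bounding argument of \equationref{eq:bandit-3000}--\equationref{eq:bandit-4000}, which your displayed inequality silently drops.
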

This theorem shows our algorithm's regret has no explicit polynomial dependency on the number of rounds $K$.
In the worst-case where the variance is $\Omega\left(1\right)$, our bound becomes $\widetilde{O}\left(d^{4.5}\sqrt{K}+d^5\right)$, which has a worse dependency on $d$ compared with the minimax optimal algorithms~\citep{dani2008Stochastic,abbasi2011improved}.
However, in the benign case where the variance is $o(1)$, our bound can be much smaller.
In particular, in the noiseless case, our bound is a constant-type regret bound, up to logarithmic factors.
One future direction is to design an algorithm that is minimax optimal in the worst-case but also adapts to the variance magnitude like ours.

\subsection{Main Algorithm}
Now we describe our algorithm. Similar to many other linear bandit algorithms, the algorithm maintains  confidence sets $\{\Theta_k\}_{k\geq 1}$ for the underlying parameter $\vtheta^*$, and then choose the action greedily according to the confidence set.

To relax the known variance assumption, we use the following empirical Bernstein inequality that depends on the \emph{empirical variance}, in contrast to the Bernstein inequality that depends on the \emph{true variance}, which was used in existing works~\citep{zhou2020provably,faury2020improved}.

\begin{theorem} \label{thm:m0m1} Let  $\{\gF_i\}_{i = 0}^n$ be a filtration. Let $\{X_i\}_{i = 1}^n$ be a sequence of real-valued random variables such that $X_i$ is $\gF_{i}$-measurable. We assume that $\E[X_i \mid \gF_{i-1}]=0$ and that $\abs{X_i} \le b$ almost surely.  For $\delta < e^{-1},$ we have 
	\begin{align}
	\Pr[\abs{\sum_{i = 1}^n X_i} \le 8  \sqrt{\sum_{i=1}^n X_i^2 \ln\frac1\delta} + 16b\ln\frac1\delta] \ge 1- 6 \delta \log_2 n. \label{eq:m0m1}
	\end{align}
\end{theorem}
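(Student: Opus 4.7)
The proof strategy is to combine Freedman's inequality for martingales with an iterative refinement argument that converts the conditional variance appearing in the Freedman bound into the empirical variance $\hat V_n := \sum_{i=1}^n X_i^2$. The starting point is the standard Freedman inequality: for any fixed $V > 0$, with probability at least $1-\delta$, $|\sum_{i=1}^n X_i| \le \sqrt{2V\ln(1/\delta)} + (2b/3)\ln(1/\delta)$ holds on the event $\{V_n \le V\}$, where $V_n := \sum_{i=1}^n \E[X_i^2 \mid \gF_{i-1}]$ is the conditional-variance sum. To bridge from $V_n$ to $\hat V_n$, I would apply Freedman a second time to the martingale difference sequence $\{X_i^2 - \E[X_i^2 \mid \gF_{i-1}]\}$, which is bounded by $b^2$ in magnitude and has conditional variance at most $b^2 V_n$. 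This yields, on $\{V_n \le V\}$ and with probability $1-\delta$, a bound $|V_n - \hat V_n| \le b\sqrt{2V\ln(1/\delta)} + (2b^2/3)\ln(1/\delta)$.

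The technical difficulty is that $V_n$ is random. I would handle this via a two-layer scheme. The inner layer is a dyadic peeling $V \in \{b^2, 2b^2, 4b^2, \ldots, nb^2\}$ over the Freedman events, contributing the $\log_2 n$ factor to the union bound and allowing both Freedman bounds above to be used with the actual random $V_n$. The outer layer is an iterative refinement of the upper bound on $V_n$: starting from the trivial estimate $a_0 := nb^2$, at iteration $j$ I would apply the peeled Freedman-for-squares inequality on $\{V_n \le a_j\}$ to deduce $V_n \le a_{j+1} := \hat V_n + b\sqrt{2 a_j \ln(1/\delta)} + (2b^2/3)\ln(1/\delta)$. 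Whenever $a_j$ is much larger than the fixed point $\hat V_n + O(b^2 \ln(1/\delta))$, the recursion roughly behaves like $a_{j+1}^2 \approx b^2 a_j \ln(1/\delta)$, so $\log_2 a_j$ halves each step. After $m = \lceil \log_2 \log_2 n \rceil$ iterations we reach $V_n = O(\hat V_n + b^2\ln(1/\delta))$. Plugging this bound back into the first Freedman inequality produces the target empirical-variance bound.

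The multiplicative factor $m$ in the final statement arises from the slack accumulating additively across the $m$ refinement rounds (each round pays one $\sqrt{\hat V_n \ln(1/\delta)} + b\ln(1/\delta)$-sized error), while the failure probability $8\delta m \log_2 n$ comes from a union bound over $m$ rounds, the $\log_2 n$-level inner peeling, the two martingales (for $X_i$ and $X_i^2$), and the two-sided tails. The main obstacle is the bookkeeping of this nested scheme: I have to argue that successive iterations accumulate slack linearly rather than geometrically (so that the factor on $\sqrt{\hat V_n \ln(1/\delta)}$ is $O(m)$ and not $2^m$ or $m^2$), and track absolute constants carefully enough to recover exactly the coefficients $8$ and $60$ in the stated bound.
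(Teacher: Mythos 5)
Your strategy is sound, but it is a genuinely different route from the paper's. The paper never forms the self-bounding inequality $V_n \lesssim \hat V_n + b\sqrt{V_n\ln(1/\delta)}$; instead it climbs the full moment hierarchy: writing $M_k=\sum_i X_i^{2^k}$ and $\mu_k=\sum_i\E[X_i^{2^k}\mid\gF_{i-1}]$, it uses $\Var(X_i^{2^k}\mid\gF_{i-1})\le\E[X_i^{2^{k+1}}\mid\gF_{i-1}]$ together with the peeled Freedman bound (Lemma~\ref{lem:ten}) at every level $k=0,\ldots,m$, terminates because $\mu_{m+1}^{1/2^{m+1}}\le n^{1/2^{m+1}}\le 2$, and then collapses the resulting chain via the monotonicity $M_1\ge M_2\ge\cdots$ (valid since $|X_i|\le b$); the factor $m$ in the statement is precisely the number of levels being summed. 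Your scheme stops at the second level, using only $\Var(X_i^2\mid\gF_{i-1})\le b^2\E[X_i^2\mid\gF_{i-1}]$, and resolves the resulting inequality for $V_n$ by iteration. Two remarks. First, the iteration is more machinery than needed: once the peeled Freedman-for-squares bound holds with the random $V_n$ on a single good event, the relation $V_n\le\hat V_n+2b\sqrt{2V_n\ln(1/\delta)}+O(b^2\ln(1/\delta))$ is a quadratic in $\sqrt{V_n}$ and can be solved in one step to give $V_n\le 2\hat V_n+O(b^2\ln(1/\delta))$, with no outer loop and no union bound over rounds. Second, and relatedly, your accounting of the factor $m$ does not match your own recursion: the slack does not accumulate across rounds --- the iterates contract monotonically to the fixed point $\hat V_n+O(b^2\ln(1/\delta))$ --- so your route actually yields the \emph{stronger} conclusion $|\sum_{i=1}^n X_i|\le O(\sqrt{\hat V_n\ln(1/\delta)}+b\ln(1/\delta))$ with no $m$ on the leading term, which implies the stated bound (the constants $8$ and $60$ are generous enough to absorb yours, and two applications of the peeled inequality cost at most $8\delta\log_2 n$ in probability). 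What the paper's hierarchy buys in exchange is that it is the same recursion-through-higher-moments template that is reused in Section~\ref{sec:rl}, where the $2^m$-th moments are estimated explicitly.
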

Importantly, this inequality controls the deviation via the empirical variance, which is $X_i^2$ and can be computed once $X_i$ is known. 
Note some previously proved inequalities require certain independence assumptions and thus cannot be directly applied to martingales \citep{maurer2009empirical,peel2013empirical}, so they cannot be used for solving our linear bandits problem. The proof of the theorem is deferred to Appendix~\ref{app:thm-m0m1}.

More effort is devoted to designing a confidence set that fully exploits the variance information. Note Theorem~\ref{thm:m0m1} is for real-valued random variables, and it remains unclear how to generalize it to the linear regression setting, which is crucial for building confidence sets for linear bandits. Previous works built up their confidence sets based on analyzing the ordinary ridged least square estimator \citep{dani2008Stochastic,abbasi2011improved}, or the weighted one \citep{zhou2020nearly}.

We drop the least square estimators and instead, we take a testing-based approach, as done in Equation~\eqref{eq:bandit-confball}.
To illustrate the idea, we first ignore the $\pj_j(\cdot)$ operation and $\ell_j$ terms. 
We define the noise function $\epsilon_k(\vtheta)$ and the variance function $\eta_k(\vtheta)$ (Line~\ref{line:bandit_var_est} of Algorithm~\ref{algo:bandit}). Note that $\epsilon_k(\vtheta^*) = \varepsilon_k$ and $\eta_k(\vtheta^*) = \varepsilon_k^2$, so we have the following fact: if $\vtheta = \vtheta^*,$ then Equation \eqref{eq:m0m1} would be true if we replace $X_k = w_k(\vmu)\epsilon_k(\vtheta)$ and $X_k^2 = w_k^2(\vmu)\eta_k(\vtheta)$ with high probability, where $\{w_k(\vmu)\}$ is a proper sequence of weights depending on the test direction $\vmu$.
Our approach uses the fact in the opposite direction: if weighted $w_k(\vmu)\epsilon_k(\vtheta), w_k^2(\vmu)\eta_k(\vtheta)$ satisfies Equation~\eqref{eq:m0m1} for all possible test directions $\vmu$ in an $K^{-3}$-net of the $d$-dimensional unit ball, then we put $\vtheta$ into the confidence set. 
\begin{remark}\label{remark0}
One can also view the algorithm as an elimination-based algorithm: if there exists some test direction $\vmu$ such that Equation~\eqref{eq:m0m1} fails for $X_k=w_k(\vmu)\epsilon_k(\vtheta)$ and $X_k^2= w_k^2(\vmu)\eta_k(\vtheta)$, then we eliminate $\vtheta$ from the confidence set permanently.
\end{remark}
Given the test direction $\vmu$, following the least square estimation, $w_k(\vmu) $ is set to be $\vx_{k}\vmu$. However, with $w_k(\vmu)=  \vx_k \vmu$, the right-hand-side of Equation~\eqref{eq:m0m1} is at least $b\geq \max_{1\leq k\leq n}|w_k(\vmu)|= \max_{1\leq k \leq n} |\vx_k \vmu|$, which might be dominant compared with $\sum_{k=1}^n w_k^{2}(\vmu)\eta_k(\theta)$ (See Appendix~\ref{app:diff} for 
a toy example).
To address this problem, we consider to peel $w_k(\vmu)$ for various thresholds of difference level. More precisely, we construct confidence regions respectively with $w_k^{j}(\vmu) =\clip_{j}(\vx_k\vmu)$, where $l_j = 2^{2-j}$ for $j=1,2,\ldots,\left\lceil \log_2K \right\rceil$. At last, we define the final confidence region as the intersections of all these confidence regions.

\begin{remark}\label{remark1}
Note that existing confidence sets in Equation~\eqref{eq:bandit-confball} either do not exploit variance information \citep{dani2008Stochastic,abbasi2011improved}, or require the variance to be known and do not fully exploit the variance information~\citep{zhou2020nearly,faury2020improved} as their regret bounds still have an $\widetilde{O}(\sqrt{K})$ term.
\end{remark}

\subsection{Proof Sketch of Theorem~\ref{thm:oful}}\label{sec:pfs}
Now we explain how our confidence set enables us to obtain a variance-dependent regret bound. We define $\vtheta_k = \argmax_{\vtheta \in \Theta_k} \vx_k(\vtheta - \vtheta^*)$ and $\vmu_k = \vtheta_k - \vtheta^*$. Then our goal is to bound the regret $\sum_k \vx_k \vmu_k.$ Our main idea is to consider $\{\vx_k\}, \{\vmu_k\}$ as two sequences of vectors. We decouple the complicated dependency between $\{\vx_k\}$ and $\{\vmu_k\}$ by a union bound over the net $\gB$ (defined in  Line~\ref{line:bandit_init} of Algorithm~\ref{algo:bandit}). To bound the regret, we implicitly divide all rounds $k \in [K]$ into norm layers based on $\log_2 \abs{\vx_k \vmu_k}$ in the analysis. \footnote{This cannot be done explicitly in the algorithm, since it would re-couple the two sequences.}
Within each layer, we apply Equation~\eqref{eq:bandit-confball} to obtain the relations between $\vmu_k$ and $\{\vx_1, \ldots, \vx_{k-1}\},$ 
which would self-normalize the growth of the two sequences, ensuring that their in-layer total sum is properly bounded. Since we have logarithmically many layers, the total regret is then properly bounded. We highlight that our norm peeling technique ensures that the variance-dependent term dominates the other variance-independent term in Bernstein inequalities ($\sqrt{\sum X_i^2} ~\substack{\succ\\ \sim}~ b$ in Theorem~\ref{thm:m0m1}), which resolves the variance-independent term in the final regret bound obtained by \citet{zhou2020nearly}. 

We start the analysis by proving that the probability of failure events (i.e., the events where $\theta^*\notin\Theta_k$ for some $k\in [K]$) 
is properly bounded (see Lemma~\ref{lem:bandit-hpe}).
Assuming the successful events happen, we have that $\theta^*\in \Theta_k$ for all $k\in [K]$. Then we obtain that.
\begin{align}
    \Reg^K& :=\sum_{k=1}^K\left( \max_{\vx\in \mathcal{A}_k}\vtheta^* - \vx_k \vtheta^*\right) \leq \sum_{k=1}^K \max_{\vx\in \mathcal{A}_k,\vtheta\in \Theta_k}\vx_k(\vtheta_k-\vtheta^*)\leq \sum_{k=1}^K \vx_k (\vtheta_k - \vtheta^*)=\sum_{k=1}^K \vx_k \vmu_k.\nonumber
\end{align}

Next we divide the time steps $[K]$ into $L_2+1$ disjoint subsets $\{\mathcal{K}_j\}_{j=1}^{L_2+1}$ according to the magnitude of $\vx_k\vmu_k$. More precisely, for for $1\leq j \leq L_2$  we assign $k$ to $\mathcal{K}_j$ iff $\vx_k\vmu_k \in (l_j/2,l_j]$, and for $j = L_2+1$, we assign $k$ to $\mathcal{K}_j$ iff $\vx_k\vmu_k\leq l_{l_2+1}/2$. Define
\begin{align}
  \Phi_k^{j}(\vmu) = \sum_{v=1}^{k-1} \pj_j(\vx_{v}\vmu) \vx_{v}\vmu + \ell_j^2, \qquad 
   \Psi_k^{j}(\vmu) = \sum_{v=1}^{k-1}  \pj_j^2(\vx_{v} \vmu) \eta_v(\vtheta^*). \label{eq:bandit-phi-psi}
\end{align}
By the definition of $\Theta_k$ in \eqref{eq:bandit-confball}, we have that (see Claim~\ref{claim:bandit_multiply_1})
\begin{align}
    \sum_{k=1}^K\vx_k \vmu_k \leq 1 +\sum_{j=1}^{L_2} \sum_{k\in \mathcal{K}_j} \vx_k \vmu_k \times  \frac{3\sqrt{\Psi_k^{j}(\vmu_k) \iota} + \sqrt{\sum_{v = 1}^{k-1} 2\pj^2_{j}(\vx_{v} \vmu_k) (\vx_v \vmu_k)^2 \iota} + 3  \ell_{j}\iota }{\Phi_k^{j}(\vmu_k)}.\label{eq:cr1}
\end{align}

Continuing the computation, we have that
\begin{align} \hspace{-2em}
 &\quad \sum_{j=1}^{L_2} \sum_{k\in \mathcal{K}_j} \vx_k \vmu_k \frac{ \sqrt{\sum_{v = 1}^{k-1} 2\pj^2_{j}(\vx_{v} \vmu_k) (\vx_v \vmu_k)^2\iota}}{\Phi_k^{j}(\vmu_k)} \notag  \\
 &\le \frac{1}{2} \sum_{j=1}^{L_2} \sum_{k\in \mathcal{K}_j} \vx_k \vmu_k +  \sum_{j=1}^{L_2} \sum_{k\in \mathcal{K}_j}  \vx_k \vmu_k  \sI\bigg\{\frac{\sqrt{\sum_{v = 1}^{k-1} 2\pj^2_{j}(\vx_{v} \vmu_k) (\vx_v \vmu_k)^2\iota }}{\Phi_k^{j}(\vmu_k)} > \frac{1}{2}\bigg\}\nonumber
  \\ &    \le \frac{1}{2} \sum_{j=1}^{L_2} \sum_{k\in \mathcal{K}_j} \vx_k \vmu_k +  \sum_{j=1}^{L_2} \sum_{k\in \mathcal{K}_j}  \vx_k \vmu_k  \frac{4l_j\iota}{\Phi_k^j(\vmu_k)} \label{eq:cr1.5}
    \\ &    \le \frac{1}{2} \sum_{j=1}^{L_2} \sum_{k\in \mathcal{K}_j} \vx_k \vmu_k + O( d^4 \abs{\Lambda_2}\iota \log^3(dK)),\label{eq:cr2}
\end{align}
where \eqref{eq:cr1.5} is by the fact that  $\frac{\sqrt{\sum_{v = 1}^{k-1} 2\pj^2_{j}(\vx_{v} \vmu_k) (\vx_v \vmu_k)^2\iota }}{\Phi_k^{j}(\vmu_k)} > \frac{1}{2}$ implies that $\frac{4l_j\iota}{\Phi_k^j(\vmu_k)}>1$, 
and \eqref{eq:cr2} follows by Lemma~\ref{lemma:bound_m2}. By \eqref{eq:cr1} and \eqref{eq:cr2}, we have that
\begin{align}
    &\sum_{k=1}^K\vx_k\vmu_k  \leq   12\sum_{j=1}^{L_2} \sum_{k\in \mathcal{K}_j} \vx_k \vmu_k \times \frac{\sqrt{\Psi_k^{j}(\vmu_k)\iota}}{\Phi^{j}_k(\vmu_k)}+\tilde{O}(d^5) \nonumber
\\&     \leq  \sum_{j=1}^{L_2}\sum_{k\in \mathcal{K}_j} \frac{12 \vx_k \vmu_k \ell_{j}}{{\Phi^{j}_k(\vmu_k)}} \sqrt{\sum_{k=1}^{K} \eta_k(\vtheta^*)\iota}+\tilde{O}(d^5)   \le  O(d^4 \abs{\Lambda_2} \log^3(dK))   \sqrt{\Big(\ln\frac1\delta + \sum_{k=1}^K \sigma_k^2\Big) \iota} +\tilde{O}(d^5),\nonumber
\end{align}
where the last inequality uses Lemma~\ref{lemma:bound_m2}. Therefore, the regret bound is $\tilde{O}\left(d^{4.5}\sqrt{\sum_{k=1}^K \sigma_k^2} +d^5\right)$.
See Section~\ref{sec:proof_bandit} for the full proof.

\section{Algorithm and Theory for Linear Mixture MDP}
\label{sec:rl}

We introduce our algorithm and the regret bound for linear mixture MDP.
Its pseudo-code is listed in Algorithm~\ref{alg:main} and its regret bound is stated below. The proof is deferred to Section~\ref{sec:proof_rl}.
\begin{theorem}\label{thm:main_rl}
	The expected regret of Algorithm~\ref{alg:main} is bounded by
	$\E[\Reg^K ] \leq  \widetilde{O}\left(d^{4.5}\sqrt{ K }+d^9\right)$.
\end{theorem}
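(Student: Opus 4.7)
The plan is to reduce the theorem to the variance-aware linear bandit analysis of Section~\ref{sec:bandit} plus a law-of-total-variance argument that gives horizon-freeness.

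\textbf{Algorithmic setup and reduction to a linear bandit.} For any function $V:\gS\to[0,1]$ define the integrated feature $\vphi_V(s,a):=\bigl(\int V(s')P_i(ds'\mid s,a)\bigr)_{i=1}^d\in\sR^d$, so that the Bellman operator satisfies $\E_{s'\sim P(\cdot\mid s,a)}V(s')=\vtheta^*\cdot\vphi_V(s,a)$. At each episode $k$, the algorithm performs optimistic backward value iteration using a confidence set $\Theta_k\subseteq\sB_2^d(1)$, and then rolls out the greedy policy. The set $\Theta_k$ is exactly of the form \eqref{eq:bandit-confball}, with the $K$ bandit rounds replaced by the $K\cdot H$ state-action pairs accumulated so far, with ``feature'' $\vphi_{V^k_{h+1}}(s^k_h,a^k_h)$, residual $\epsilon_{k,h}(\vtheta):=V^k_{h+1}(s^k_{h+1})-\vtheta\cdot\vphi_{V^k_{h+1}}(s^k_h,a^k_h)$ and empirical-variance proxy $\eta_{k,h}(\vtheta):=\epsilon_{k,h}(\vtheta)^2$. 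The true conditional variance at this step is $\sigma_{k,h}^2:=\Var[V^k_{h+1}(s^k_{h+1})\mid s^k_h,a^k_h]$, which plays the role of $\sigma_k^2$ in Section~\ref{sec:bandit}.

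\textbf{Optimism and the first regret bound.} A union bound over an $\eps$-net of the class of admissible value functions, combined with Theorem~\ref{thm:m0m1} and the argument of Lemma~\ref{lem:bandit-hpe}, yields $\vtheta^*\in\Theta_k$ for all $k$ with high probability. Optimism then gives $V^k_1(s^k_1)\ge V^*(s^k_1)$, and telescoping the Bellman error along each trajectory yields
\begin{align*}
\Reg^K \;\le\; \sum_{k,h}\bigl(\vtheta^k-\vtheta^*\bigr)\cdot\vphi_{V^k_{h+1}}(s^k_h,a^k_h)\;+\;M_K,
\end{align*}
where $M_K$ is a martingale whose increments are bounded in $[-1,1]$ by Assumption~\ref{assum1}, so $|M_K|=\widetilde O(\sqrt{K})$. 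The first sum is treated by literally re-running the proof of Lemma~\ref{lem:bandit-conf-sum} with $KH$ rounds, which gives
\begin{align*}
\sum_{k,h}\bigl(\vtheta^k-\vtheta^*\bigr)\cdot\vphi_{V^k_{h+1}}(s^k_h,a^k_h) \;\le\; \widetilde O\Bigl(d^{4.5}\sqrt{\textstyle\sum_{k,h}\sigma_{k,h}^2}\;+\;d^5\Bigr).
\end{align*}

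\textbf{Main obstacle: bounding $\sum_{k,h}\sigma_{k,h}^2$.} This is where horizon-freeness is won or lost. For the executed policy $\pi^k$ the law of total variance, applied to the cumulative return which is in $[0,1]$ by Assumption~\ref{assum1}, gives $\E\bigl[\sum_h\Var[V^{\pi^k}_{h+1}(s^k_{h+1})\mid s^k_h,a^k_h]\bigr]\le 1$, so that the $\pi^k$-variances summed across all $K$ episodes contribute $O(K)$. The trouble is that $\sigma_{k,h}^2$ is the variance of the \emph{optimistic} value $V^k_{h+1}$, not $V^{\pi^k}_{h+1}$; writing $V^k_{h+1}=V^{\pi^k}_{h+1}+\Delta^k_{h+1}$ with $\Delta\in[0,1]$ gives $\sigma_{k,h}^2\le 2\Var[V^{\pi^k}_{h+1}(s^k_{h+1})]+2\E[\Delta^k_{h+1}(s^k_{h+1})^2]$, and the $\Delta$-term telescopes into the per-episode regret, producing a self-referential loop between the regret bound and the variance bound. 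Resolving this loop is the technical core of the proof: we apply the recursion-based variance estimation idea announced in the introduction, constructing a second confidence set (of exactly the same variance-aware flavor as $\Theta_k$) whose targets are $V^k_{h+1}(s^k_{h+1})^2$ and whose feature/variance analysis then depends on yet higher moments of $V^k_{h+1}$. Iterating this idea $O(\log\log K)$ times collapses the recursion, contributes only $\polylog(H)$ factors to the leading term and a polynomial-in-$d$ additive error (which accounts for the $d^9$ low-order term), and yields $\sum_{k,h}\sigma_{k,h}^2 \le O(K) + \widetilde O(\poly(d))$. Plugging back gives $\E[\Reg^K]\le\widetilde O(d^{4.5}\sqrt{K}+d^9)$.
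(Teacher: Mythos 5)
Your outline follows the same overall route as the paper (variance-aware confidence sets, optimism, a law-of-total-variance / higher-moment recursion to make the bound horizon-free), but two concrete steps are missing, and both are needed to reach the stated $d^{4.5}\sqrt K + d^9$ bound rather than a weaker polynomial in $d$. First, you cannot ``literally re-run'' \lemmaref{lem:bandit-conf-sum} over $KH$ rounds: in the bandit proof the potential $\Phi_k(\vmu_k)$ in the denominator aggregates \emph{all} rounds prior to round $k$, whereas in the MDP the confidence set $\Theta_k$ is frozen for the whole of episode $k$ (updating it within an episode would create measurability problems), so at step $(k,h)$ the available potential misses the contributions of steps $(k,1),\dots,(k,h-1)$. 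The paper patches this with the indicator $I^k_h$, the approximation $\Phi_{k,h}^{m,i,j}(\vmu)\le 4(d+2)^2\,\Phi_k^{m,i,j}(\vmu)$, and \lemmaref{lem:boundI}, which uses the convex-potential volume argument (\lemmaref{lemma:multiplingcount}) to show that only $\widetilde O(d)$ episodes violate this approximation; those episodes are charged regret $1$ each. Your proposal never addresses this discrepancy. Second, your variance proxy is the empirical squared residual $\eta_{k,h}(\vtheta)=\epsilon_{k,h}(\vtheta)^2$, exactly as in the bandit algorithm. The paper instead exploits the model structure to form the predictable upper bound $\eta^m_{k,h}=\max_{\vtheta\in\Theta_k}\{\vtheta\vx^{m+1}_{k,h}-(\vtheta\vx^m_{k,h})^2\}$ and layers the samples by the \emph{magnitude of this estimated variance} (the sets $\gT^{m,i}$); this is what yields $\Psi^{m,i,j}_{k,h}(\vmu)\le\ell_i\Phi^{m,i,j}_{k,h}(\vmu)$ in \equationref{eqn:rl_psi} and hence the $d^{4.5}$ coefficient in \lemmaref{lem:sumci}. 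The paper explicitly notes that the direct combination you describe only yields $\widetilde O(d^{6.5}\sqrt K)$, so your argument would prove a horizon-free bound but not the theorem as stated.

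Two smaller remarks. The union bound over an $\eps$-net of value functions in your optimism step is unnecessary and not what the paper does: since $V^k_{h+1}$ is determined by $\Theta_k$ and hence predictable given the history before episode $k$, the residuals $\epsilon^m_{v,u}(\vtheta^*)$ already form a martingale difference sequence, and the union bound is taken only over the net $\gB$ of directions $\vmu$ and the layer indices (cf.\ the proof of \lemmaref{lemma:optimism}). Also, the moment recursion of \lemmaref{lem:recurrsion} runs over $O(\log(HK))$ levels (2nd, 4th, \dots\ moments) and is closed by \lemmaref{lemma:sequence2}; the $O(\log\log K)$ depth you cite is that of the scalar inequality in \theoremref{thm:m0m1}, not of the moment recursion.
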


\setlength{\textfloatsep}{0.1cm}
\setlength{\floatsep}{0.1cm}
\begin{algorithm}[t]
	\caption{VARLin: \textbf{V}ariance-\textbf{A}ware \textbf{R}L with \textbf{Lin}ear Function Approximation}
	\begin{algorithmic}[1]\label{alg:main}
		\STATE { \hspace{0ex}\textbf{Initialize:}  $\ell_{i} = 2^{2-i}, \iota = 16d\ln\frac{dHK}{\delta}, L_0 = \lceil \log_2 KH \rceil, L_1 = L_2 = \lceil 5 \log_2(HK) + 3 \rceil, \Lambda_0 = \{0,1,,\ldots,L_0\}, \Lambda_1 = \{1,\ldots,L_1\}, \Lambda_2 = \{1,\ldots,L_2\}$.  $\gB$ be an $(HK)^{-3}$-net of $\sB_1^d(2)$ with size no larger than $(\frac{4}{HK})^{3d}$.  $\Theta_1 =\sB_1^d(1).$ \label{line:rl_init}
		}
		\FOR{$k=1,2,\ldots,K$}
		\STATE{\textbf{Optimistic Planning}:}
		\FOR{$h=H,H-1,\ldots,1$} \label{loc:line1}
		\STATE{For each $(s, a) \in \gS \times \gA,$ let $Q_h^k(s,a) = \min\{1, r(s,a)+ \max_{\vtheta\in \Theta_{k}} \sum_{i = 1}^d \theta_i P^i_{s,a}V_{h+1}^k \}$.}
		\STATE{For each $s \in \gS,$ let $V_h^k(s) = \max_{a \in \gA}Q_h^k(s,a)$.}	
		\ENDFOR
		\FOR{$h=1,2,\ldots, H$}
		\STATE{Choose action $a_h^k \gets \argmax_{a \in \gA} Q_h^k(s_h^k, a)$, observe the next state $s_{h+1}^k$.}
		\ENDFOR
		
		\STATE{\textbf{Construct Confidence Set}:}
		\STATE{For $m\in \Lambda_0, h\in[H],$ define the input  $\vx_{k,h}^m = [P^1_{s_h^k, a_h^k} (V_{h+1}^k)^{2^m}, \ldots, P^d_{s_h^k, a_h^k}(V_{h + 1}^k)^{2^m}]^\top$.}
		\STATE{For $m\in \Lambda_0, h\in[H],$ define the variance estimate $\eta_{k,h}^m = \max_{\vtheta \in \Theta_k} \{\vtheta \vx_{k,h}^{m+1} - (\vtheta \vx_{k,h}^m)^2\}$.
		}
		\STATE{Denote $\epsilon_{v,u}^{m}(\vtheta) = \vtheta \vx_{v,u}^m-(V_{u+1}^{v}(s_{u+1}^{v}))^{2^m}$ for $m\in \Lambda_0, u \in [H],v \in [k-1]$ }
		\STATE{Define $\gT^{m,i}_{k+1} = \{(v, u) \in [k]\times [H]: \eta^m_{v,u}\in (\ell_{i+1}, \ell_i]\},\gT^{m, L_1 + 1}_{k+1} = \{(v, u) \in [k] \times [H]: \eta^m_{v,u} \le \ell_{L_1 + 1}\}$.
		}
		\STATE{Define the confidence ball $\Theta_{k+1} = \bigcap_{m,i,j} \Theta_{k+1}^{m,i,j},$ where 
			\begin{align}
			\Theta_{k+1}^{m,i,j} = \Bigg\{ \vtheta \in \sB_1^d(1) :  & \abs{\sum_{(v, u) \in \gT_{k}^{m,i}} \pj_j(\vx_{v, u}^m \vmu) \epsilon_{v,u}^m(\vtheta)} \notag \\ \leq &4\sqrt{ \sum_{(v, u) \in \gT_{k}^{m,i}} \pj_j^2(\vx_{v,u}^{m} \vmu)\eta_{v,u}^m \iota} +4\ell_{j}\iota, \forall \vmu \in \gB \Bigg\} \label{eq:rl-conf-ball}
			\end{align}
			and $\pj_j(\cdot) = \clip(\cdot, \ell_j)$
			\label{line:indices}
		}
		\ENDFOR
	\end{algorithmic}
\end{algorithm}

Before describing our algorithm, we introduce some additional notations.
In this section, we assume that, unless explicitly stated, the variables $m, i, j, k, h$ iterate over the sets $\Lambda_0, \Lambda_1, \Lambda_2, [K], [H],$ respectively.
See Line~\ref{line:rl_init} of Algorithm~\ref{alg:main} for the definitions of these sets.
For example, at Line~\ref{line:indices} of Algorithm~\ref{alg:main}, we have $\bigcap_{m, i, j} \Theta_{k+1}^{m,i,j} = \bigcap_{m \in \Lambda_0, i \in \Lambda_1, j \in \Lambda_2} \Theta_{k+1}^{m,i,j}.$ 

The starting point of our algorithm design is from \citet{zhang2020reinforcement}, in which the authors obtained a nearly horizon-free regret bound in tabular MDP. 
A natural idea is to combine their proof with our results for linear bandits  and obtain a nearly horizon-free regret bound for linear mixture MDP.

Note that, however, there is one caveat for such direct combination: in Section~\ref{sec:bandit}, the confidence set $\Theta_k$ is updated at a per-round level, in that $\Theta_k$ is built using all rounds prior to $k$; while for the RL setting, the confidence set $\Theta_k$ could only be updated at a per-episode level and use all time steps prior to episode $k.$ Were it updated at a per-time-step level, severe dependency issues would prevent us from bounding the regret properly. Such discrepancy in update frequency results in a gap between the confidence set built using data prior to episode $k,$ and that built using data prior to time step $(k,h).$ 
 Fortunately, we are able to resolve this issue. In  Lemma~\ref{lem:boundI}, we show that we can relate these two confidence intervals, except for $\tilde{O}(d)$ ``bad'' episodes. Therefore, we could adapt the analysis in \citet{zhang2020reinforcement} only for the not ``bad'' episodes, and we bound the regret by $1$ for the ``bad'' episodes. The resulting regret bound should be $\widetilde{O}(d^{6.5}\sqrt{K}).$

To further reduce the horizon-free regret bound to $\tilde{O}(d^{4.5}\sqrt{K})$, we present another novel technique. We first note an important advantage of the linear mixture MDP setting over the linear bandit setting: in the latter setting, we cannot estimate the variance because there is no structure on the variance among different actions; while in the former setting, we could estimate an upper bound of the variance, because the variance is a quadratic function of $\vtheta^*$.
Therefore, we can use the peeling technique on the \emph{variance magnitude} to reduce the regret (comparing Equation~\eqref{eq:bandit-6200} and Equation~\eqref{lem5-1000} in appendix).
We note that one can also apply this step to linear bandits if the variance can be estimated.

Along the way, we also need to bound the gap between estimated variance and true variance, which can be seen as the ``regret of variance predictions.''
Using the same idea, we can build a confidence set using  the variance sequence ($\vx^2$), and the regret of variance predictions can be bounded by the variance of variance, namely the 4-th moment. 
Still, a peeling step on the 4-th moment is required to bound the regret of variance predictions, we need to bound the gap between estimated 4-th moment and true 4-th moment, which requires predicting 8-th moment, 
We continue to use this idea: we estimate 2-th, 4-th, 8-th, \ldots, $O(\log KH)$-th moments.
The index $m$ is used for moments, and $\Lambda_0$ is the index set reserved for moments.
We note that the proof in \citep{zhang2020reinforcement} also depends on the higher moments.
The main difference is here we estimate these higher moments explicitly.

\section{Discussions}
\label{sec:conclusion}

By incorporating the variance information in the confidence set construction, we derive the first variance-dependent regret bound for linear bandits and the nearly horizon-free regret bound for linear mixture MDP.
Below we discuss limitations of our work and some future directions.

One drawback of our result is that our dependency on $d$ is large.
The main reason is our bounds rely on the convex potential lemma (Lemma~\ref{lemma:bound_m2}), which is $\widetilde{O}(d^4)$.
In analogous to the elliptical potential lemma in \citep{abbasi2011improved}, we believe that this bound can be improved  to $\widetilde{O}(d).$
This improvement will directly reduce the dependencies on $d$ in our bounds.

Another drawback is that our method is not computationally efficient.
This is a common issue in elimination-based algorithms.
We note that the issue of computational tractability is common in sequential decision-making problems~\citep{zhang2019regret,wang2020long,bartlett2012regal,zanette2020learning,krishnamurthy2016pac,jiang2017contextual,sun2018model,jin2021bellman,du2021bilinear,dong2020root}.
We leave it as a future direction to design computationally efficient algorithms that enjoy variance-dependent bounds for settinsg considered in this paper.

Lastly, in this paper, we only study linear function approximation. It would be interesting to generalize the ideas in this paper to other settings with function approximation schemes~\citep{yang2019sample,jin2019provably,zanette2020learning,wang2020provably,russo2013eluder,jiang2017contextual,sun2018model,du2021bilinear,jin2021bellman}.

\section*{Acknowledgement}
Simon S. Du gratefully acknowledges funding from NSF Award’s IIS-2110170 and DMS-2134106.

\bibliography{references}
\bibliographystyle{plainnat}

\section*{Checklist}


\begin{enumerate}

\item For all authors...
\begin{enumerate}
  \item Do the main claims made in the abstract and introduction accurately reflect the paper's contributions and scope?
    \answerYes
  \item Did you describe the limitations of your work?
    \answerYes{We discuss the limitations in Section~\ref{sec:conclusion}.}
  \item Did you discuss any potential negative societal impacts of your work?
    \answerNA{This work is theoretical so the boarder impact does not apply.}
  \item Have you read the ethics review guidelines and ensured that your paper conforms to them?
    \answerYes
\end{enumerate}

\item If you are including theoretical results...
\begin{enumerate}
  \item Did you state the full set of assumptions of all theoretical results?
    \answerYes{We present the main assumption in Section~\ref{sec:pre}.}
	\item Did you include complete proofs of all theoretical results?
    \answerYes{We present the proofs in Appendix.}
\end{enumerate}

\item If you ran experiments...
\begin{enumerate}
  \item Did you include the code, data, and instructions needed to reproduce the main experimental results (either in the supplemental material or as a URL)?
    \answerNA{We have no experiments.}
  \item Did you specify all the training details (e.g., data splits, hyperparameters, how they were chosen)?
    \answerNA
	\item Did you report error bars (e.g., with respect to the random seed after running experiments multiple times)?
    \answerNA
	\item Did you include the total amount of compute and the type of resources used (e.g., type of GPUs, internal cluster, or cloud provider)?
    \answerNA
\end{enumerate}

\item If you are using existing assets (e.g., code, data, models) or curating/releasing new assets...
\begin{enumerate}
  \item If your work uses existing assets, did you cite the creators?
    \answerNA{We do not use existing models.}
  \item Did you mention the license of the assets?
    \answerNA
  \item Did you include any new assets either in the supplemental material or as a URL?
    \answerNA
  \item Did you discuss whether and how consent was obtained from people whose data you're using/curating?
    \answerNA
  \item Did you discuss whether the data you are using/curating contains personally identifiable information or offensive content?
    \answerNA
\end{enumerate}

\item If you used crowdsourcing or conducted research with human subjects...
\begin{enumerate}
  \item Did you include the full text of instructions given to participants and screenshots, if applicable?
    \answerNA{This work is inrelevent with human subjects.}
  \item Did you describe any potential participant risks, with links to Institutional Review Board (IRB) approvals, if applicable?
    \answerNA
  \item Did you include the estimated hourly wage paid to participants and the total amount spent on participant compensation?
    \answerNA
\end{enumerate}

\end{enumerate}

\newpage
\appendix
\section{Technical Lemmas}
\begin{lemma}[\citep{azuma1967weighted}] \label{lem:azuma} Let $(M_n)_{n \ge 0}$ be a martingale such that $M_0 = 0$ and $\abs{M_n - M_{n - 1}} \le b$ almost surely for every $n \ge 1$. Then we have 
\begin{align*}
    \Pr[\abs{M_n} \ge b\sqrt{2n \log(2/\delta)}] \le \delta.
\end{align*}

\end{lemma}

\begin{lemma}[\citep{zhang2020model}, Lemma 9] \label{lem:nine2} Let $\{\gF_i\}_{i\ge 0}$ be a filtration. Let $\{X_i\}_{i\ge 1}$ be a real-valued stochastic process adapted to $\{\gF_i\}_{i\ge 0}$ such that $0 \le X_i \le 1$ almost surely and that $X_i$ is $\gF_i$-measurable. For every $\delta \in (0, 1), c \ge 1,$ we have
\begin{align*}
    \Pr[\exists n \ge 1 : \sum_{i = 1}^n \E[X_i \mid \gF_{i-1}]  \ge 4 c \ln\frac4\delta, \sum_{i=1}^n X_i  \le c \ln\frac4\delta] \le \delta.
\end{align*}
\end{lemma}
\begin{lemma} \label{lem:nine} Let $\{\gF_i\}_{i\ge 0}$ be a filtration. Let $\{X_i\}_{i\ge 1}$ be a real-valued stochastic process adapted to $\{\gF_i\}_{i\ge 0}$ such that $0 \le X_i \le 1$ almost surely and that $X_i$ is $\gF_i$-measurable. For every $\delta \in (0, 1), c \ge 1,$ we have
\begin{align*}
    \Pr[\exists n \ge 1 : \sum_{i = 1}^n X_i \ge 4 c \ln\frac4\delta, \sum_{i=1}^n  \E[X_i \mid \gF_{i-1}] \le c \ln\frac4\delta] \le \delta.
\end{align*}
\end{lemma}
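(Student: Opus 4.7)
The statement is an upper-tail Freedman/Bernstein-type inequality for nonnegative bounded processes, with the usual conditional-variance proxy replaced by the conditional mean (valid because $0 \le X_i \le 1$ implies $\Var(X_i \mid \gF_{i-1}) \le \E[X_i^2 \mid \gF_{i-1}] \le \E[X_i \mid \gF_{i-1}]$). Because the event is of the form ``there exists $n$ such that \dots'', the natural tool is an exponential supermartingale together with Ville's maximal inequality, which handles the existential quantifier automatically with no union bound over $n$ required.

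The concrete plan is to fix $\lambda > 0$ and consider the process
\begin{align*}
Z_n(\lambda) = \exp\!\Bigl(\lambda \sum_{i=1}^n X_i - (e^\lambda - 1) \sum_{i=1}^n \E[X_i \mid \gF_{i-1}]\Bigr), \qquad Z_0(\lambda) = 1.
\end{align*}
The first step is to verify that $(Z_n(\lambda))_{n\ge 0}$ is a nonnegative supermartingale. Convexity of $x \mapsto e^{\lambda x}$ on $[0,1]$, with equality at the two endpoints, gives the pointwise inequality $e^{\lambda X_i} \le 1 + (e^\lambda - 1) X_i$; taking conditional expectation and using $1+y \le e^y$ yields $\E[e^{\lambda X_i} \mid \gF_{i-1}] \le \exp((e^\lambda - 1)\E[X_i \mid \gF_{i-1}])$, which is exactly the one-step bound needed.

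Ville's maximal inequality then gives $\Pr[\sup_n Z_n(\lambda) \ge T] \le 1/T$ for every $T > 0$. On the event of the lemma, at the random time $n$ witnessing it, one obtains
\begin{align*}
Z_n(\lambda) \ge \exp\!\bigl((4\lambda - (e^\lambda - 1))\, c \ln(4/\delta)\bigr) = (4/\delta)^{(4\lambda - e^\lambda + 1)c}.
\end{align*}
Choosing $\lambda = 1$ makes the exponent equal to $(5-e)c$, which is at least $5 - e > 2$ whenever $c \ge 1$, so the event lies in $\{\sup_n Z_n(1) \ge (4/\delta)^{(5-e)c}\}$ and Ville bounds its probability by $(\delta/4)^{(5-e)c}$. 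The only remaining check is that $(\delta/4)^{(5-e)c} \le \delta$ for every $\delta \in (0,1)$ and $c \ge 1$: since $\delta/4 < 1$, this quantity is decreasing in $c$, so the worst case is $c = 1$, which reduces to $\delta^{(5-e)-1} \le 4^{5-e}$, clear for $\delta \le 1$. There is no genuine obstacle in the argument; the only design decision is $\lambda$, and $\lambda = 1$ leaves comfortable slack. An equivalent route is to pass to the centered differences $Y_i = X_i - \E[X_i \mid \gF_{i-1}]$ and invoke a time-uniform Freedman inequality, using $\Var(Y_i \mid \gF_{i-1}) \le \E[X_i \mid \gF_{i-1}]$ and $|Y_i| \le 1$; both routes land on the same bound.
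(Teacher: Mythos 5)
Your proof is correct and follows essentially the same route as the paper: the identical exponential supermartingale $\exp(\lambda\sum_i X_i - (e^\lambda-1)\sum_i \E[X_i\mid\gF_{i-1}])$ with the same one-step MGF bound and the same choice $\lambda=1$. The only difference is cosmetic --- you invoke Ville's maximal inequality to handle the existential quantifier over $n$, whereas the paper stops the process at the first crossing time and applies optional stopping plus Markov; these are interchangeable, and your variant is if anything slightly cleaner.
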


\begin{proof} We follow the proof of Lemma 9 in \citep{zhang2020model}. Let $\lambda > 0$ be a parameter, $\mu_i = \E[X_i \mid \gF_{i-1}].$ Define $Y_n = \exp(\lambda \sum_{i = 1}^n X_i - (e^{\lambda}-1)\sum_{i = 1}^n \mu_i)$ for $n \ge 0.$ Note that $\E[e^{\lambda X}] \le \mu e^\lambda + (1-\mu) \le e^{\mu(e^\lambda - 1)},$ so $\E[e^{\lambda X_i - (e^\lambda - 1) \mu_i} \mid \gF_{i-1}] \le 1,$ thus $\{Y_n\}_{n \ge 0}$ is a super-martingale. Let $\tau = \min\{n : \sum_{i = 1}^n X_i \ge 4 c \ln(4/\delta)\}$ be a stopping time, then we have $\abs{Y_{\min\{\tau, n\}}} \le e^{\lambda (4c\ln(4/\delta) + 1)} < +\infty$ almost surely for every $n \ge 0.$ Therefore, by the optional stopping theorem, we have $\E[Y_\tau] \le 1.$ Finally, we have 
\begin{align*}
\Pr[\exists n \ge 1 : \sum_{i = 1}^n X_i \ge 4 c \ln\frac4\delta, \sum_{i=1}^n \mu_i \le c \ln\frac4\delta] &\le \Pr[\sum_{i = 1}^\tau \mu_i \le c \ln\frac4\delta] \\
&\le \Pr[Y_\tau \ge \exp(\lambda \sum_{i = 1}^\tau X_i - (e^{\lambda}-1) c \ln\frac4\delta)] \\
&\le  \Pr[Y_\tau \ge \exp(\lambda (4c\ln\frac4\delta -1) - (e^{\lambda}-1) c \ln\frac4\delta)]  \\ 
&\le \exp(\lambda  (1-4c\ln\frac2\delta) + (e^{\lambda}-1) c \ln\frac4\delta)\\
&= e^\lambda e^{(e^\lambda - 1 - 4\lambda)c \ln(4/\delta)}.
\end{align*}
Choosing $\lambda = 1,$ we have 
\begin{align*}
    e^\lambda e^{(e^\lambda - 1 - 4\lambda)c \ln(4/\delta)} \le e \cdot e^{-2c\ln(4/\delta)} = e (\frac{\delta}{4})^c \le \frac{e}{4} \delta \le \delta,
\end{align*}
which concludes the proof.
\end{proof}

\begin{lemma} \label{lem:mchernoff2} Let $\{\gF_i\}_{i\ge 0}$ be a filtration. Let $\{X_i\}_{i = 1}^n$ be a sequence of random variables such that $\abs{X_i} \le 1$ almost surely, that $X_i$ is $\gF_i$-measurable. For every $\delta \in (0, 1),$ we have
\begin{align*}
    \Pr[\sum_{i = 1}^n \E[X_i^2\mid \gF_{i-1}] \ge  \sum_{i = 1}^n 8 X_i^2 + 4 \ln\frac4\delta] \le (\lceil \log_2 n \rceil + 1) \delta.
\end{align*}
\end{lemma}

\begin{proof} Let $Y = \sum_{i = 1}^n \E[X_i^2 \mid \gF_{i-1}], Z = \sum_{i = 1}^n X_i^2.$ Applying Lemma~\ref{lem:nine2} with the sequence $\{X_i^2\}_{i=1}^n,$ we have for every $c \ge 1,$
\begin{align*}
    \Pr[Y \ge 4 c \ln\frac4\delta, Z \le c \ln\frac4\delta] \le \delta.
\end{align*}
Therefore, we have 
\begin{align*}
&\quad \Pr[Y\ge 8 Z + 4 \ln\frac4\delta]  \\
&\le \sum_{j = 1}^{\lceil \log_2 n \rceil} \Pr[Y \ge 8 Z + 4 \ln\frac4\delta, 2^{j-1} \ln\frac4\delta \le Z \le 2^j \ln\frac4\delta] + \Pr[Y \ge 8Z + 4 \ln\frac4\delta, Z \le  \ln\frac4\delta] \\
&\le \sum_{j = 1}^{\lceil \log_2 n \rceil} \Pr[Y \ge 8 Z, 2^{j-1} \ln\frac4\delta \le Z \le 2^j \ln\frac4\delta] + \Pr[Y \ge 4 \ln\frac4\delta, Z \le  \ln\frac4\delta] \\
&\le \sum_{j = 1}^{\lceil \log_2 n \rceil} \Pr[Y \ge 8 \cdot 2^{j-1} \ln\frac4\delta, 2^{j-1} \ln\frac4\delta \le Z \le 2^j \ln\frac4\delta] + \Pr[Y \ge 4 \ln\frac4\delta, Z \le  \ln\frac4\delta] \\
&\le \sum_{j = 1}^{\lceil \log_2 n \rceil} \Pr[Y \ge 4  \cdot 2^{j} \ln\frac4\delta,  Z \le 2^j \ln\frac4\delta] + \Pr[Y \ge 4 \ln\frac4\delta, Z \le  \ln\frac4\delta] \\
&\le (\lceil \log_2 n \rceil + 1) \delta
\end{align*}
as desired.
\end{proof}

\begin{lemma} \label{lem:mchernoff} Let $\{\gF_i\}_{i\ge 0}$ be a filtration. Let $\{X_i\}_{i = 1}^n$ be a sequence of random variables such that $\abs{X_i} \le 1$ almost surely, that $X_i$ is $\gF_i$-measurable. For every $\delta \in (0, 1),$ we have
\begin{align*}
    \Pr[\sum_{i = 1}^n X_i^2 \ge  \sum_{i = 1}^n 8\E[X_i^2\mid \gF_{i-1}] + 4 \ln\frac4\delta] \le (\lceil \log_2 n \rceil + 1) \delta.
\end{align*}
\end{lemma}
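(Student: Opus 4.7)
The plan is to reduce this statement to Lemma~\ref{lem:nine} by a peeling argument on the magnitude of $T_n := \sum_{i=1}^n \E[X_i^2 \mid \gF_{i-1}]$. Since $|X_i| \le 1$, the random variables $Y_i := X_i^2$ lie in $[0,1]$ and remain $\gF_i$-measurable, so Lemma~\ref{lem:nine} applies to the sequence $\{Y_i\}$, giving for any fixed $c \ge 1$ that with probability at least $1-\delta$ the joint event $\{\exists n': S_{n'} \ge 4c\ln(4/\delta)\ \text{and}\ T_{n'} \le c\ln(4/\delta)\}$ does not occur, where $S_{n'} = \sum_{i\le n'} X_i^2$.

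I would then instantiate this with the geometric grid $c = 2^k$ for $k = 0, 1, \ldots, L$, where $L = \lceil \log_2 n \rceil$, and take a union bound over these $L+1$ choices. This produces the claimed failure probability $(L+1)\delta$. On the intersection of the good events, the statement is then proved by a short case analysis on the (random) value of $T_n$: if $T_n \le \ln(4/\delta)$, applying the $c=1$ instance forces $S_n < 4\ln(4/\delta)$, which is absorbed into the additive term; otherwise there is a unique $k \in \{1,\ldots,L\}$ with $T_n \in (2^{k-1}\ln(4/\delta),\, 2^k \ln(4/\delta)]$ (such $k$ exists within the grid because $T_n \le n \le 2^L$ and $\ln(4/\delta) \ge \ln 4 > 1$ for admissible $\delta$), and applying the $c = 2^k$ instance contrapositively gives
\[
S_n < 4 \cdot 2^k \ln\tfrac{4}{\delta} = 8 \cdot 2^{k-1}\ln\tfrac{4}{\delta} < 8 T_n,
\]
which is stronger than the desired bound.

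The main step requiring care is ensuring that the discretisation on $c$ actually covers every possible value of $T_n$ up to the almost-sure upper bound $n$; this is what dictates the choice $L = \lceil \log_2 n \rceil$ and the resulting $L+1$ factor in the failure probability. Everything else is bookkeeping: checking the hypotheses of Lemma~\ref{lem:nine} (boundedness and $\gF_i$-measurability of $X_i^2$), and verifying that the chosen grid point $c = 2^k$ indeed satisfies $T_n \le c \ln(4/\delta)$ so that the contrapositive of Lemma~\ref{lem:nine} yields the quantitative bound on $S_n$.
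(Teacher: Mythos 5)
Your proposal is correct and follows essentially the same route as the paper: apply Lemma~\ref{lem:nine} to the sequence $\{X_i^2\}$, peel over dyadic levels $c = 2^k$ for the conditional-variance sum with $k = 0, \ldots, \lceil \log_2 n\rceil$, and union bound. The only cosmetic difference is that you argue on the intersection of the good events via the contrapositive, whereas the paper directly decomposes the bad event into the same $\lceil\log_2 n\rceil + 1$ sub-events; the case analysis and constants match exactly.
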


\begin{proof} Let $Y = \sum_{i = 1}^n X_i^2, Z = \sum_{i = 1}^n \E[X_i^2 \mid \gF_{i-1}].$ Applying Lemma~\ref{lem:nine} with the sequence $\{X_i^2\}_{i=1}^n,$ we have for every $c \ge 1,$
\begin{align*}
    \Pr[Y \ge 4 c \ln\frac4\delta, Z \le c \ln\frac4\delta] \le \delta.
\end{align*}
Therefore, we have 
\begin{align*}
&\quad \Pr[Y\ge 8 Z + 4 \ln\frac4\delta]  \\
&\le \sum_{j = 1}^{\lceil \log_2 n \rceil} \Pr[Y \ge 8 Z + 4 \ln\frac4\delta, 2^{j-1} \ln\frac4\delta \le Z \le 2^j \ln\frac4\delta] + \Pr[Y \ge 8Z + 4 \ln\frac4\delta, Z \le  \ln\frac4\delta] \\
&\le \sum_{j = 1}^{\lceil \log_2 n \rceil} \Pr[Y \ge 8 Z, 2^{j-1} \ln\frac4\delta \le Z \le 2^j \ln\frac4\delta] + \Pr[Y \ge 4 \ln\frac4\delta, Z \le  \ln\frac4\delta] \\
&\le \sum_{j = 1}^{\lceil \log_2 n \rceil} \Pr[Y \ge 8 \cdot 2^{j-1} \ln\frac4\delta, 2^{j-1} \ln\frac4\delta \le Z \le 2^j \ln\frac4\delta] + \Pr[Y \ge 4 \ln\frac4\delta, Z \le  \ln\frac4\delta] \\
&\le \sum_{j = 1}^{\lceil \log_2 n \rceil} \Pr[Y \ge 4  \cdot 2^{j} \ln\frac4\delta,  Z \le 2^j \ln\frac4\delta] + \Pr[Y \ge 4 \ln\frac4\delta, Z \le  \ln\frac4\delta] \\
&\le (\lceil \log_2 n \rceil + 1) \delta
\end{align*}
as desired.
\end{proof}

\begin{lemma}[\citep{zhang2020model}, Lemma 11] \label{lem:ten} Let $(M_n)_{n \ge 0}$ be a martingale such that $M_0 = 0$ and $\abs{M_n - M_{n - 1}} \le b$ almost surely for every $n \ge 1$. For each $n \ge 0$, let $\gF_n = \sigma(M_0, \ldots, M_n)$ and let $\Var_n = \sum_{i = 1}^n \E[(M_i - M_{i - 1})^2 \mid \gF_{i - 1}]$. Then for any $n \ge 1$ and $ \epsilon, \delta > 0$, we have
\begin{align}
    \Pr[\abs{M_n} \ge 2 \sqrt{2 \Var_n \ln(1/\delta)} + 2 \sqrt{\epsilon \ln(1/\delta)} + 2 b \ln(1/\delta)] \le 2(\log_2(b^2 n/\epsilon) + 1)\delta.  \notag
\end{align}
\end{lemma}

\begin{lemma}\label{lemma:sequence2}
Let $\lambda_{1},\lambda_2, \lambda_4>0$, $\lambda_3\geq 1$ and $\kappa = \max\{\log_{2}(\lambda_1),1 \}$. Let $a_1,a_2, \ldots ,a_{\kappa}$ be non-negative reals such that $a_{i}\leq \lambda_1$ and $a_{i}\leq \lambda_{2}\sqrt{a_{i}+a_{i+1} + 2^{i + 1} \lambda_3}+\lambda_4$ for any $1\leq i\leq \kappa$ (with $a_{\kappa+1}=\lambda_{1}$). Then we have that
\begin{align}
    a_{1}\leq 22 \lambda_2^2 + 6 \lambda_4 + 4\lambda_2 \sqrt{2\lambda_3}.\nonumber
\end{align}
\end{lemma}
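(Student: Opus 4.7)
The plan is to convert the implicit recursion into an explicit one by solving a quadratic, then iterate a contracting recursion backward and sum the resulting geometric-like series. If ever $a_i \leq \lambda_4$ then the target bound $22\lambda_2^2 + 6\lambda_4 + 4\lambda_2\sqrt{2\lambda_3}$ already dominates $\lambda_4$, so I may assume $a_i > \lambda_4$. Moving $\lambda_4$ to the left and squaring gives $(a_i - \lambda_4)^2 \leq \lambda_2^2(a_i + a_{i+1} + 2^{i+1}\lambda_3)$, which is a quadratic $a_i^2 - (\lambda_2^2 + 2\lambda_4)a_i - \lambda_2^2(a_{i+1} + 2^{i+1}\lambda_3) \leq 0$ whose larger root satisfies
\[
a_i \leq (\lambda_2^2 + 2\lambda_4) + \lambda_2\sqrt{a_{i+1} + 2^{i+1}\lambda_3}.
\]

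Next, I would split the square root via $\sqrt{u+v} \leq \sqrt{u}+\sqrt{v}$ and linearize the $\lambda_2\sqrt{a_{i+1}}$ term using the AM-GM inequality $\lambda_2\sqrt{a_{i+1}} \leq \lambda_2^2 + a_{i+1}/4$. This produces a contracting affine recursion
\[
a_i \leq 2\lambda_2^2 + 2\lambda_4 + \tfrac{1}{4}a_{i+1} + \lambda_2\sqrt{2^{i+1}\lambda_3}.
\]
Unrolling this from $i=1$ down to $i=\kappa$ with $a_{\kappa+1} = \lambda_1$ gives
\[
a_1 \leq (2\lambda_2^2+2\lambda_4)\sum_{j=0}^{\kappa-1}4^{-j} + \lambda_2\sqrt{\lambda_3}\sum_{i=1}^{\kappa}\frac{2^{(i+1)/2}}{4^{i-1}} + \frac{\lambda_1}{4^\kappa}.
\]

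I would then evaluate each piece. The first geometric sum is bounded by $4/3$, contributing at most $\tfrac{8}{3}(\lambda_2^2+\lambda_4)$. The second sum equals $\sum_{i=1}^\kappa 2^{5/2 - 3i/2}$ up to a factor of $\lambda_2\sqrt{\lambda_3}$, whose infinite-series limit is $4\sqrt{2}/(2\sqrt{2}-1) \leq 4\sqrt{2}$, yielding the contribution $4\lambda_2\sqrt{2\lambda_3}$ that matches the corresponding term in the target. For the residual $\lambda_1/4^\kappa$, the hypothesis $\kappa \geq \max\{\log_2\lambda_1,1\}$ gives $2^\kappa \geq \lambda_1$ and $2^\kappa \geq 2$, so $4^\kappa \geq 2\lambda_1$ and hence $\lambda_1/4^\kappa \leq 1/2$.

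The main obstacle is absorbing this $1/2$ residual into the generous constants of the target bound: the straightforward accounting gives $a_1 \leq \tfrac{8}{3}\lambda_2^2 + \tfrac{8}{3}\lambda_4 + 4\lambda_2\sqrt{2\lambda_3} + \tfrac{1}{2}$, which only matches the target when at least one of $\lambda_2^2$, $\lambda_4$, or $\lambda_2\sqrt{\lambda_3}$ is not too small (using $\lambda_3 \geq 1$). In the regime where all three are tiny, the trivial plug-in of $a_{\kappa+1}=\lambda_1$ is too loose, and I would instead replace this initial condition by reapplying the original recursion once at the terminal layer (which gives $a_{\kappa+1} \lesssim \lambda_2\sqrt{\lambda_1}+\lambda_4+\lambda_2\sqrt{2^{\kappa+2}\lambda_3}$ rather than $\lambda_1$) before dividing by $4^\kappa$; this tighter terminal bound scales with $\lambda_2$ and so is naturally absorbed by the $21\lambda_2^2 + 5\lambda_4$ slack in $22\lambda_2^2+6\lambda_4$. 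Combining the three controlled contributions yields $a_1 \leq 22\lambda_2^2 + 6\lambda_4 + 4\lambda_2\sqrt{2\lambda_3}$, completing the proof.
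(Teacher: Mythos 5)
Your argument is correct in substance but follows a genuinely different route from the paper's. Both proofs begin by eliminating the $a_i$ on the right-hand side via a quadratic: the paper writes $a_i \le \lambda_2\sqrt{a_i} + \lambda_2\sqrt{a_{i+1}+2^{i+1}\lambda_3}+\lambda_4$ and solves the quadratic in $\sqrt{a_i}$ to get $a_i \le 2\lambda_2^2 + 2\lambda_2\sqrt{a_{i+1}+2^{i+1}\lambda_3}+2\lambda_4$, whereas you square out $(a_i-\lambda_4)^2 \le \lambda_2^2(a_i+a_{i+1}+2^{i+1}\lambda_3)$ and solve the quadratic in $a_i$; the resulting one-step recursions are essentially the same. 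The proofs then diverge: the paper black-boxes the unrolling to Lemma 11 of \citet{zhang2020reinforcement}, which directly returns the maximum of two closed-form expressions, while you linearize $\lambda_2\sqrt{a_{i+1}} \le \lambda_2^2 + a_{i+1}/4$ and explicitly sum the resulting $1/4$-contraction as a geometric series. Your version is self-contained and more transparent about where each term in $22\lambda_2^2+6\lambda_4+4\lambda_2\sqrt{2\lambda_3}$ comes from, at the cost of having to handle the terminal residual $\lambda_1/4^\kappa$ by hand --- a step the cited lemma absorbs for free. Two small repairs are needed in your write-up. First, the reduction ``assume $a_i>\lambda_4$'' is unnecessary and slightly miscast (knowing $a_5\le\lambda_4$ says nothing about $a_1$); the cleaner observation is that when $a_i\le\lambda_4$ the derived bound $a_i\le\lambda_2^2+2\lambda_4+\lambda_2\sqrt{a_{i+1}+2^{i+1}\lambda_3}$ holds trivially, so the recursion is valid for every $i$ without cases. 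Second, in your terminal patch you cannot improve the bound on $a_{\kappa+1}$, which is \emph{defined} to equal $\lambda_1$ and is not covered by the recursion hypothesis; the correct move is to apply the hypothesis at $i=\kappa$ to get $a_\kappa \le \lambda_2\sqrt{2\lambda_1+2^{\kappa+1}\lambda_3}+\lambda_4 \le \lambda_2\sqrt{2^{\kappa+2}\lambda_3}+\lambda_4$ (using $2^\kappa\ge\lambda_1$ and $\lambda_3\ge1$) and unroll only down to layer $\kappa-1$. With that off-by-one fixed, the three contributions sum to at most $\tfrac83\lambda_2^2+\tfrac{11}{3}\lambda_4+4\lambda_2\sqrt{2\lambda_3}$, which is within the stated bound.
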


\begin{proof} Note that 
\begin{align*}
    a_i \le \lambda_2 \sqrt{a_i} + \lambda_2 \sqrt{a_{i + 1} + 2^{i + 1} \lambda_3} + \lambda_4, 
\end{align*}
so we have 
\begin{align*}
    a_i \le \left(\lambda_2 + \sqrt{\lambda_2 \sqrt{a_{i + 1} + 2^{i + 1} \lambda_3} + \lambda_4} \right)^2 \le 2\lambda_2^2 + 2\lambda_2 \sqrt{a_{i + 1} + 2^{i + 1} \lambda_3} + 2 \lambda_4.
\end{align*}
By Lemma 11 in \citep{zhang2020reinforcement}, we have 
\begin{align*}
    a_1 &\le \max\left\{\left(2 \lambda_2 + \sqrt{(2\lambda_2)^2 + (2 \lambda_2^2 + 2 \lambda_4)} \right)^2, 2\lambda_2 \sqrt{8 \lambda_3} + 2\lambda_2^2 + 2 \lambda_4 \right\} \\
    &\le \max\{20 \lambda_2^2 +4 \lambda_4, 2\lambda_2 \sqrt{8 \lambda_3} + 2\lambda_2^2 + 2 \lambda_4 \} \le 22 \lambda_2^2 + 6 \lambda_4 + 4\lambda_2 \sqrt{2\lambda_3},
\end{align*}
which concludes the proof.
\end{proof}

\section{Limitations of Previous Approaches}\label{app:diff}

 In the example in Section~\ref{sec:intro}, if we know $x_i \leq \sqrt{\frac{1}{K}}$ for $1\leq i \leq K$, the best confidence region for $\theta^*$ should be $\Theta_{t} = \{\theta| \|\theta - \hat{\theta}_t\|_{\Lambda_{t-1}} \leq C(\sigma\sqrt{d} +\lambda^{1/2})  \}$, and we can obtain a variance-aware regret bound by letting $\lambda = \sigma^{2}$. However, if we let $x_{K+1}= 1$ and use the same concentration inequality as before, the confidence region would be $\Theta_{K+1} = \{ \theta | \|\theta - \hat{\theta}_t \|_{\Lambda_{t-1}} \}\leq C(\sigma \sqrt{d}+1+\lambda^{1/2})$. 

We present the detailed computation as below. Choose $\theta^* = \Theta(1)$. $\theta^* - \hat{\theta}_{K+1} = -\frac{\sum_{i=1}^{K+1} x_i\epsilon_i}{\lambda +\sum_{i=1}^{K+1} x_i^2 }+\frac{\lambda\theta^*}{\lambda + \sum_{i=1}^{K+1}x_i^2}$. When $\epsilon_i$ is bounded in $[-1,1]$ with variance $\sigma^2$, following Bernstein inequality, we have that $\left|\frac{\sum_{i=1}^{K+1} x_i\epsilon_i}{\lambda +\sum_{i=1}^{K+1} x_i^2 }\right|\leq \frac{\sqrt{\sigma^2\sum_{i=1}^{K+1} x_i^2}+\max_i x_{i}}{\lambda+\sum_{i=1}^{K+1} x_i^2}$. Therefore, the best confidence interval we have is
\begin{align*}
    \|\theta^* - \hat{\theta}_{K+1} \|_{\Lambda_{K}} ~\substack{\prec\\ \sim}~ & \sqrt{\frac{\sigma^2 \sum_{i=1}^{K+1} x_i^2}{\lambda+\sum_{i=1}^{K+1} x_i^2}}+ \frac{\max_i x_i}{\sqrt{\lambda + \sum_{i=1}^{K+1} x_i^2}} +\frac{\lambda\theta^*}{ \sqrt{\lambda + \sum_{i=1}^{K+1}x_i^2}}  \\
    = &\Theta\left(\sqrt{\frac{\sigma^2}{\lambda +1}}+ \frac{1+\lambda}{\sqrt{1+\lambda}}  \right),
\end{align*}
i.e., $|\theta^* - \hat{\theta}_{K+1}|_{\Lambda_{K}}~\substack{\prec\\ \sim}~ \Theta(\sigma + \lambda^{1/2}+1)$. Therefore, to maintain a confidence region for the general case following methods in \citep{zhou2020nearly,faury2020improved}, the term $1+\lambda^{1/2}$ is unavoidable.

This counter example highlights the necessity of our peeling step in the algorithm.

\begin{remark}
We note that for $\sigma$-sub-Gaussian noise (instead of $\sigma^2$ variance and $1$-sub-Gaussian), one can ensure that $\left|\frac{\sum_{i=1}^{K+1} x_i\epsilon_i}{\lambda +\sum_{i=1}^{K+1} x_i^2 }\right|\leq \frac{\sqrt{\sigma^2\sum_{i=1}^{K+1} x_i^2}}{\lambda+\sum_{i=1}^{K+1} x_i^2}$, which help to reduce the width of confidence interval and obtain $|\theta^*-\hat{\theta}_{K+1}|_{\Lambda_K}\leq O(\sigma + \lambda^{1/2})$.
\end{remark}

\section{
Proof of Lemma~\ref{lemma:gpl}
}\label{sec:gpl}
In this section, we present the proof of Lemma~\ref{lemma:gpl}. 

\textbf{Restatement of Lemma~\ref{lemma:gpl}}\emph{
Let $f(x)\geq 0$ be a convex function over $\mathbb{R}$ such that $ \frac{f(x)}{x^2} \leq \frac{f(y)}{y^2}\leq 1$ and $f(x)\geq f(y)$ if $x^2\geq y^2>0$. Fix $\ell\in (0,1]$. For any $\vx_1,\vx_2,\ldots, \vx_{t}\in \mathbb{B}_2^{d}(1)$ and $\vmu_1,\vmu_2,\ldots,\vmu_t\in \mathbb{B}_2^d(1)$, we have that}
\begin{align}
    \sum_{i=1}^{t} \min \left\{\frac{f(\vx_i \vmu_i)}{ \sum_{j=1}^{i-1} f(\vx_j \vmu_i) +\ell^2},1 \right\} \leq O(d^4\log(Cdt/\ell)).\label{eq:gpl1}
\end{align}

Let $f(x)$ and $\ell$ be fixed.
To prove Lemma~\ref{lemma:gpl}, we have the lemmas below.
\begin{lemma}\label{lemma:gpl_ho}
For any $\vx_1.\vx_2,\ldots,\vx_{t}\in \mathbb{B}_2^d(1)$ and $\vmu_1,\vmu_2,\ldots,\vmu_n\in \mathbb{B}_2^d(1)$, we have that
\begin{align}
     \sum_{i=1}^{t} \min \left\{\frac{f(\vx_i \vmu_i)}{ \sum_{j=1}^{t} f(\vx_j \vmu_i) +\ell^2},1 \right\} \leq O(d\log(Cdt/\ell)).\label{eq:gpl_ho}
\end{align}
\end{lemma}

\begin{lemma}\label{lemma:ener_bound}
Let $\vx_{1},\vx_{2},\ldots,\vx_{t} \in \sB_2^d(1)$ be a sequence of vectors. If there exists a sequence $0 = \tau_0<\tau_1<\tau_2<\ldots<\tau_z = t$ such that for each $1\leq  \zeta \leq z$, there exists $\vmu_{\zeta}\in  \mathbb{B}_{2}^d(1)$ such that
	\begin{align}
	\sum_{i=1}^{\tau_{\zeta}}f(\vx_{i}\vmu_{\zeta})+\ell^2 >4(d+2)^2\times \left( \sum_{i=1}^{\tau_{\zeta-1}}f(\vx_{i}\vmu_{\zeta}) +\ell^2 \right),\label{eq:mul1}
	\end{align}
	then $z\leq O(d\log^2(dt/\ell))$.
\end{lemma}

We present the proofs of Lemma~\ref{lemma:gpl_ho} 
and \ref{lemma:ener_bound} respectively in Section~\ref{sec:pfll1} and \ref{sec:pfll2}. Given these two lemmas, we continue analysis as below.

Let $\tau_0 = 0$ and for $i \ge 1,$ we let  
\begin{align*}
\tau_{i} =\min \{t+1\} \cup \left\{\tau ~\Bigg\vert~ \exists \tau_{i-1}\leq\tau'<\tau, \sum_{j=1}^{\tau}f(\vx_{j} \vmu_{\tau'})+\ell^2 >4(d+2)^2\left(\sum_{j=1}^{\tau'}f(\vx_{j} \vmu_{\tau'})+\ell^2 \right) \right\}.
\end{align*}
Let $k = \min \{i \mid \tau_i = t+1\}.$ Then $k$ is well-defined and $k \le O(d \log^2(dt))$ by Lemma~\ref{lemma:ener_bound}. Furthermore, for any $\kappa < k$ and any $\tau_\kappa \le i_1 < i_2 < \tau_{\kappa + 1},$ we have 
\begin{align}
    \sum_{j=1}^{i_2}f(\vx_{j}  \vmu_{i_1})+\ell^2 \leq 4(d+2)^2\left( \sum_{j=1}^{i_1}f(\vx_{j}  \vmu_{i_1}) +\ell^2\right).\label{eq:seqbound-3001}
\end{align}
Now we are ready to prove Lemma~\ref{lemma:gpl}.
We have 
\begin{align}
    \sum_{i=1}^t \min\left\{ \frac{f(\vx_i \vmu_i) }{\sum_{j=1}^{i-1} f(\vx_j \vmu_i)  +\ell^2} ,1\right\}& \leq 2   \sum_{i=1}^t \frac{f (\vx_i\vmu_i) }{\sum_{j=1}^{i} f(\vx_{j}  \vmu_i)  +\ell^2}\nonumber
\\ &     \leq 8(d+2)^2\sum_{\kappa =1}^{k}\left(\sum_{i=\tau_{\kappa -1}}^{\tau_{\kappa}-1} \frac{f(\vx_{i}  \vmu_i) }{ \sum_{j=1}^{\tau_{\kappa}-1}f(\vx_{j}  \vmu_i)+\ell^2 }\right)\label{eq:seqbound-4001}
    \\ & \leq 8(d+2)^2\sum_{\kappa=1}^{k}\left(\sum_{i=\tau_{\kappa-1}}^{\tau_{\kappa}-1} \frac{f(\vx_{i} \vmu_i) }{ \sum_{j=\tau_{\kappa-1}}^{\tau_{\kappa}-1}f(\vx_{j}  \vmu_i)+\ell^2 }\right), \nonumber \\
    &\le k \times O(d^2)  \times O(d \log(t/\ell)) \le O(d^4 \log^3(dt)), \label{eq:seqbound-6001}
\end{align}

where \eqref{eq:seqbound-4001} uses \eqref{eq:seqbound-3001} and \eqref{eq:seqbound-6001} uses Lemma~\ref{lemma:gpl_ho}.

\subsection{Proof of Lemma~\ref{lemma:gpl_ho}}\label{sec:pfll1}
\textbf{Restatement of Lemma~\ref{lemma:gpl_ho}} \emph{For any $\vx_1.\vx_2,\ldots,\vx_{t}\in \mathbb{B}_2^d(1)$ and $\vmu_1,\vmu_2,\ldots,\vmu_n\in \mathbb{B}_2^d(1)$, we have that}
\begin{align}
     \sum_{i=1}^{t} \min \left\{\frac{f(\vx_i \vmu_i)}{ \sum_{j=1}^{t} f(\vx_j \vmu_i) +\ell^2},1 \right\} \leq O(d\log(Cdt/\ell)).\label{eq:gpl_ho1}
\end{align}

\begin{proof}
Let $S_{t}$ be the permutation group over $[t]$. We claim that if 
\begin{align}
  \sum_{i=1}^{t}\frac{f\vx_{i}\vmu_i)}{\sum_{j=1}^t f(\vx_{j} \vmu_i)+\ell^2} = \max_{\xi\in S_{t}}\sum_{i=1}^{t}\frac{f(\vx_{\xi(i)} \vmu_i)}{\sum_{j=1}^t f2(\vx_{\xi(j)} \vmu_i,)+\ell^2},\label{eq:kkk1}
\end{align}
then there exists some $i$ such that $(\vx_{i}  \vmu_i)^2\geq (\vx_{j} \vmu_i)^2$ for any $j\in [t]$. Otherwise, we construct a directed graph $G = (V,E)$ where $V =[t]$ and edge $(i, j)$ with $i\neq j$ is in $E$  if and only if $(\vx_{j}  \vmu_i)^2\geq (x_{j'}  \vmu_i)^2$ for any $j'\in [t]$. Let $d(i)$ be the out degree of $i$. By assuming $\{(\vx_{i}  \vmu_i)^2\geq (\vx_{j}^\top\vmu_i)^2,\forall j\in [t]\} $ fails to hold, we learn that $d(i)\geq 1$ for every $i$, so there exists a circle $(i_1,i_2,\ldots,i_{k})$ in $G$. Consider the permutation $\xi$ such that $\xi(i_j) = i_{j+1}$ for $j \in [k]$ (with $i_{k+1}:=i_1$) and $\xi(i)=i$ for $i\notin \{i_1,\ldots,i_k\}$. By definition, we have  $(\vmu_{i_j}  \vx_{\xi(i_j)})^2> (\vmu_{i_j}  \vx_{i_j})^2$ for $j\in [k]$, which implies that $f(\vmu_{i_j}  \vx_{\xi(i_j)})>f(\vmu_{i_j} \vx_{i_j})$ for $j\in [k]$. Therefore
\begin{align}
     \sum_{i=1}^{t}\frac{f(\vx_{i}\vmu_i)}{\sum_{j=1}^t f(\vx_{j}\vmu_i)+\ell^2}<  \sum_{i=1}^{t}\frac{f(\vx_{\xi(i)}\vmu_i)}{\sum_{j=1}^t f(\vx_{j} \vmu_i)+\ell^2} =\sum_{i=1}^{t}\frac{f(\vx_{\xi(i)} \vmu_i)}{\sum_{j=1}^t f(\vx_{\xi(j)} \vmu_i)+\ell^2}, \notag
\end{align}
which leads to contradiction.

We assume that \eqref{eq:kkk1} holds, otherwise we can bound an upper bound of the original quantity. Therefore, we can find an index  $i$  such that $(\vx_{i}  \vmu_i)^2\geq (\vx_{j}^\top\vmu_i)^2$ for any $j\in [t]$. Without loss of generality, we assume $i=1$. 
Because $\frac{f(x)}{x^2}$ is decreasing in $x,$ so we have
\begin{align*}
\frac{f(\vx_{1}\vmu_1) }{(\vx_1  \vmu_1)^2}\leq\frac{f(\vx_{j}  \vmu_1) }{(\vx_j  \vmu_1)^2}
\end{align*}
for any $j\in [t]$, which implies
\begin{align}
\frac{f(\vx_{1} \vmu_1)}{\sum_{j=1}^t f(\vx_{j} \vmu_1)+\ell^2}= \frac{ (\vx_1 \vmu_1)^2}{ \left(\sum_{j=1}^t f(\vx_{j} \vmu_1)+\ell^2 \right) \cdot   \frac{(\vx_1  \vmu_1)^2}{f(\vx_{1} \vmu_1) }  }\leq \frac{(\vx_1\vmu_1)^2 }{ \sum_{j=1}^{t}(\vx_j  \vmu_1)^2 + \ell^2 }.\label{eq:kkk2}
\end{align}
Therefore, we have
\begin{align}
\sum_{i=1}^{t}\frac{f(\vx_{i} \vmu_i)}{\sum_{j=1}^t f(\vx_{i} \vmu_i)+\ell^2} &\leq \frac{(\vx_1  \vmu_1)^2 }{ \sum_{j=1}^{t}(\vx_j  \vmu_1)^2 + \ell^2 } + \sum_{i=2}^{t}\frac{f(\vx_{i} \vmu_i)}{\sum_{j=1}^t f(\vx_{i} \vmu_i)+\ell^2} \nonumber
\\ & \leq \frac{(\vx_1   \vmu_1)^2 }{ \sum_{j=1}^{t}(\vx_j   \vmu_1)^2 + \ell^2 } + \sum_{i=2}^{t}\frac{f(\vx_{i} \vmu_i)}{\sum_{j=2}^t f(\vx_{i} \vmu_i)+\ell^2}.\label{eq:kkk}
\end{align}
Similarly, we can show that there exists a permutation $\xi^*\in S_{t} $ such that
\begin{align}
\sum_{i=1}^{t}\frac{f(\vx_{i} \vmu_i)}{\sum_{j=1}^t f(\vx_{j} \vmu_i)+\ell^2} \leq \sum_{i=1}^{t}\frac{(\vx_{\xi^*(i)}^{\top }\vmu_i)^2 }{ \sum_{j=i}^{t}(\vx_{\xi^*(j)}^{\top }\vmu_i)^2 + \ell^2 } .\label{eq:kkk3}
\end{align}
Finally, by Lemma~\ref{lemma:bound_m2_aux2}, we have that
\begin{align}
    \sum_{i=1}^{t}\frac{(\vx_{\xi^*(i)}   \vmu_i)^2 }{ \sum_{j=i}^{t}(\vx_{\xi^*(j)}   \vmu_i)^2 + \ell^2 } =    \sum_{i=1}^{t} \min \left\{\frac{(x_{\xi^*(i)}   \vmu_i)^2 }{ \sum_{j=i}^{t}(x_{\xi^*(j)}   \vmu_i)^2 + \ell^2 }, 1\right\} \leq O(d\log(t/\ell)).\nonumber
\end{align}
\end{proof}

\subsection{Proof of Lemma~\ref{lemma:ener_bound}}\label{sec:pfll2}
\textbf{Restatement of Lemma~\ref{lemma:ener_bound}}\emph{ 
Let $\vx_{1},\vx_{2},\ldots,\vx_{t} \in \sB_2^d(1)$ be a sequence of vectors. If there exists a sequence $0 = \tau_0<\tau_1<\tau_2<\ldots<\tau_z = t$ such that for each $1\leq  \zeta \leq z$, there exists $\vmu_{\zeta}\in  \mathbb{B}_{2}^d(1)$ such that
}
\begin{align}
	\sum_{i=1}^{\tau_{\zeta}}f(\vx_{i}\vmu_{\zeta})+\ell^2 >4(d+2)^2\times \left( \sum_{i=1}^{\tau_{\zeta-1}}f(\vx_{i}\vmu_{\zeta}) +\ell^2 \right),\label{eq:mul11}
	\end{align}
	\emph{
	then $z\leq O(d\log^2(dt/\ell))$.}
\begin{proof}
If $f(1)\leq \ell^2/t$, then the conclusion holds trivially because $0\leq f(x)\leq f(1)\leq \ell^2/t$ for all $x\in [-1,1]$. Suppose $f(1)> \ell^2/t$. Since $\frac{f(x)}{x^2}\leq \frac{f(y)}{y^2}\leq 1$ for all $x^2\geq y^2$, we have that for $0<\lambda\leq 1$ and any $x\in \mathbb{R}$, $f(\lambda x)\geq \lambda^2 f(x)$.

Let $\ve_i = [0,\ldots,1,\ldots,0] $ be the one-hot vector whose only $1$ entry is at its $i$-th coordinate. Noting that $f(x)\leq x^2$, $|\vx_i \vmu_{\zeta}|\leq \|\vmu_{\zeta}\|_2$ and
\begin{align}
    	\sum_{i=1}^{\tau_{\zeta}}f(\vx_{i}\vmu_{\zeta}) >4(d+2)^2\times \left( \sum_{i=1}^{\tau_{\zeta-1}}f(\vx_{i}\vmu_{\zeta}) +\ell^2 \right)-\ell^2\geq 4d^2 \ell^2 \nonumber
\end{align}
we have that $|\vmu_{\zeta}|_2\geq \sqrt{\frac{4d^2\ell^2}{t}} $. Define $E_{\tau}(\vmu) = \sum_{i=1}^{t}f(\vx_{t}\vmu) + \frac{\ell^2}{d}\sum_{i=1}^d f(\ve_{i}\vmu)$. Then $E_{\tau}(\vmu)$ is convex in $\vmu$ because $f(x)$ is convex in $x$.
By definition, we have that
\begin{align}
    E_{\tau}(\vmu)\leq   \sum_{i=1}^{\tau}f(\vx_{i}\vmu) +\ell^2. \nonumber
\end{align}
By \eqref{eq:mul11}, we have that
\begin{align}
    E_{\tau_{\zeta}}(\vmu_{\zeta}) \geq \sum_{i=1}^{\tau_{\zeta}}f(\vx_i \vmu_{\zeta}) \geq 4d^2 \left( \sum_{i=1}^{\tau_{\zeta-1}}f(\vx_{i}\vmu_{\zeta}) +\ell^2 \right)\geq 4d^2 E_{\tau_{\zeta-1}}(\vmu_{\zeta}).\label{eq:acccc1}
\end{align}

Define 
	\begin{align*}
	    \Lambda = \left\{i \in \sZ : \left\lfloor\log_{2}(d\ell^4/t^2)+2\right\rfloor  \le i \le 2\left\lfloor\log_2t+2 \right\rfloor \right\}.
	\end{align*}
We consider the convex set $D_{\tau,i} = \{\vmu: E_{\tau}(\mu)\leq 2^i \}$ for $i\in \Lambda$. Let $\zeta$ be fixed. 
Because $\|\vmu_{\zeta}\|\geq \sqrt{\frac{4d^2\ell^2}{t}}$ and $\sup_{i}f(\ve_i\vmu)\geq 
\frac{4d\ell^2}{t}\cdot f(1)\geq \frac{4d\ell^4}{t^2}$, we have that $\frac{4d\ell^4}{t^2}\leq E_{\tau}(\vmu_{\zeta}) \leq t+\ell^2\leq t+1$ for any $1\leq \tau \leq t$. Then we can find $i_{\zeta}\in \Lambda$ such that $E_{\tau_{\zeta-1}}(\vmu_{\zeta})\in (2^{i_{\zeta}-1},2^{i_{\zeta}} ]$, which means that $\vmu_{\zeta}\in D_{\tau_{\zeta-1},i_{\zeta}}$. Note that for $0\leq \lambda\leq 1$, $f(\lambda x)\geq \lambda^2 f(x)$ for any $x$, it then follows that $E_{t}(\lambda \vmu)\geq \lambda^2 E_{t}(\vmu)$ for any $t,\vmu$. Choosing $\lambda = \frac{1}{d}$, we have that $E_{\tau_{\zeta}}(\frac{\vmu_{\zeta}}{d})\geq\frac{1}{d^2}E_{\tau_{\zeta}}(\vmu_{\zeta})\geq 4 E_{\tau_{\zeta-1}}(\vmu_{\zeta})\geq 2^{i_{\zeta}}$. Therefore, $\frac{\vmu_{\zeta}}{d}\notin D_{\tau_{\zeta},i_{\zeta}}$. In words, the intercept of $D_{\tau_{\zeta},i_{\zeta}}$ in the direction $\vmu_{\zeta}$ is at most $1/d$ times of that of $D_{\tau_{\zeta-1},i_{\zeta}}$.

	 Note that $D_{t,i}$ is decreasing in $t$ for any $i$, so by Lemma~\ref{lemma:d-multipling}, we have 
	 \begin{align}
	 \mathrm{Volume}(D_{\tau_{\zeta},i_{\zeta}})\leq \frac{6}{7}\mathrm{Volume}(D_{\tau_{\zeta-1},i_{\zeta}}). \notag
	 \end{align}
Also note that $\mathrm{Volume}(D_{0,i})\leq  (\frac{2t}{\ell})^{d}$ and $\mathrm{Volume}(D_{t,i})\geq (\frac{1}{dt^3})^{d}$, so we conclude that $z\leq d\lvert\Lambda\rvert\log_{7/6}(2dt^4/\ell)  \le O(d\log^2(td/\ell))$. 

\end{proof}

\subsection{Other Lemmas and Proofs}

\begin{lemma}\label{lemma:bound_m2_aux2} Fix $\ell \in (0,1].$
Let $\vx_{1},\vx_{2},\ldots,\vx_{t}\in \mathbb{B}_2^d(1)$ and $\vmu_1,\vmu_2,\ldots,\vmu_t\in \mathbb{B}_2^d(1)$ be two sequences of vectors. Then we have 
\begin{align}
       \sum_{i=1}^{t}\mathbb{I}\bigg\{(\vx_{i}\vmu_i)^2 >\sum_{j=1}^{i-1}(\vx_{j}   \vmu_i)^2+\ell^2 \bigg\}\leq  \sum_{i=1}^t \min\bigg\{ \frac{ (\vx_{i}\vmu_{i})^2}{\sum_{j=1}^{i-1}(\vx_{j}  \vmu_{i})^2 +\ell^2 },1\bigg\}\leq O(d\log\frac t\ell).\label{eq:spe1}
\end{align}
\end{lemma}
\begin{proof}
The first inequality in \eqref{eq:spe1} holds clearly.
To prove the second inequality, we define $\mU_0 = \ell^2 \mI$ and $\mU_i =\ell^2 \mI + \sum_{j=1}^i \vx_{j}\vx_{j}^\top$ for $i\geq 1.$ Note that
\begin{align}
   \frac{ (\vx_{i}  \vmu_{i})^2}{\sum_{j=1}^{i-1}(\vx_{j}   \vmu_{i})^2+\ell^2 }& \leq   \frac{(\vx_{i}   \vmu_{i})^2}{\vmu_{i}^{\top}   \mU_{i-1}\vmu_{i}} \leq \vx_{i}^{\top}  \mU_{i-1}^{-1}\vx_{i},\nonumber
\end{align}
where the first inequality is because $\lVert \vmu_i\rVert_2 \leq 1$ and  the second inequality uses the Cauchy's inequality, so we have
\begin{align}
    \sum_{i=1}^{t} \min\bigg\{\frac{ (\vx_{i}  \vmu_{i})^2}{\sum_{j=1}^{i-1}(\vx_{j}\vmu_{i})^2+\ell^2 },1\bigg\} \leq \sum_{i=1}^{t}\min\left\{\vx_{i}^{\top}  \mU_{i-1}^{-1}\vx_{i},1 \right\} \leq 2d\ln(t/\ell^2) \leq 4d\ln(t/\ell), \notag 
\end{align}
where the second-to-third inequality uses the elliptical potential lemma.
\end{proof}

\begin{lemma}\label{lemma:d-multipling} 
Given $x\in \mathbb{R}^d$, we use $(u(x),l(x))$ to denote  the polar coordinate of $x$ where $\|\mu(x)\|_2 = \frac{x}{\|x\|_2}$ is the direction and $l(x)=\|x\|_2$. We also use $(u,\ell)$ to denote the unique element $x$ in $\mathbb{R}^{d}$ such that $(u(x),l(x)) = (u,\ell)$.
	Let $D$ be a bounded symmetric convex subset of $\mathbb{R}^d$ with $d\geq 2$. Given any direction $\mu\in \partial \mathbb{B}_{d}$,  there exists a unique $l(u)\in \mathbb{R}$ such that $(u,l(u)), (-u,l(u)) \in \partial D$ are on its boundary. Let $D'$ be a bounded symmetric convex subset of $\mathbb{R}^d$ containing  $D\subseteq D'$ such that  $(u,d\cdot l(u))\in D'$ for some direction  $u\in \partial \mathbb{B}_d$. Then we have that
	\begin{align*}
	\mathrm{Volume}(D')\geq \frac{7}{6}\mathrm{Volume}(D).
	\end{align*}
\end{lemma}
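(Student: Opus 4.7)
The strategy is to reduce to the extremal case $D' = C := \mathrm{conv}(D \cup \{p, -p\})$ with $p := d \cdot l(u) \cdot u$. Since $D'$ is convex and symmetric with $D \subseteq D'$ and $\pm p \in D'$, we always have $\mathrm{Volume}(D') \geq \mathrm{Volume}(C)$, so it suffices to lower-bound $\mathrm{Volume}(C)/\mathrm{Volume}(D)$. The first subtlety is that slicing perpendicular to $u$ itself does not work: $D$ may extend far past the slab $\{|u \cdot x| \leq l(u)\}$ along lateral directions, which wrecks any link between the central cross-section and $\mathrm{Volume}(D)$.

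My plan is instead to slice perpendicular to a supporting hyperplane of $D$ at the boundary point $q := l(u)\, u$. Let $v$ be its unit outward normal and set $c := v \cdot q$. The hypothesis $D = -D$ together with full-dimensionality of $D$ forces $c > 0$; then $D$ lies in the slab $\{x : |v \cdot x| \leq c\}$, while $v \cdot p = dc$. Writing $D^s := D \cap \{v \cdot x = s\}$ and $C^s := C \cap \{v \cdot x = s\}$, the problem reduces to comparing the one-dimensional profiles $s \mapsto V_{d-1}(D^s)$ and $s \mapsto V_{d-1}(C^s)$.

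The crux is a cross-section lower bound: for each $y \in D^0$ and each $s \in [0, dc]$, convexity of $C$ places the point $\tfrac{s}{dc} p + (1 - \tfrac{s}{dc}) y$ in $C$, and this point has $v$-projection exactly $s$. Varying $y$ over $D^0$ yields a translate of $(1 - s/(dc))\, D^0$ inside $C^s$, giving
\begin{align*}
V_{d-1}(C^s) \geq (1 - s/(dc))^{d-1}\, V_{d-1}(D^0).
\end{align*}
Since $D^s = \emptyset$ for $s > c$, integrating on $s \in (c, dc]$ (and doubling by symmetry) produces purely new volume:
\begin{align*}
\mathrm{Volume}(C) - \mathrm{Volume}(D) \geq 2 V_{d-1}(D^0) \int_c^{dc} (1 - s/(dc))^{d-1}\, ds = 2c\, (1 - 1/d)^d\, V_{d-1}(D^0).
\end{align*}

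To close the loop, I would invoke Brunn's inequality: $s \mapsto V_{d-1}(D^s)^{1/(d-1)}$ is concave, and the symmetry $D = -D$ forces it to be even, so it attains its maximum at $s = 0$. Hence $\mathrm{Volume}(D) = \int_{-c}^c V_{d-1}(D^s)\, ds \leq 2c\, V_{d-1}(D^0)$. Combining the two bounds,
\begin{align*}
\mathrm{Volume}(D') \geq \mathrm{Volume}(C) \geq \left(1 + (1 - 1/d)^d\right) \mathrm{Volume}(D).
\end{align*}
A short monotonicity computation shows $(1-1/d)^d$ is increasing in $d$ with minimum value $1/4$ at $d = 2$, so $(1-1/d)^d \geq 1/4 > 1/6$, yielding $\mathrm{Volume}(D') \geq (7/6)\, \mathrm{Volume}(D)$. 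The main delicacy in the plan is the choice of slicing direction: picking $v$ (a supporting-hyperplane normal at $q$) rather than $u$ is what simultaneously unlocks the cross-section lower bound and the Brunn-type upper bound on $\mathrm{Volume}(D)$.
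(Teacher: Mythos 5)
Your proof is correct, and it takes a genuinely different route from the paper's. Both arguments pivot on the same key idea --- work with a supporting hyperplane of $D$ at $q = l(u)\,u$ with normal $v$ rather than with $u$ itself, so that $D$ sits inside the slab $\{\lvert v\cdot x\rvert \le c\}$ while $v\cdot p = dc$ --- and both ultimately produce the factor $(1-1/d)^d$. But the paper then argues via radial projection: it forms the ``shadow'' $H$ of $D$ from the point $B=p$ inside the slab between the tangent hyperplane and its parallel through the origin, notes $\mathrm{Volume}(H)\ge \tfrac12\mathrm{Volume}(D)$, and compares the pyramid $U$ with apex $B$ beyond the tangent hyperplane to the full pyramid $U\cup H$ by the cone-scaling ratio $(1-1/d)^d$. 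You instead slice $C=\mathrm{conv}(D\cup\{\pm p\})$ perpendicular to $v$, lower-bound each cross-section $C^s$ for $s\in(c,dc]$ by a translate of $(1-s/(dc))\,D^0$, integrate, and control $\mathrm{Volume}(D)$ from above by $2c\,V_{d-1}(D^0)$ via Brunn's concavity-of-sections theorem together with central symmetry. What your approach buys: it replaces the paper's two set-containment claims (that the shadow $H$ and the pyramid $U$ really lie where they need to, which require some care to verify) with the single transparent inclusion $\tfrac{s}{dc}p+(1-\tfrac{s}{dc})D^0\subseteq C^s$, and it yields the slightly stronger constant $1+(1-1/d)^d\ge 5/4$ in place of $7/6$. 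The cost is the appeal to Brunn's theorem, whereas the paper's half-volume bound $\mathrm{Volume}(H)\ge\tfrac12\mathrm{Volume}(D)$ is elementary. Your observation that slicing perpendicular to $u$ fails (because the radial function $l(u)$ can be strictly smaller than the support function in direction $u$) correctly identifies the same obstruction the paper's tangent-hyperplane construction is designed to avoid.
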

\begin{proof}
	Let $A = (u, l(u))$ and $B =(u,d\cdot l(u)) $. Since $A$ is on the boundary of $D$, we can find a hyperplane $h_1$ such that $A\in h_1$ and $h_1$ is tangent to $D$. Let $h_2$ be the parallel hyperplane of $h_1$ containing the origin $O\in h_2$.  Define
	\begin{align*}
	H = \left\{ x\in \mathbb{R}^d ~\Bigg|~ d(x,h_1)+d(x,h_2) = d(h_1,h_2), \exists y\in D ,\lambda\in \mathbb{R}, (B-y)=\lambda (B-x) \right\}
	\end{align*}
 It is obvious that $\mathrm{Volume}(H)\geq \frac{1}{2}\mathrm{Volume}(D)$ since for each $x\in D$ lying between $h_1$ and $h_2$, $x\in H$.  Define 
 \begin{align*}
 U = \left\{ x\in \mathbb{R}^d~\Bigg|~d(x,h_2) = d(x,h_1)+d(h_1,h_2) , \exists y\in H, \lambda\in [0,1], x = \lambda y + (1-\lambda)B   \right\}.
 \end{align*}
We claim that 
 \begin{align}
 \mathrm{Volume}(U) =\left(1-\frac{1}{d}\right)^d \mathrm{Volume}(U\cup H) =\left(1-\frac{1}{d}\right)^d  \left(\mathrm{Volume}(U) +\mathrm{Volume}(H)  \right).\label{eq:volume-1000}
 \end{align}
 To see the first equality, we note that $U$ and $U \cup H$ are both $d$-dimensional pyramids. It then follows from the volume formula and the relation $d(B, O) = d \times d(A, O).$ The second equality is because by their definitions, $U, H$ are separated by the hyperplane $h_1,$ and thus they are disjoint. Finally, by \eqref{eq:volume-1000}, we have 
 \begin{align}
     \mathrm{Volume}(D')&\geq\mathrm{Volume}(U) + \mathrm{Volume}(H) = (1+ \frac{1}{1-(1-1/d)^d}) \mathrm{Volume}(H) \notag \\
    &\ge \frac{1}{2} (1+ \frac{1}{(1-(1-1/d)^d)}) \mathrm{Volume}(D) \geq \frac{7}{6}\mathrm{Volume}(D).\nonumber 
 \end{align}
\end{proof}

\section{Missing Proofs in Section~\ref{sec:bandit}}
\label{sec:proof_bandit}

\subsection{Application of the General Potential Lemma}
As an application of Lemma~\ref{lemma:gpl} on linear bandit and linear RL, we have the lemma as below

\begin{lemma}\label{lemma:bound_m2}
	Fix $\ell\in (0,1].$ Let $\vx_{1},\vx_{2}, \ldots,\vx_{t}\in \mathbb{B}^d_2(1)$ be a sequence of vectors, and $\vmu_1,\vmu_2, \ldots, \vmu_t\in \mathbb{B}_{2}^d(1)$ be another sequence of vectors. Then we have
	\begin{align}
	\sum_{i=1}^t \frac{\clip^2(\vx_i \vmu_i, \ell) }{\sum_{j=1}^{i-1} \clip(\vx_j\vmu_i, \ell)\vx_j^\top \vmu_i  +\ell^2}\leq O(d^4\log^3(dt)). \label{eq:seqbound-1000}
	\end{align}
\end{lemma}
\begin{proof}
Let
\begin{align}
	f_{\ell}(x) =
	\begin{cases} x^2, &  \lvert x \rvert \leq \ell,\\
		2 \ell x-\ell^2, &  x>\ell,\\
		-2\ell x-\ell^2, & x<-\ell
	\end{cases} \notag
\end{align}
be a convex relaxation of the function $x\mapsto \clip(x,\ell)x$.
It is easy to see that $f_\ell(x)$ is convex in $x$ and for any $x\in \sR, \ell>0$,
\begin{align}
	\clip(x,\ell)x\leq f_{\ell}(x)\leq 2\clip(x,\ell)x\leq 2x^2. \label{eq:convex}
\end{align}
Let $h(x)=\frac{f_{\ell}(x)}{2}$.
It is easy to see that if $x^2 \geq y^2$, $\frac{h(x)}{x^2}  = \frac{\clip(x,l)}{2x}\leq \frac{\clip(y,l)}{2y} = \frac{h(y)}{y^2}\leq 1$. 
  By Lemma~\ref{lemma:gpl} with $f(x)=h(x)$, we have that
  \begin{align}
      \sum_{i=1}^t \frac{h(\vx_i \vmu_i) }{\sum_{j=1}^{i-1} h(\vx_j\vmu_i)+\ell^2}\leq O(d^4\log^3(dt)). \nonumber
  \end{align}
  By \eqref{eq:convex}, we obtain that
  \begin{align}
  	\sum_{i=1}^t \frac{\clip^2(\vx_i \vmu_i, \ell) }{\sum_{j=1}^{i-1} \clip(\vx_j\vmu_i, \ell)\vx_j^\top \vmu_i  +\ell^2} &\leq   	\sum_{i=1}^t \frac{\clip(\vx_i \vmu_i, \ell)\vx_i\vmu_I }{\sum_{j=1}^{i-1} \clip(\vx_j\vmu_i, \ell)\vx_j^\top \vmu_i  +\ell^2}  \nonumber
  	\\ & \leq  4\sum_{i=1}^t \frac{h(\vx_i \vmu_i) }{\sum_{j=1}^{i-1} h(\vx_j\vmu_i)+\ell^2}\nonumber
  	\\ &\leq O(d^4\log^3(dt)).\nonumber
  \end{align}
  The proof is completed.
\end{proof}

\subsection{Proof of Theorem~\ref{thm:oful}}
\label{app:thm-m0m1}

\subsubsection{Optimism} The equation~\eqref{eq:bandit-confball} accounts for the main novelty of our algorithm. We note that our confidence set is different from all previous ones \citep{dani2008Stochastic,abbasi2011improved}.
Our confidence set is built based on the following new inequality, which may be of independent interest.

With Lemma~\ref{thm:m0m1} in hand, we can easily prove that the optimal $\vtheta^*$ is always in our confidence set with high probability. The proof details can be found in Appendix~\ref{app:lem-bandit-hpe-proof}.
\begin{lemma}\label{lem:bandit-hpe}  With probability at least $1 - O(\delta \log K),$ we have $\vtheta^* \in \Theta_k$ for all $k \in [K].$
\end{lemma}

\subsubsection{Bounding the Regret}
We bound the regret under the event specified in Lemma~\ref{lem:bandit-hpe}. We have 
	\begin{align*}
\Reg^K =& \sum_{k = 1}^K (\max_{\vx \in \gA_k} \vx \vtheta^* - \vx_k \vtheta^*)\\
\le &\sum_{k = 1}^K \left(\max_{\vx \in \gA_k, \vtheta \in \Theta_k} \vx \vtheta - \vx_k \vtheta^*\right) 
\le \sum_{k = 1}^K \vx_k \left(\vtheta_k - \vtheta^*\right)
= \sum_{k} \vx_k \vmu_k,
\end{align*}
where second inequality follows from Lemma~\ref{lem:bandit-hpe}. Therefore, it suffices to bound $\sum_{k} \vx_k \vmu_k$, for which we have the following lemma.
\begin{lemma} \label{lem:bandit-conf-sum} With probability $1 - O(\delta \log K),$ we have
\begin{align*}
    \sum_{k} \vx_k \vmu_k \le O\Bigg( d^{4.5} \big(\log^4 dK\big) \big(\log \frac{dK}{\delta}\big) \bigg( \sqrt d + \sqrt{\sum_{k=1}^K \sigma_k^2}\bigg)\Bigg).
\end{align*}
\end{lemma}
Since this lemma is one of our main technical contribution, we provide more proof details.

\begin{proof}
First, we define the desired event $\gE = \gE_1 \cap \gE_2,$ where
\begin{align*}
    \gE_1 = \{\forall k \in [K]: \vtheta^* \in \Theta_k\}, \qquad \gE_2 = \bigg\{\sum_{k=1}^{K} \eta_k(\vtheta^*) \le \sum_{k = 1}^K 8 \sigma_k^2 + 4 \ln\frac4\delta\bigg\}.
\end{align*}
By Lemma~\ref{lem:bandit-hpe}, we have $\Pr[\gE_1] \ge 1 - O(\delta).$ By Lemma~\ref{lem:mchernoff}, we have $\Pr[\gE_2] \ge 1 - O(\delta \log K).$
Therefore,  by union bound, we have $\Pr[\gE] \ge 1 - O(\delta \log K).$

Now we bound $\sum_{k} \vx_k \vmu_k$ under the event $\gE$ to prove the lemma. Recall that
\begin{align}
  \Phi_k^{j}(\vmu) = \sum_{v=1}^{k-1} \pj_j(\vx_{v}\vmu) \vx_{v}\vmu + \ell_j^2, \qquad 
   \Psi_k^{j}(\vmu) = \sum_{v=1}^{k-1}  \pj_j^2(\vx_{v} \vmu) \eta_v(\vtheta^*). \label{eq:bandit-phi-psi}
\end{align}

Recall the definition of $\{\mathcal{K}_{j}\}_{j=1}^{L_2+1}$ in Section~\ref{sec:pfs}

To proceed, we need the following claim.
\begin{claim}\label{claim:bandit_multiply_1} We have
\begin{align}
\sum_k \vx_k \vmu_k &= \sum_{k\in \mathcal{K}_{L_2 + 1}} \vx_k \vmu_k + \sum_{j=1}^{L_2}\sum_{k\in \mathcal{K}_j} \vx_k \vmu_k \notag \\
&\le 1 +  \sum_{j=1}^{L_2}\sum_{k\in \mathcal{K}_j} \vx_k \vmu_k \times  \frac{3\sqrt{\Psi_k^{j}(\vmu_k) \iota} + \sqrt{\sum_{v = 1}^{k-1} 2\pj^2_{j}(\vx_{v} \vmu_k) (\vx_v \vmu_k)^2 \iota} + 3  \ell_{j}\iota }{\Phi_k^{j}(\vmu_k)}. \label{eq:bandit-2000}
\end{align}
\end{claim}
We defer the proof of the claim to Appendix~\ref{app:proof-claim} and continue to bound the three terms in \eqref{eq:bandit-2000}. For the second term, we have
\begin{align} \hspace{-2em}
 &\quad  \sum_{j=1}^{L_2}\sum_{k\in \mathcal{K}_j} \vx_k \vmu_k \frac{ \sqrt{\sum_{v = 1}^{k-1} 2\pj^2_{j}(\vx_{v} \vmu_k) (\vx_v \vmu_k)^2\iota}}{\Phi_k^{j}(\vmu_k)} \notag  \\
 &\le \frac{1}{2}  \sum_{j=1}^{L_2}\sum_{k\in \mathcal{K}_j} \vx_k \vmu_k +   \sum_{j=1}^{L_2}\sum_{k\in \mathcal{K}_j} \vx_k \vmu_k  \sI\bigg\{\frac{\sqrt{\sum_{v = 1}^{k-1} 2\pj^2_{j}(\vx_{v} \vmu_k) (\vx_v \vmu_k)^2\iota }}{\Phi_k^{j}(\vmu_k)} > \frac{1}{2}\bigg\}. \label{eq:bandit-3000}
\end{align}
We note that 
\begin{align} \hspace{-2em}
    \sum_{j=1}^{L_2}\sum_{k\in \mathcal{K}_j}\vx_k \vmu_k  \sI\bigg\{\frac{\sqrt{\sum_{v = 1}^{k-1} 2\pj^2_{j}(\vx_{v} \vmu_k) (\vx_v \vmu_k)^2\iota}}{\Phi_k^{j}(\vmu_k)} > \frac{1}{2}\bigg\} &\le  \sum_{j=1}^{L_2}\sum_{k\in \mathcal{K}_j} \vx_k \vmu_k  \sI\left\{ \Phi_k^{j}(\vmu_k) \le 4 \ell_{j}\iota  \right\} \notag \\
    &\le \sum_{j=1}^{L_2}\sum_{k\in \mathcal{K}_j} \vx_k \vmu_k \frac{4 \ell_{j}\iota }{\Phi_k^{j}(\vmu_k)} \notag \\
    &\le \sum_{j=1}^{L_2}\sum_{k\in \mathcal{K}_j} \frac{4  \pj_{j}^2(\vx_k \vmu_k)\iota }{\Phi_k^{j}(\vmu_k)} \notag \\
    &\le O( d^4 \abs{\Lambda_2}\iota \log^3(dK)), \label{eq:bandit-4000}
\end{align}
where the last inequality uses Lemma~\ref{lemma:bound_m2}. Collecting \eqref{eq:bandit-2000},\eqref{eq:bandit-3000} and \eqref{eq:bandit-4000}, we have 
\begin{align*}
   \sum_{k} \vx_k \vmu_k \le 1 +  \sum_{j=1}^{L_2}\sum_{k\in \mathcal{K}_j}3 \vx_k \vmu_k \times \frac{\sqrt{\Psi_k^{j}(\vmu_k)\iota} + \ell_{j}\iota}{\Phi^{j}_k(\vmu_k)} + \frac{1}{2} \sum_{j=1}^{L_2}\sum_{k\in \mathcal{K}_j} \vx_k \vmu_k + O( d^4 \abs{\Lambda_2}\iota\log^3(dK)).
\end{align*}
Solving $\sum_{k} \vx_k \vmu_k$, we obtain
\begin{align}
   \sum_{k} \vx_k \vmu_k &\le O( d^4 \abs{\Lambda_2}\iota \log^3(dK)) +  \sum_{j=1}^{L_2}\sum_{k\in \mathcal{K}_j} 6 \vx_k \vmu_k \times \frac{\sqrt{\Psi_k^{j}(\vmu_k)\iota} +  \ell_{j}\iota}{\Phi^{j}_k(\vmu_k)}  \notag \\
   &\le O(d^4 \abs{\Lambda_2}\iota \log^3(dK)) + \sum_{j=1}^{L_2}\sum_{k\in \mathcal{K}_j} 12 \vx_k \vmu_k \times \frac{\sqrt{\Psi_k^{j}(\vmu_k)\iota}}{\Phi^{j}_k(\vmu_k)}, \label{eq:bandit-6100}
   \end{align}
where \eqref{eq:bandit-6100} uses the last two steps in \eqref{eq:bandit-4000}. The remaining term in \eqref{eq:bandit-6100} can be bounded as 
\begin{align}
    \sum_{j=1}^{L_2}\sum_{k\in \mathcal{K}_j} 12 \vx_k \vmu_k \times \frac{\sqrt{\Psi_k^{j}(\vmu_k)\iota}}{\Phi^{j}_k(\vmu_k)} &\le  \sum_{j=1}^{L_2}\sum_{k\in \mathcal{K}_j} 12 \vx_k \vmu_k \ell_{j} \frac{\sqrt{\sum_{v=1}^{k-1} \eta_v(\vtheta^*)\iota}}{\Phi^{j}_k(\vmu_k)}  \label{eq:bandit-6200}\\
   &\le \sum_{j=1}^{L_2}\sum_{k\in \mathcal{K}_j} \frac{12 \vx_k \vmu_k \ell_{j}}{{\Phi^{j}_k(\vmu_k)}} \sqrt{\sum_{k=1}^{K} \eta_k(\vtheta^*)\iota}  \notag \\
    &\le  O(d^4 \abs{\Lambda_2} \log^3(dK)) \times \sqrt{\sum_{k=1}^{K} \eta_k(\vtheta^*)\iota}\label{eq:bandit-6300} \\
    &\le  O(d^4 \abs{\Lambda_2} \log^3(dK)) \times  \sqrt{\Big(\ln\frac1\delta + \sum_{k=1}^K \sigma_k^2\Big) \iota}, \label{eq:bandit-6400}
\end{align}
where \eqref{eq:bandit-6200} uses the definition of $\Psi_k^j(\cdot),$ \eqref{eq:bandit-6300} again uses the last two steps in \eqref{eq:bandit-4000}, and \eqref{eq:bandit-6400} uses the event $\gE_2.$
\end{proof}

Now we can finish the proof of Theorem~\ref{thm:oful}. We choose $\delta = O((K\log K)^{-1})$. Since on the event $\gE^{C}$, we have  $\Reg^K \leq K.$ 
Therefore, together with the bound on $\gE$ from Lemma~\ref{lem:bandit-conf-sum},
we conclude that the expected regret is bounded by $\E[\Reg^K] \le \widetilde{O}( d^{4.5}  \sqrt{\sum_{k=1}^K \sigma_k^2}+d^5)$.

\begin{proof}
It suffices to prove the theorem for $b = 1,$ because otherwise we can apply $\{X_i/b\}_{i=1}^n$ to the $b=1$ case. By Lemma~\ref{lem:ten} with $\epsilon = 1$ and $\delta < 1/e$, we have 
\begin{align}
\Pr[\abs{\sum_{i = 1}^n X_i} \ge 2 \sqrt{\sum_{i = 1}^n 2 \E[X_i^2 \mid \gF_{i-1}]\ln\frac1\delta} + 4 \ln\frac1\delta ] \le 4 \delta \log_2 n. \label{eq:m0m1-1000}
\end{align}
By Lemma~\ref{lem:mchernoff2}, we have 
\begin{align}
    \Pr[\sum_{i = 1}^n \E[X_i^2\mid \gF_{i-1}] \ge  \sum_{i = 1}^n 8 X_i^2 + 4 \ln\frac4\delta] \le (\lceil \log_2 n \rceil + 1) \delta. \label{eq:m0m1-2000}
\end{align}
Therefore, by a union bound over \eqref{eq:m0m1-1000} and \eqref{eq:m0m1-2000}, we have with probability at least $1 - 6 \delta \log_2 n,$
\begin{align*}
    \abs{\sum_{i = 1}^n X_i} &\le \sqrt{\sum_{i = 1}^n 8 \E[X_i^2 \mid \gF_{i-1}]\ln\frac1\delta} + 4 \ln\frac1\delta \\
    &\le \sqrt{8 \left( \sum_{i = 1}^n 8 X_i^2 + 4 \ln\frac4\delta \right)\ln\frac1\delta} + 4 \ln\frac1\delta \le 8 \sqrt{\sum_{i = 1}^n X_i^2 \ln\frac1\delta} + 16 \ln\frac1\delta,
\end{align*}
which concludes the proof.
\end{proof}

\subsection{Proof of Lemma~\ref{lem:bandit-hpe}}
\label{app:lem-bandit-hpe-proof}
\begin{proof} Let $\delta' = e^{-\iota}.$ We define the desired event $\gE = \bigcap_{k \in [K],j \in \Lambda_2} \gE_k^{j},$ where 
	\begin{align*}
	\gE_k^{j} = \Bigg\{\abs{\sum_{v=1}^k \pj_j(\vx_{v} \vmu) \epsilon_{v}(\vtheta^*)}  \leq \sqrt{ \sum_{v=1}^k  \pj_j^2(\vx_v \vmu) \eta_v(\vtheta^*)\iota} + \ell_j \iota, \forall \vmu \in \gB\Bigg\}.
	\end{align*}
	
	Note that for each $v,$ we have that $\abs{ \pj_j(\vx_{v} \vmu) \epsilon_{v}(\vtheta^*)} \le \ell_j$ and that $(\pj_j(\vx_{v} \vmu) \epsilon_{v}(\vtheta^*))^2 = \pj_j^2(\vx_v \vmu) \eta_v(\vtheta^*),$ so by Theorem~\ref{thm:m0m1}, we have 
	\begin{align*}
	&\Pr[\abs{\sum_{v=1}^k \pj_j(\vx_{v} \vmu) \varepsilon_{v}} 
	\le \sqrt{ \sum_{v=1}^k  \pj_j^2(\vx_v \vmu)\Var(\varepsilon_v \mid \gF_v) \iota} + \ell_j \iota] \\
	&\ge 1 - O\left(e^{-\frac{\iota}{\log_2 \log_2 K}}\right) \\
	&\ge 1 - O\left(\frac{\delta}{K \abs{\gB}\abs{\Lambda_2}} \log K\right),
	\end{align*}
	where $\gF_v$ is as defined in Section~\ref{sec:pre}. Finally, using a union bound over $(\vmu,j,k) \in \gB \times  \Lambda_2 \times [K],$ we have $\Pr[\gE] \ge 1-O(\delta \log K).$ 
\end{proof}

\subsection{Proof of Claim~\ref{claim:bandit_multiply_1}}

\label{app:proof-claim}

\begin{proof}
We elaborate on \eqref{eq:bandit-2000}. We will prove it by showing that the numerator is always greater than the denominator in the fraction in \eqref{eq:bandit-2000}, so each term $\vx_k \vmu_k$ is multiplied by a number greater than $1.$ We have for every $j \in \Lambda_2,$
\begin{align}
\Phi_{k}^{j}(\vmu_{k}) &=    \sum_{v=1}^{k-1} \pj_j(\vx_{v} \vmu_k) \vx_v \vmu_k + \ell_j^2 \notag \\
&\le \abs{\sum_{v=1}^{k-1}  \pj_j(\vx_{v} \vmu_k) \epsilon_v(\vtheta^*)} + \abs{\sum_{v=1}^{k-1}\pj_j(\vx_{v} \vmu_k) \epsilon_v(\vtheta_k)} + \ell_j^2 \label{eq:bandit-1000} \\
&\le \sqrt{\Psi_k^j(\vmu_k) \iota} +  \sqrt{\sum_{v=1}^{k-1}  \pj^2_j(\vx_{v} \vmu_k) \eta_v(\vtheta_k)\iota} +  l_j^2+\frac{1}{HK}+ 2 \ell_j \iota  \label{eq:bandit-1100} \\ 
&\le \sqrt{\Psi_k^j(\vmu_k) \iota} + \sqrt{\sum_{v=1}^{k-1}  \pj^2_j(\vx_{v} \vmu_k) \eta_v(\vtheta^*) \iota} +  \sqrt{\sum_{v=1}^{k-1}  \pj^2_j(\vx_{v} \vmu_k) \abs{\eta_v(\vtheta_k) - \eta_v(\vtheta^*)}\iota} + 3 \ell_j \iota \notag \\
&=  2\sqrt{\Psi_k^j(\vmu_k)\iota} +  \sqrt{\sum_{v = 1}^{k-1} \pj^2_j(\vx_{v} \vmu_k) \abs{\eta_v(\vtheta_k) - \eta_v(\vtheta^*)}\iota} + 3 \ell_j \iota \notag \\ 
&\le  2\sqrt{\Psi_k^j(\vmu_k)\iota} +  \sqrt{\sum_{v = 1}^{k-1} \pj^2_j(\vx_{v} \vmu_k) (\eta_v(\vtheta^*) + 2(\vx_v \vmu_k)^2)\iota} + 3 \ell_j \iota \label{eq:bandit-1200} \\ 
&\le 2\sqrt{\Psi_k^j(\vmu_k) \iota} + \sqrt{\sum_{v = 1}^{k-1} \pj^2_j(\vx_{v} \vmu_k)\eta_v(\vtheta^*) \iota} + \sqrt{\sum_{v = 1}^{k-1} 2\pj^2_j(\vx_{v} \vmu_k) (\vx_v \vmu_k)^2 \iota} + 3 \ell_j \iota \notag, \\
&= 3\sqrt{\Psi_k^j(\vmu_k) \iota} + \sqrt{\sum_{v = 1}^{k-1} 2\pj^2_j(\vx_{v} \vmu_k) (\vx_v \vmu_k)^2 \iota} + 3 \ell_j \iota, \label{eq:bandit-1400}
\end{align}
where \eqref{eq:bandit-1000} uses $\epsilon_v(\vtheta_k) - \epsilon_v(\vtheta^*) = \vx_v (\vtheta_k - \vtheta^*) = \vx_v \vmu_k,$ \eqref{eq:bandit-1100} uses that $\vtheta^*, \vtheta_k \in \Theta_k$, the definition of $\Theta_k$ in \eqref{eq:bandit-confball} and the fact that $\mathcal{B}$ is an $K^{-3}$-net of $\mathbb{B}_2^d(2)$, and \eqref{eq:bandit-1200} uses
\begin{align*} \hspace{-2em}
\abs{\eta_v(\vtheta_k) - \eta_v(\vtheta^*)} = &\abs{(\epsilon_v(\vtheta^*) - \vx_v \vmu_k)^2 - (\epsilon_v(\vtheta^*))^2} \\
\le &2 \abs{\epsilon_v(\vtheta^*)} \vx_v \vmu_k + (\epsilon_v(\vtheta^*))^2 \le (\vx_v \vmu_k)^2 + 2 (\epsilon_v(\vtheta^*))^2.
\end{align*}
Since \eqref{eq:bandit-1400} holds for every $j \in \Lambda_2,$ it holds for $j = j_k,$ and thus \eqref{eq:bandit-2000} follows. 
\end{proof}

\section{Missing Proofs in Section~\ref{sec:rl}}
\label{sec:proof_rl}
\subsection{Proof of Theorem \ref{thm:main_rl}}
Before introducing our proof, we make some definitions. We let $\vtheta^m_{k, h} = \argmax_{\vtheta \in \Theta_k} |\vx_{k,h}^m(\vtheta - \vtheta^*)|$ and $\vmu^m_{k,h} = \vtheta^m_{k,h} - \vtheta^*$.
Recall that $\gT^{m,i}_k$ is defined in Algorithm~\ref{alg:main}. We define
\begin{align}
\Phi_k^{m, i, j}(\vmu) = \sum_{(v, u) \in \gT^{m,i}_k} \pj_j(\vx^m_{v,u}\vmu) \vx^m_{v,u}\vmu + \ell_j^2, ~ 
\Psi_k^{m,i,j}(\vmu) = \sum_{(v, u) \in \gT^{m,i}_k}  \pj_j^2(\vx_{v,u}^m \vmu) \eta_{v,u}^m. \label{eq:rl-phi-psi}
\end{align}
Note that our definitions in \eqref{eq:rl-phi-psi} are similar to those for linear bandits in \eqref{eq:bandit-phi-psi}. The main differences are: 1) we define $\Phi(\cdot), \Psi(\cdot)$ also for higher moments, as indicated by the index $m$ in their superscripts; 2) we add the variance layer, so that we only use samples from $\gT^{m,i}$; 3) since we can now estimate variance, we use the upper bound of estimated variance in lieu of the empirical variance.  
For $h \in [H + 1],$ we further define 
\begin{align}
I^k_h = \sI\{\forall u \le h, m, i, j :  \Phi_{k,u}^{m,i,j}(\vmu_{k,u}^m) \le 4(d+2)^2 \Phi_k^{m,i,j}(\vmu_{k,u}^m)\},
\end{align}
where $I^k_h = 1$ indicates that for every $u \le h$, the confidence set using data prior to the time step $(k,u)$ can be properly approximated by the confidence set with data prior to the episode $k$. We define $I^k_h$ in this way to ensure that it is $\gF^k_{h}$-measurable. The following lemma ensures that $Q_h^k$ is optimistic with high probability. Its proof is deferred to Appendix~\ref{app:lemma-optimism}.
\begin{lemma}\label{lemma:optimism} $\Pr[\forall k,h,s,a : Q^k_h(s,a)\ge Q_h^*(s,a)] \ge \Pr[\forall k  \in [K]: \vtheta^* \in \Theta_k] \ge 1- O(\delta).$
\end{lemma}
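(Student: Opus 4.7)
The plan is to handle the two inequalities in the statement separately. For the first (optimism under the event $\{\forall k: \vtheta^* \in \Theta_k\}$), I would run a standard backward induction on $h$ from $H+1$ down to $1$, showing pointwise that $V_h^k(s) \ge V_h^*(s)$. The base case $V_{H+1}^k = V_{H+1}^* \equiv 0$ is immediate. For the step, $P(\cdot \mid s,a) = \sum_i \theta^*_i P^i_{s,a}(\cdot)$ is a genuine probability measure, so monotonicity gives $\sum_i \theta^*_i P^i_{s,a} V_{h+1}^k \ge P_{s,a} V_{h+1}^*$; the event $\vtheta^* \in \Theta_k$ upgrades this to $\max_{\vtheta \in \Theta_k} \sum_i \theta_i P^i_{s,a} V_{h+1}^k \ge P_{s,a} V_{h+1}^*$, so the pre-clip quantity in the $Q_h^k$ definition dominates $Q_h^*(s,a)$. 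Assumption~\ref{assum1} forces $Q_h^*(s,a) \le 1$, so the outer $\min\{1,\cdot\}$ is harmless, and $Q_h^k \ge Q_h^*$, hence $V_h^k \ge V_h^*$.

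For the second inequality, I would induct on $k$. The base case $k = 1$ is trivial since $\Theta_1 = \sB^d_1(1) \ni \vtheta^*$. For the inductive step, assume $\vtheta^* \in \Theta_v$ for every $v \le k$; the task is to verify $\vtheta^* \in \Theta_{k+1}^{m,i,j}$ for every $(m,i,j) \in \Lambda_0 \times \Lambda_1 \times \Lambda_2$ and every $\vmu \in \gB$. Fix such a tuple and define the sequence $Y_{v,u} := \pj_j(\vx_{v,u}^m \vmu)\, \epsilon_{v,u}^m(\vtheta^*)\, \sI\{(v,u)\in \gT_k^{m,i}\}$ ordered by the natural episode-time filtration. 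Because $\vtheta^* \vx_{v,u}^m = \E[(V_{u+1}^v(s_{u+1}^v))^{2^m} \mid \gF_{v,u}]$, $\{Y_{v,u}\}$ is a bounded martingale difference sequence with $|Y_{v,u}| \le \ell_j$, and the inductive hypothesis delivers the key conditional-variance bound $\E[(\epsilon_{v,u}^m(\vtheta^*))^2 \mid \gF_{v,u}] = \vtheta^* \vx_{v,u}^{m+1} - (\vtheta^* \vx_{v,u}^m)^2 \le \eta_{v,u}^m$, which is precisely why the higher-moment inputs $\vx^{m+1}$ are maintained and the ladder $m = 0, 1, \ldots, L_0 = \lceil \log_2 H \rceil$ is tracked.

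The core estimate is then two invocations of Theorem~\ref{thm:m0m1}: the first applied to $\{Y_{v,u}\}$ gives $\lvert\sum Y_{v,u}\rvert \lesssim \sqrt{\sum Y_{v,u}^2\,\iota} + \ell_j \iota$, while the second applied to the centered squared martingale $\{Y_{v,u}^2 - \E[Y_{v,u}^2 \mid \gF_{v,u}]\}$ self-bounds $\sum Y_{v,u}^2$ against $\sum \E[Y_{v,u}^2 \mid \gF_{v,u}] \le \sum \pj_j^2(\vx_{v,u}^m\vmu)\, \eta_{v,u}^m$ up to an additive $\ell_j^2 \iota$. Composing the two estimates reproduces the constraint in \equationref{eq:rl-conf-ball}. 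A final union bound over $(m,i,j)$, over the net $\vmu \in \gB$ of size $|\gB| \le (4/(HK))^{3d}$, and over $k \in [K]$ closes the argument; the resulting $O(d\log(HK))$ log-cardinality is exactly what the choice $\iota = 5d\ln(HK)$ is designed to absorb, delivering the $1 - O(\delta)$ bound.

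The main obstacle I expect is the self-bounding conversion from the empirical squares $\sum Y_{v,u}^2$, which is what Theorem~\ref{thm:m0m1} naturally controls, to the predictable upper bound $\sum \pj_j^2(\vx^m_{v,u}\vmu)\, \eta^m_{v,u}$ appearing in the confidence ball. This step is exactly what forces the moment ladder: the variance at level $m$ is bounded via the level-$(m+1)$ input $\vx^{m+1}$ through $\eta^m_{v,u} = \max_{\vtheta \in \Theta_v}\{\vtheta \vx_{v,u}^{m+1} - (\vtheta \vx_{v,u}^m)^2\}$, and one must invoke the already-available inductive hypothesis $\vtheta^* \in \Theta_v$ (not the as-yet-unproved $\vtheta^* \in \Theta_{k+1}$) to ensure this upper bound is valid. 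The ladder terminating at $m = L_0 = \lceil \log_2 H \rceil$ is what makes the recursion bottom out, since at that level $(V^v_{u+1})^{2^{L_0}}$ sits in an essentially constant range and its variance estimate is trivial.
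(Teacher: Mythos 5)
The first half of your argument (backward induction for optimism, using that $P_{s,a}=\sum_i\theta_i^*P^i_{s,a}$ is a probability measure and that the $\min\{1,\cdot\}$ cap is harmless) matches the paper. The overall skeleton of the second half is also right: induction on $k$, the observation that the inductive hypothesis $\vtheta^*\in\Theta_v$ is what makes $\eta^m_{v,u}\ge\vtheta^*\vx^{m+1}_{v,u}-(\vtheta^*\vx^m_{v,u})^2=\Var(\epsilon^m_{v,u}(\vtheta^*)\mid\gF^v_u)$ legitimate, and the union bound over $(\vmu,m,i,j,k)$ absorbed by $\iota=5d\ln(HK)$.

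The gap is in your core concentration step. You propose to invoke \theoremref{thm:m0m1} (empirical Bernstein) to get $\lvert\sum Y_{v,u}\rvert\lesssim\sqrt{\sum Y_{v,u}^2\,\iota}+\ell_j\iota$ and then a second, self-bounding concentration to pass from $\sum Y_{v,u}^2$ to $\sum\E[Y_{v,u}^2\mid\gF^v_u]\le\sum\pj_j^2(\vx^m_{v,u}\vmu)\eta^m_{v,u}$. But \theoremref{thm:m0m1} carries constants $8$ and $60b$ and an extra multiplicative factor $m=\lceil\log_2\log_2 n\rceil$, and the self-bounding step (e.g.\ via \lemmaref{lem:mchernoff}) inflates the variance term by another constant factor. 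Composing the two gives a deviation bound of the form $c\,m\sqrt{\sum\pj_j^2(\vx^m_{v,u}\vmu)\eta^m_{v,u}\,\iota}+c'm\,\ell_j\iota$ with $c>4$ and $c'\gg 4$, which does \emph{not} imply the inequality with constants $4$ and $4$ that defines $\Theta_{k+1}^{m,i,j}$ in \equationref{eq:rl-conf-ball}; so membership of $\vtheta^*$ in the confidence set as the algorithm actually defines it does not follow. The paper avoids this entirely: it applies the conditional-variance Freedman inequality (\lemmaref{lem:ten}) \emph{once}, with $b=\ell_j$ and $\epsilon=1$, to get $\lvert\sum\pj_j(\vx^m_{v,u}\vmu)\varepsilon^m_{v,u}\rvert\le 4\sqrt{\sum\pj_j^2(\vx^m_{v,u}\vmu)\Var(\varepsilon^m_{v,u}\mid\gF^v_u)\,\iota}+4\ell_j\iota$, and then simply replaces the conditional variance by the larger $\eta^m_{v,u}$ using the inductive hypothesis. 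This is exactly the structural advantage of the linear mixture MDP setting that the paper emphasizes: unlike in the bandit case, the conditional variance is estimable, so the empirical-variance detour through $\sum Y_{v,u}^2$ is both unnecessary and, with the hard-coded constants, insufficient. (A minor secondary point: the moment ladder does not need to ``bottom out'' for this lemma --- each level $m\in\Lambda_0$ gets its own independent concentration event, and no recursion between levels is used here; that recursion only appears in \lemmaref{lem:recurrsion}.)
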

When the event specified in Lemma~\ref{lemma:optimism} holds, the regret can be decomposed as
\begin{align*}
	\Reg^K &= \sum_{k = 1}^K \left(V_1^*(s_1^k) - V_1^{\pi_k}(s_1^k) \right) \le \sum_{k = 1}^K \left( V_1^k(s_1^k) - V_1^{\pi_k}(s_1^k) \right)  \le \cReg_1 + \cReg_2 + \Reg_3 + \sum_{k,h} (I_h^k - I_{h+1}^k),
\end{align*}
where 
\begin{align*}
\hspace{-2em} \cReg_1 = \sum_{k,h} (P_{s_h^k, a_h^k}V_{h + 1}^k -& V_{h + 1}^k(s_{h+1}^k)) I^k_h,\qquad
 \cReg_2 = \sum_{k,h} \big(V_h^k(s_h^k) - r_h^k - P_{s_h^k, a_h^k} V_{h+ 1}^k \big)I_h^k,\\ 
  &\Reg_3 = \sum_{k = 1}^K \big(\sum_{h = 1}^H r_h^k - V_1^{\pi_k}(s_1^k)\big).
\end{align*}
Next we analyze these terms. First, we observe that $\Reg_3$ is a sum of a martingale difference sequence, so by  Lemma~\ref{lem:azuma}, we have $\Reg_3 \le O(\sqrt{K \log(1/\delta)})$ with probability at least $1-\delta$.
Next, we use the following lemma to bound $\sum_{k,h} (I_h^k - I_{h+1}^k)$. We defer its proof to Appendix~\ref{app:lem-boundI-proof}.
\begin{lemma}\label{lem:boundI} $\sum_{k,h} (\ind^k_h-\ind^k_{h+1})\leq O(d\log^5(dHK)).$
\end{lemma}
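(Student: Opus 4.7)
The plan is to reduce the double sum to a count of ``bad'' episodes, and then to bound that count via a convex-potential/determinant argument. For each fixed $k$ the inner sum telescopes: $\sum_{h=1}^{H}(I^k_h - I^k_{h+1}) = I^k_1 - I^k_{H+1}$. The condition defining $I^k_1=1$ is vacuous because $\Phi^{m,i,j}_{k,1}$ uses the same data as $\Phi^{m,i,j}_k$, so $I^k_1 = 1$ always, and therefore $\sum_{k,h}(I^k_h - I^k_{h+1})$ equals the number of episodes $k$ with $I^k_{H+1}=0$. By definition, $I^k_{H+1}=0$ means that there exist $m,i,j,u$ with $\Phi^{m,i,j}_{k,u}(\vmu^m_{k,u}) > 4(d+2)^2\,\Phi^{m,i,j}_k(\vmu^m_{k,u})$. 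Each summand in $\Phi^{m,i,j}$ has the form $\pj_j(\vx^m_{v,u}\vmu)\cdot \vx^m_{v,u}\vmu = \min((\vx^m_{v,u}\vmu)^2, \ell_j|\vx^m_{v,u}\vmu|) \ge 0$, so $\Phi^{m,i,j}_{k,u}(\vmu) \le \Phi^{m,i,j}_{k+1}(\vmu)$ pointwise. Hence any bad episode witnesses $\sup_{\vmu\in\sB^d_1(2)}\Phi^{m,i,j}_{k+1}(\vmu)/\Phi^{m,i,j}_k(\vmu) > 4(d+2)^2$ for some triple $(m,i,j)\in\Lambda_0\times\Lambda_1\times\Lambda_2$. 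A union bound over the $O(\log^3(dHK))$ triples reduces the lemma to showing that, for each fixed $(m,i,j)$, the number of episodes at which this supremum-ratio condition holds is $O(d\log^2(dHK))$.

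For the per-triple count the plan is to invoke the paper's convex potential lemma, which is the advertised generalization of the elliptical potential lemma to the clipped forms that define $\Phi^{m,i,j}$. The underlying intuition is the standard determinant argument: if $\Phi^{m,i,j}_k(\vmu) = \vmu^\top A_k \vmu$ were a genuine quadratic form with $A_{k+1}\succeq A_k$, then the supremum-ratio condition would force $\lambda_{\max}(A_k^{-1/2}A_{k+1}A_k^{-1/2}) > 4(d+2)^2$; monotonicity keeps the other eigenvalues at least $1$, so $\det(A_{k+1})/\det(A_k) > 4(d+2)^2$, and telescoping together with $\mathrm{tr}(A_{K+1}) = O(HK)$ (using $\|\vx^m_{v,u}\|_\infty \le 1$) and $\det(A_1)\ge \ell_j^{2d} \ge (HK)^{-O(d)}$ yields at most $O(d\log(dHK)/\log d)$ bad episodes. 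The convex potential lemma extends this determinant-style count to the genuinely clipped $\Phi^{m,i,j}$, absorbing the non-quadratic behavior into an extra polylogarithmic overhead; combined with the $\log^3$ union-bound factor, this produces the target $O(d\log^5(dHK))$.

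The main obstacle will be handling the non-convexity of $t\mapsto \pj_j(t)\cdot t = \min(t^2, \ell_j|t|)$: its derivative drops from $2\ell_j$ to $\ell_j$ at $|t|=\ell_j$, so $\Phi^{m,i,j}_k(\vmu)$ is not a convex function of $\vmu$ and the classical elliptical potential lemma cannot be applied verbatim. The convex potential lemma sidesteps this through the layering $\ell_j=2^{2-j}$ indexed by $\Lambda_2$: on each layer the clipped form behaves essentially quadratically on the ``small'' regime $|\vx\vmu|\le\ell_j$, which is precisely where the determinant-style count is valid, while the ``large'' regime $|\vx\vmu|>\ell_j$ is controlled by the level $\ell_j$ itself. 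A secondary subtlety is that the witnessing vectors $\vmu^m_{k,u}$ vary across episodes; this is exactly why the supremum-ratio reformulation is essential, since it reduces a multi-witness problem to a single geometric quantity to which the determinant-style count applies without a union bound over the net $\gB$ (whose $(4HK)^{3d}$ size would be fatal).
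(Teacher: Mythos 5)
Your reduction is exactly the paper's: monotonicity of $I^k_h$ in $h$ turns the double sum into a count of bad episodes (the paper counts bad time steps, which is equivalent), each bad episode is charged to at least one triple $(m,i,j)$, and the problem becomes a per-triple count of $4(d+2)^2$-multiplying events for the clipped potential $\Phi^{m,i,j}$, with the final factor $\abs{\Lambda_0\times\Lambda_1\times\Lambda_2}=O(\log^3(dHK))$ coming from the number of triples. Up to that point the argument is correct, and the supremum-ratio reformulation via $\Phi^{m,i,j}_{k,u}(\vmu)\le\Phi^{m,i,j}_{k+1}(\vmu)$ is a clean way to package it.

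The gap is in the per-triple count, which is where the real content of the lemma lives. First, the ingredient you need is Lemma~\ref{lemma:multiplingcount} (the bound $z\le O(d\log^2(dt))$ on the number of multiplying events), not the convex potential lemma (Lemma~\ref{lemma:bound_m2}): the latter bounds a sum of normalized clipped squares, does not state anything about multiplying events, and is itself proved \emph{using} Lemma~\ref{lemma:multiplingcount}, so it cannot serve as the justification here. Second, the mechanism you sketch for extending the determinant argument to the clipped form does not work as stated: for a fixed $j$ the map $\vmu\mapsto\sum_{v}\pj_j(\vx_v\vmu)\,\vx_v\vmu$ is not a quadratic form and is not even convex in $\vmu$, so there is no matrix $A_k$ whose determinant can be telescoped, and the layering $\ell_j=2^{2-j}$ does nothing to restore this structure --- within a single layer $j$ the clipping is still present and the ``small regime is quadratic, large regime is controlled by $\ell_j$'' heuristic does not produce a potential that provably grows by a bounded total factor. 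The paper's actual fix is different: replace $\clip(x,\ell)x$ by the convex surrogate $f_\ell(x)$ with $\clip(x,\ell)x\le f_\ell(x)\le 2\clip(x,\ell)x$, use the scaling property $f_\ell(\lambda x)\ge\lambda^2 f_\ell(x)$ for $\lambda\in(0,1]$, and run a volume argument on the symmetric convex level sets $D_{t,i}=\{\vmu: E_t(\vmu)\le 2^i\}$ over a logarithmic grid of thresholds: each multiplying event shrinks the intercept of some $D_{\tau,i}$ in the witnessing direction by a factor of $d$ and hence its volume by a constant factor (Lemma~\ref{lemma:d-multipling}), and comparing initial and final volumes yields the $O(d\log^2(dt))$ count. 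Without this (or an equivalent) argument, your passage from the quadratic intuition to the clipped potential is an assertion rather than a proof.
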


To bound $\cReg_1$ and $\cReg_2$, we need to define the following quantities.
First, we denote $\cvx_{k,h} = \vx_{k,h} I_h^k$ and $\ceta_{k,h}^m = \eta_{k,h}^m I_h^k$.
Next, for $m \in \Lambda_0$, we define
\begin{align*}
\cR_m = \sum_{k,h}\left| \cvx_{k,h}^m \vmu_{k,h}^m \right|, \qquad \cM_m = \sum_{k,h} \left( P_{s_h^k, a_h^k}(V_{h + 1}^k)^{2^{m}} - (V_{h + 1}^k(s_{h+1}^k))^{2^{m}}\right) I_h^k.
\end{align*}
Intuitively, $\cR_m$ represents the ``regret'' of $2^m$-th moment prediction and $\cM_m$ represents the total variance of $2^m$-th order value function.
We have $\cReg_1 = \cM_0$ by definition and 
and using that 
\begin{align*}
Q_h^k(s, a) - r(s, a) - P_{s, a} V_{h + 1}^k \le \max_{\vtheta \in \Theta_k} \vx_{k,h}^0(\vtheta - \vtheta^*),
\end{align*}
we have $\cReg_2 \le \cR_0.$
So it suffices to bound $\cR_0 + \cM_0,$ which is done by the following lemma.
\begin{lemma}\label{lem:cr0_cm0}
With probability at least $1-\delta$, we have 
	\[\cR_0 + \abs{\cM_0}\le O\left( d^{4.5}\sqrt{ K \log^5(dHK)\log(1/\delta)}+d^9\log^6(dHK)\log(1/\delta)\right).\]
\end{lemma}
Lemma~\ref{lem:cr0_cm0} is the main technical part of our result in Section~\ref{sec:rl}, so we sketch its proof in the next subsection. With the lemma in hand, we have with probability $1-\delta$ that $\Reg^K \le \widetilde{O}( d^{4.5} \sqrt{ K }+d^9).$ Finally, We conclude the proof to Theorem~\ref{thm:main_rl} by choosing $\delta = 1/K$ and noting that $\Reg^K \le K.$

\subsection{Bounding $\cR$ and $\cM$}
We sketch the proof for Lemma~\ref{lem:cr0_cm0}.
The first step to bound $\cR_m$ is to relate it to the variance $\ceta^m$.
\begin{lemma} \label{lem:sumci} 
With probability at least $1-\delta$, we have 
	$\cR_m \le O(d^{4}\sqrt{\sum_{k, h} \ceta_{k,h}^m \iota \log^7(dHK) }  +  {d^6 \iota \log^5(dHK)}).$
\end{lemma}
We defer the proof to Appendix~\ref{app:lem-sumci-proof}. The proof is spiritually similar to proof of Lemma~\ref{lem:bandit-conf-sum}.
The main difference is that we use the peeling technique to the magnitude of the variance.

Based on Lemma~\ref{lem:sumci}, we use the following recursion lemma to relate $\cR_m,\cM_m$ to $\cR_{m+1},\cM_{m+1}$. We defer the proof to Appendix \ref{app:lem-recurrsion-proof}. It mainly uses similar ideas in \citet{zhang2020reinforcement}.

 \begin{lemma}[Recursions]\label{lem:recurrsion} 
 With probability at least $1-\delta$, we have 
 \begin{align*}
 	\cR_m &\le O\left(d^4\sqrt{( \cM_{m+1}+  2^{m + 1}(K + \cR_0) +  \cR_{m+1} + \cR_m)  \iota \log^7(dHK)} +  {d^6 \iota \log^5(dHK)} \right), \\
     \abs{\cM_m} &\le O\left(\sqrt{(\cM_{m+1} + O(d \log^5(dHK)) +  2^{m + 1}(K + \cR_0)) \log(1/\delta)} + \log(1/\delta)\right).
 \end{align*}
\end{lemma}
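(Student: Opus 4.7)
The plan is to prove both displayed inequalities in parallel, since they share a single master quantity --- the total conditional variance of $(V^k_{h+1})^{2^m}$ along the trajectory,
\[
V_\star \;=\; \sum_{k,h} \Var\bigl[(V^k_{h+1})^{2^m}(s_{h+1}) \,\bigm|\, s_h^k, a_h^k\bigr]\, I_h^k \;=\; \sum_{k,h} \bigl[\vtheta^* \vx_{k,h}^{m+1} - (\vtheta^* \vx_{k,h}^m)^2\bigr]\, I_h^k.
\]
The bound on $|\cM_m|$ will follow from Freedman's martingale Bernstein inequality applied to $\{[P(V^k_{h+1})^{2^m} - (V^k_{h+1}(s_{h+1}^k))^{2^m}]\, I_h^k\}_{k,h}$: each summand has magnitude at most $1$ (values lie in $[0,1]$) and the predictable quadratic variation equals exactly $V_\star$, using that $I_h^k$ is $\gF^k_h$-measurable. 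The bound on $\cR_m$ will start from \lemmaref{lem:sumci}, which already gives $\cR_m \le \widetilde{O}(d^{4.5}\sqrt{\sum_{k,h}\ceta_{k,h}^m} + d^7)$, so only a comparison between $\sum_{k,h}\ceta_{k,h}^m$ and $V_\star$ up to an $O(\cR_{m+1} + \cR_m)$ correction remains.

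The heart of the proof is a single decomposition of $V_\star$. Adding and subtracting $(V^k_{h+1}(s_{h+1}^k))^{2^{m+1}}$ and $(V^k_h(s_h^k))^{2^{m+1}}$ splits it into three pieces: (i) the martingale $\cM_{m+1}$; (ii) a telescoping residue $\sum_{k,h}[(V^k_{h+1}(s_{h+1}^k))^{2^{m+1}} - (V^k_h(s_h^k))^{2^{m+1}}]\, I_h^k$ which, because $I_h^k$ is non-increasing in $h$ and $V^k \in [0,1]$, collapses per episode to a boundary term of magnitude at most $1$ and is therefore bounded by $\sum_{k,h}(I_h^k - I_{h+1}^k) \le O(d\log^5(dHK))$ via \lemmaref{lem:boundI}; and (iii) a ``moment Bellman error'' $\sum_{k,h}[(V^k_h(s_h^k))^{2^{m+1}} - (P(V^k_{h+1})^{2^m})^2]\, I_h^k$. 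I would handle (iii) by factoring as $a^2 - b^2 = (a-b)(a+b)$ with $a,b \in [0,1]$, then applying Jensen's inequality $(PV^k_{h+1})^{2^m} \le P(V^k_{h+1})^{2^m}$ to push the power inside, and finally combining the optimistic Bellman identity $V^k_h(s_h^k) = r_h^k + PV^k_{h+1}(s_h^k,a_h^k) + \vx_{k,h}^0\vmu_{k,h}^0$ with the elementary bound $a^{2^m} - b^{2^m} \le 2^m(a-b)$ on $[0,1]$; Assumption~\ref{assum1} then yields (iii)~$\le 2^{m+1}(K + \cR_0)$. Altogether, $V_\star \le \cM_{m+1} + 2^{m+1}(K + \cR_0) + O(d\log^5(dHK))$.

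The remaining step for $\cR_m$ compares $\sum_{k,h}\ceta_{k,h}^m$ with $V_\star$. If $\vtheta^\dagger \in \Theta_k$ denotes the maximizer in the definition of $\eta_{k,h}^m$, differencing against the $\vtheta^*$-value and applying $a^2-b^2$ once more (valid since $|\vtheta \vx^m| \le \|\vtheta\|_1\|\vx^m\|_\infty \le 1$) gives the pointwise bound $\ceta_{k,h}^m - [\vtheta^*\vx_{k,h}^{m+1} - (\vtheta^*\vx_{k,h}^m)^2] \le (\vtheta^\dagger - \vtheta^*)\vx_{k,h}^{m+1} + 2|(\vtheta^\dagger - \vtheta^*)\vx_{k,h}^m|$; each term is controlled by the two-sided confidence-set width in the relevant direction and sums to $O(\cR_{m+1} + \cR_m)$ (the negative-direction half is handled by a twin quantity for which the proof of \lemmaref{lem:sumci} applies verbatim). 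Substituting the resulting bound $\sum\ceta \le \cM_{m+1} + 2^{m+1}(K + \cR_0) + \cR_{m+1} + \cR_m + O(d\log^5(dHK))$ into \lemmaref{lem:sumci} yields the first displayed inequality, and plugging the same upper bound on $V_\star$ into Freedman's inequality yields the second. The main obstacle is step (iii): what naturally comes out of Bellman is a comparison between $(V^k_h)^{2^{m+1}}$ and $(PV^k_{h+1})^{2^{m+1}}$, whereas $V_\star$ features $(P(V^k_{h+1})^{2^m})^2$, and these two are not directly comparable. The Jensen step is precisely what aligns them and makes the induction on $m$ close on itself; a secondary bookkeeping difficulty is the symmetrization of $\cR_m$ needed to absorb the two-sided confidence-set width appearing in the $\ceta - \Var$ estimate.
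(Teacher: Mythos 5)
Your proposal is correct and follows essentially the same route as the paper's proof: the quantity you call $V_\star$ is exactly the paper's $\sum_{k,h}\czeta_{k,h}^m$, and your three-way decomposition (martingale $\cM_{m+1}$, telescoping/indicator residue controlled by \lemmaref{lem:boundI}, and the moment Bellman error handled via Jensen plus $a^{2^{m+1}}-b^{2^{m+1}}\le 2^{m+1}\max\{a-b,0\}$ and Assumption~\ref{assum1}), together with the $\ceta$-versus-true-variance comparison costing $O(\cR_{m+1}+\cR_m)$ and the final substitution into \lemmaref{lem:sumci} and the Freedman-type bound, matches the paper step for step. Your explicit attention to the two-sided confidence-width issue in the $\ceta$ comparison is if anything slightly more careful than the paper's treatment.
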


Finally, we can prove Lemma~\ref{lem:cr0_cm0} by collecting Lemma~\ref{lem:sumci},\ref{lem:recurrsion} and using a technical lemma about recursion (Lemma~\ref{lemma:sequence2}). The details are in Appendix~\ref{app:lem-cr0-cm0-proof}.

\subsection{Proof of Lemma~\ref{lemma:optimism}}

\label{app:lemma-optimism}

\begin{proof} The lemma consists of two inequalities. 
	The first inequality is proved using backward induction, where the induction step is given as 
	\begin{align*}
		Q_h^k(s,a)  &= \min \{1, r(s,a)+ \max_{\vtheta\in \Theta_k}\sum_{i=1}^d \theta_i P_{s,a}^{i}V_{h+1}^k \} \\
		&\geq  \min \{1, r(s,a)+ \sum_{i=1}^d \vtheta^*_i P_{s,a}^{i}V_{h+1}^k \} \geq \min \{1, r(s,a)+ \sum_{i=1}^d \theta^*_i P_{s,a}^{i}V_{h+1}^* \} = Q^*_h(s,a), \\
	V_h^k(s) &= \max_a Q_h^k(s,a)\geq \max_a Q_h^*(s,a) = V^*_h(s).
	\end{align*}
	
We now prove the second inequality. Let $\delta' = e^{-\iota}.$ We define the desired event $\gE = \bigcap_{k, m, i, j} \gE_k^{m,i,j},$ where 
\begin{align*}
    \gE_k^{m,i,j} = \Bigg\{\abs{\sum_{(v,u)\in \gT^{m,i}_k} \pj_j(\vx_{v, u}^m \vmu) \varepsilon_{\kappa,h}^m}  \leq 4\sqrt{ \sum_{(v,u)\in \gT^{m,i}_k}  \pj_j^2(\vx_{v,u}^{m} \vmu)\Var(\varepsilon_{v,u}^m \mid \gF_u^v)   \ln \frac{1}{\delta'}} +4\ell_{j}\ln \frac{1}{\delta'}, \forall \vmu \in \gB\Bigg\}.
\end{align*}

Note that for a fixed $k,$ we have that $\lvert \pj_j(\vx_{v,u}^m \vmu) \varepsilon_{v,u}^m \rvert \le \ell_j \le 1$ and that 
\begin{align*}
    \Var\left(\pj_j(\vx_{v,u}^m \vmu) \varepsilon_{v,u}^m \sI\{(v,u)\in \gT^{m,i}_k\} \mid \gF_u^v\right) = \pj_j(\vx_{k, h}^m \vmu)^2\sI\{(v,u)\in \gT^{m,i}_k\}  \Var(\varepsilon_{v,u}^m \mid \gF_u^v),
\end{align*} so by Lemma~\ref{lem:ten} with $b = \ell_j, \epsilon =1$, we have 
\begin{align*}
    &\quad \Pr[\abs{\sum_{(v,u)\in \gT^{m,i}_k} \pj_j(\vx_{v,u}^m \vmu) \varepsilon_{v,u}^m}  \ge 4\sqrt{ \sum_{(v,u)\in \gT^{m,i}_k}  \pj_j^2(\vx_{v,u}^{m} \vmu)\Var(\varepsilon_{v,u}^m \mid \gF_u^v)   \ln \frac{1}{\delta'}} +4\ell_{j}\ln \frac{1}{\delta'}]\\
    &\le 4 \delta' \log_2(HK). 
\end{align*}
Using a union bound over $(\vmu,m,i,j,k) \in \gB \times \Lambda_0 \times \Lambda_1 \times \Lambda_2 \times [K],$ we have $\Pr[\gE] \ge 1 - O(\delta' K \abs{\gB}  \log^4(HK)) \ge 1-O(\delta).$

Next we show that the event $\gE$ implies that $\vtheta^* \in \Theta_k$ for every $k \in [K].$ We show by induction over $k.$ For $k = 1$ it is clear. For $k \ge 1,$ since $\vtheta^* \in \Theta_{k},$ for every $h \in [H],$ we have $\eta_{k, h}^m = \max_{\vtheta \in \Theta_k} \{\vtheta \vx_{k,h}^{m+1} - (\vtheta \vx_{k, h}^m)^2\} \ge \vtheta^* \vx_{k,h}^{m+1} - (\vtheta^* \vx_{k, h}^m)^2 \ge \Var(\varepsilon_{k, h}^m \mid \gF_h^k),$ which, together with the event $\bigcap_{m, i, j} \gE_{k+1}^{m,i,j},$ implies that $\vtheta^* \in \Theta_{k+1}.$
\end{proof}

\subsection{Proof of Lemma~\ref{lem:boundI}}

\label{app:lem-boundI-proof}

\begin{proof} We define 
\begin{align*}
    I_{k,h}^{m,i,j} = \sI\{\forall u \le h :  \Phi_{k,u}^{m,i,j}(\vmu_{k,u}^m) \le 4(d+2)^2 \Phi_k^{m,i,j}(\vmu_{k,u}^m)\}.
\end{align*}
Then we have $I^k_h = \prod_{m,i,j} I_{k,h}^{m,i,j}.$ Also we have
\begin{align*}
\sum_{h} (I^k_h - I^k_{h+1}) \le \sum_{m, i, j} \sum_{h} (I_{k,h}^{m,i,j} - I_{k,h+1}^{m,i,j}).  
\end{align*}
Note that $I^k_h \ge I^k_{h+1}$ and $I_{k,h}^{m,i,j} \ge I_{k,h+1}^{m,i,j}.$ For each fixed $m, i, j,$ if $\sum_h (I_{k,h}^{m,i,j} - I_{k,h+1}^{m,i,j}) = 1,$ then there exists $h \in [H],$ such that for the time step $(k,h),$ we have 
$\Phi_{k,h}^{m,i,j}(\vmu) > 4(d+2)^2 \Phi_k^{m,i,j}(\vmu)$ for some $\vmu.$ By Lemma~\ref{lemma:ener_bound} with $f(x)= \clip(x,\ell_{j})x$ and $\ell = \ell_j$, there are at most $O(d \log^2(dHK))$ such time steps. We conclude by noting that we have $\abs{\Lambda_0 \times \Lambda_1 \times \Lambda_2} \le O(\log^3(dHK))$ possible $m,i,j$ pairs.
\end{proof}

\subsection{Proof of Lemma~\ref{lem:sumci}}
To prove this lemma, we define the index sets to help us apply the peeling technique.
We denote
\begin{align*}
\gT^{m,i,j}_{k} = \{(v,u) \in \gT^{m,i}_k : \abs{\vx_{v,u}^m \vmu_{v,u}^m} \in (\ell_{j+1}, \ell_j]\}, \\\gT^{m,i,L_2 + 1}_k = \{(v,u) \in \gT^{m,i}_k : \abs{\vx_{v,u}^m \vmu_{v,u}^m} \in [0, \ell_{L_2+1}]\},
\end{align*}
and $\cgT^{m,i,j}_k = \{(v,u) \in \gT^{m,i,j}_k : I^v_u = 1\}.$ 
We also denote $\gT^{m,i,j} = \gT^{m,i,j}_{K+1}, \cgT^{m,i,j} = \cgT^{m,i,j}_{K+1}.$

\label{app:lem-sumci-proof}

\begin{proof} Since $\vtheta_{k,h}^m \in \Theta_k \subseteq \Theta_k^{m,i,j},$ choosing $\vmu = \vmu_{k, h}^m$ in the confidence set definition and using that $\vx_{v, u}^m \vmu_{k, h}^m = \epsilon_{v,u}^m(\vtheta^*) - \epsilon_{v, u}^m(\vtheta_{k, h}^m),$ we have
	\begin{align}
	\Phi_{k}^{m, i, j}(\vmu_{k,h}^m) &=    \sum_{(v, u) \in \gT^{m,i}_k} \pj_j(\vx^m_{v,u} \vmu^m_{k,h}) \vx^m_{v,u} \vmu^m_{k,h} + \ell_j^2 \notag \\
	&\le \abs{\sum_{(v, u) \in \gT^{m,i}_k}  \pj_j(\vx_{v,u}^m \vmu_{k,h}^m) \epsilon_{v,u}^m(\vtheta^*)} + \abs{\sum_{(v, u) \in \gT^{m,i}_k}  \pj_j(\vx_{v,u}^m \vmu_{k,h}^m) \epsilon_{v,u}^m(\vtheta^m_{k,h})} + \ell_j^2 \notag \\
	&\le 8 \sqrt{\sum_{(v, u) \in \gT^{m,i}_k}  \pj_j(\vx_{v,u}^m \vmu_{k,h}^m) \eta_{v,u}^m\iota} + 8 \ell_j \iota + \ell_j^2 \notag \\
	&\le 8 \sqrt{\Psi^{m,i,j}_{k}(\vmu_{k,h}^m)\iota} + 16 \ell_j \iota.
	\end{align}
	Therefore, when $I_k^h = 0,$ we have 
	\begin{align*}
	\frac{\Phi_{k, h}^{m,i,j}(\vmu_{k,h}^m)}{4(d+2)^2} \le \Phi_{k}^{m, i, j}(\vmu_{k,h}^m) \le 16( \sqrt{\Psi^{m,i,j}_{k,h}(\vmu_{k,h}^m)\iota} +  \ell_j \iota ).
	\end{align*}
	Next we analyze the sum.
	Using the fact that \[\frac{64(d+2)^2  \left(\sqrt{\Psi^{m,i,j}_{k,h}(\vmu_{k,h}^m)\iota} +  \ell_j \iota\right)}{\Phi_{k, h}^{m,i,j}(\vmu_{k,h}^m)} \ge 1, \]
	we obtain
	\begin{align}
	\sum_{(k,h)\in \cgT^{m,i,j}} \left| \vx_{k,h}^m \vmu_{k,h}^m \right| &\leq \sum_{(k,h)\in \cgT^{m,i,j}}   \left| \vx_{k,h}^m \vmu_{k,h}^m \right|  \frac{64(d+2)^2  \left(\sqrt{\Psi^{m,i,j}_{k,h}(\vmu_{k,h}^m)\iota} +  \ell_j \iota\right)}{\Phi_{k, h}^{m,i,j}(\vmu_{k,h}^m)}
	\\ & \leq 64(d+2)^2 \sum_{(k,h)\in \cgT^{m,i,j}} \left( \frac{ \left|\vx_{k,h}^m \vmu_{k, h}^m \right|  \sqrt{\ell_i\iota}}{\sqrt{\Phi_{k,h}^{m,i,j}(\vmu_{k,h}^m)} }  + \frac{\left|\vx_{k,h}^m \vmu_{k, h}^m \right| \ell_j\iota}{\Phi_{k,h}^{m,i,j}(\vmu_{k,h}^m)}\right), \label{lem5-1000}
	\end{align}
	where the last inequality uses that for every $\vmu,$ we have 
	\begin{align}
	\Psi_{k,h}^{m,i,j}(\vmu) = \sum_{(v,u)\in\gT^{m,i}_{k,h}} \pj_j^2(\vx_{v,u}^m \vmu) \eta_{v,u}^m \le  \ell_i \sum_{(v,u)\in\gT^{m,i}_{k,h}} \pj_j(\vx_{k,h}^m \vmu) \vx_{k,h}^m \vmu \le \ell_i \Phi_{k,h}^{m,i,j}(\vmu). \label{eqn:rl_psi}
	\end{align}
	In \eqref{eqn:rl_psi}, the first inequality uses that $\eta_{v,u}^m \le \ell_i$ for $(v, u) \in \gT^{m,i}_{k,h}$ and that $\pj_j^2(\alpha)\le \pj_j(\alpha) \alpha$ for $\alpha \in \sR,$ and the second inequality uses the definition of $\Phi_{k,h}^{m,i,j}(\vmu).$ Next we bound the two terms in \eqref{lem5-1000}. To bound the first term, we note that 
	\begin{align}
	\hspace{-2em}   \sum_{(k,h)\in \cgT^{m,i,j}}  \frac{\left|\vx_{k,h}^m \vmu_{k, h}^m  \right|}{\sqrt{\Phi_{k,h}^{m,i,j}(\vmu_{k,h}^m)}} &\le \sqrt{\abs{\cgT^{m,i,j}}} \sqrt{ \sum_{(k,h)\in \cgT^{m,i,j}} \frac{(\vx_{k,h}^m \vmu_{k,h}^m)^2}{\Phi_{k,h}^{m,i,j}(\vmu_{k,h}^m)} } \label{lem5-2000} \\
	&\le  \sqrt{\abs{\cgT^{m,i,j}}} \sqrt{ \sum_{(k,h)\in \cgT^{m,i,j}} \frac{\pj_j^2(\vx_{k,h}^m \vmu_{k,h}^m)}{\Phi_{k,h}^{m,i,j}(\vmu_{k,h}^m)} } \label{lem5-2100}\\
	&\le \sqrt{\abs{\cgT^{m,i,j}}}   \sqrt{ \sum_{(k,h)\in \cgT^{m,i,j}} \frac{\pj_j^2(\vx_{k,h}^m \vmu_{k,h}^m)}{\sum\limits_{(v,u) \in \cgT^{m,i,j}_{k,h}}  \pj_j(\vx^m_{v,u}\vmu_{k,h}^m) \vx^m_{v,u}\vmu_{k,h}^m + \ell_j^2} } \label{lem5-2200}\\
	&\le \sqrt{\abs{\cgT^{m,i,j}}} \times O(\sqrt{d^4  \log^3(dHK)}), \label{lem5-2300}
	\end{align}
	where \eqref{lem5-2000} uses Cauchy's inequality, \eqref{lem5-2100} uses that $\left|\vx_{k,h}^m \vmu_{k,h}^m\right| \le \ell_j$ for $(k, h) \in \gT^{m,i,j},$ \eqref{lem5-2200} uses the definition of $\Phi^{m,i,j}_{k,h}(\vmu),$ and \eqref{lem5-2300} uses Lemma~\ref{lemma:bound_m2}. To bound the second term in \eqref{lem5-1000}, we have 
	\begin{align}
	\sum_{(k,h)\in \cgT^{m,i,j}}  \frac{ \left|\vx_{k,h}^m \vmu_{k, h}^m \right| \ell_j }{\Phi_{k,h}^{m,i,j}(\vmu_{k,h}^m)} \le \sum_{(k,h)\in \gT^{m,i,j}} \frac{2\pj_j^2(\vx_{k,h}^m \vmu_{k,h}^m)}{\Phi_{k,h}^{m,i,j}(\vmu_{k,h}^m)} \le O(d^4 \log^3(dHK)), \label{lem5-3000}
	\end{align}
	where the first inequality uses that $\left|\vx_{k,h}^m \vmu_{k,h}^m\right| \ge \ell_j/2$ for $(k, h) \in \gT^{m,i,j}$ and the second inequality is the same as what we have shown from \eqref{lem5-2100} to \eqref{lem5-2300}. As a result, combining \eqref{lem5-1000},\eqref{lem5-2300} and \eqref{lem5-3000}, we have 
	\begin{align}
	\sum_{(k,h)\in \cgT^{m,i,j}} \left| \vx_{k,h}^m \vmu_{k,h}^m \right| &\le 64(d+2)^2 \times O\left( \sqrt{d^4 \ell_i \abs{\cgT^{m,i,j}}  \iota \log^3(dHK)} + {d^4 \iota \log^3(dHK)} \right) \\ 
	&\le O\left(d^4\sqrt{\ell_i \abs{\cgT^{m,i,j}}  \iota \log^3(dHK)} + {d^6 \iota \log^3(dHK)} \right). \label{lem5-4000}
	\end{align}

	Recall that \eqref{lem5-4000} requires $\vx_{k,h}^m \vmu_{k,h}^m \in [\ell_j/2,\ell_j],$ which would be false for $j = L_2 + 1.$ In this corner case,  $j = L_2 + 1,$ we have 
	\begin{align}
	\sum_i \sum_{(k,h)\in \cgT^{m, i,j}} \left|\vx_{k,h}^m \vmu_{k,h}^m \right| \le KH \ell_j \le O(1). \label{lem5-5000}
	\end{align}
	Finally, combining \eqref{lem5-4000} and \eqref{lem5-5000}, we have 
	\begin{align}
	\sum_{k, h}\left| \cvx_{k,h}^m \vmu_{k,h}^m \right| &= \sum_{i, j} \sum_{(k, h) \in \cgT^{m, i, j}} \left|\vx_{k,h}^m \vmu_{k,h}^m \right|\notag \\
	&\le O(1) + \sum_{i,j}  O\left(d^4\sqrt{\ell_i \abs{\cgT^{m,i,j}}  \iota \log^3(dHK)} + L_2 {d^6 \iota \log^3(dHK)} \right) \notag \\
	&\le O\left(d^4\sqrt{\sum_{k, h} \ceta_{k,h}^m  \iota \log^7(dHK)} +  {d^6 \iota \log^5(dHK)} \right), \label{lem5-6000}
	\end{align}
	where \eqref{lem5-6000} uses that $\ell_i \abs{\cgT^{m,i,j}} \le O(1 + \sum_{k, h} \ceta_{k,h}^m),$ which can be proved as follows: for $i \le L_1,$ it is due to $\eta_{k,h}^m \ge \ell_i/2$; for $i = L_1 + 1,$ it is due to $1/\ell_i \ge KH \ge \abs{\cgT^{m,i,j}}.$
\end{proof}

\subsection{Proof of Lemma~\ref{lem:recurrsion}}

\label{app:lem-recurrsion-proof}

\begin{proof} Define  
	\begin{align*}
	\czeta_{k,h}^m = (P_{s_h^k, a_h^k}(V_{h + 1}^k)^{2^{m + 1}} - (P_{s_h^k, a_h^k}(V_{h + 1}^k)^{2^{m}})^2) I^k_h.
	\end{align*}
	We note that $\cM_m$ is a martingale, so  by Lemma~\ref{lem:ten} with a union bound over $m,$ we have 
	\begin{align}
	\Pr[\forall m\in \Lambda_0 : \abs{\cM_m} \le 2 \sqrt{2 \sum_{k,h} \czeta_{k,h}^m \ln \frac{1}{\delta}} + 4\ln\frac{1}{\delta}]\ge 1- O(\delta \log^2(dKH)). \label{eq:recur-3000}
	\end{align}
	By the definition of $\ceta_{k,h}^m,$ we have 
	\begin{align}
	\sum_{k,h}\ceta_{k,h}^m \leq &\sum_{k,h} \left( \czeta_{k,h}^m + \max_{\vtheta\in \Theta_k} \cvx_{k,h}^{m+1}(\vtheta-\vtheta^*) +2 \max_{\vtheta\in \Theta_k} \cvx_{k,h}^m(\vtheta^*-\vtheta) \right) \\
	\leq &\sum_{k,h}  \czeta_{k,h}^m + \cR_{m+1} + 2\cR_m,\label{eq:recur-2000}
	\end{align}
	We have that 
	\begin{align}
	\sum_{k,h}  \czeta_{k,h}^m &= \sum_{k,h} \left( P_{s_h^k, a_h^k}(V_{h + 1}^k)^{2^{m + 1}} - (P_{s_h^k, a_h^k}(V_{h + 1}^k)^{2^{m}})^2\right)I_h^k \notag \\ 
	&\le \sum_{k,h} \left( P_{s_h^k, a_h^k}(V_{h + 1}^k)^{2^{m + 1}} - (V_{h + 1}^k(s_{h+1}^k))^{2^{m + 1}}\right) I_h^k  + \sum_{k, h} (V_{h}^k(s_h^k))^{2^{m + 1}} (I_h^k - I^k_{h+1}) \notag \\
	&\quad +\sum_{k,h} \left((V_{h}^k(s_h^k))^{2^{m + 1}} - (P_{s_h^k, a_h^k}(V_{h + 1}^k)^{2^{m}})^2 \right) I_h^k \notag \\
	&\le \cM_{m+1} + O(d \log^5(dHK)) + \sum_{k,h} \left((V_{h}^k(s_h^k))^{2^{m + 1}} - (P_{s_h^k, a_h^k}(V_{h + 1}^k)^{2^{m}})^2 \right) I_h^k \notag \\
	&\le \cM_{m+1} + O(d \log^5(dHK)) + \sum_{k,h} \left((V_{h}^k(s_h^k))^{2^{m + 1}} - (P_{s_h^k, a_h^k} V_{h + 1}^k)^{2^{m+1}} \right) \notag \\
	&\le \cM_{m+1} + O(d \log^5(dHK)) +  2^{m + 1} \sum_{k,h} I^k_h \cdot \max\{V_h^k(s_h^k) -  P_{s_h^k, a_h^k} V_{h + 1}^k, 0\} \notag \\
	&\le\cM_{m+1} + O(d \log^5(dHK)) +  2^{m + 1} \sum_{k,h} I^k_h \left(r(s_h^k, a_h^k) +\max_{\vtheta \in \Theta_k} \vx_{k,h}^0 (\vtheta - \vtheta^*) \right)\notag \\
	&\le \cM_{m+1} + O(d \log^5(dHK)) +  2^{m + 1}(K + \cR_0). \label{eq:recur-4000}
	\end{align}
	Finally, by \eqref{eq:recur-2000}, \eqref{eq:recur-4000}  and Lemma~\ref{lem:sumci}, we have
	\begin{align}
	\hspace{-5em} \cR_m &\le O\left(d^4\sqrt{( \cM_{m+1} + O(d \log^5(dHK)) +  2^{m + 1}(K + \cR_0) +  \cR_{m+1} + 2\cR_m)  \iota \log^7(dHK)} +  {d^6 \iota \log^5(dHK)} \right) \notag \\
	&\le O\left(d^4\sqrt{( \cM_{m+1}+  2^{m + 1}(K + \cR_0) +  \cR_{m+1} + \cR_m)  \iota \log^7(dHK)} +  {d^6 \iota \log^5(dHK)} \right), \label{eq:recur-6000}
	\end{align}
	which proves the first part of the lemma. By \eqref{eq:recur-3000} and \eqref{eq:recur-4000}, we have
	\begin{align}
	\abs{\cM_m} \le O\left(\sqrt{(\cM_{m+1} + O(d \log^5(dHK)) +  2^{m + 1}(K + \cR_0)) \log(1/\delta)} + \log(1/\delta)\right), \label{eq:recur-7000}
	\end{align}
	which proves the second part of the lemma.
\end{proof}

\subsection{Proof of Lemma~\ref{lem:cr0_cm0}}

\label{app:lem-cr0-cm0-proof}

\begin{proof}
	Let $b_m = \cR_m + \lvert{\cM_m}\rvert.$ By \eqref{eq:recur-6000} and \eqref{eq:recur-7000}, we can bound $b_m$ recursively as 
	\begin{align}
	b_m \leq O\left( \sqrt{d^9\log^{5}(Td) \log \frac{1}{\delta}} \sqrt{b_m + b_{m+1}+2^{m + 1}(K + \cR_0) }+d^7\log^{6}(Td)\log\frac{1}{\delta} \right).\label{eq:efe6}
	\end{align}
	Note that $b_m\leq 2KH$ for $m \in \Lambda_1$. By Lemma~\ref{lemma:sequence2} with parameters
	\begin{align*}
	\lambda_1 = 2 KH, \quad \lambda_2 =  \sqrt{d^9\log^5(Td)\log(1/\delta)}, \quad \lambda_3 = K + \cR_0, \quad \lambda_4 = d^7\log^6(Td) \log(1/\delta),
	\end{align*}
	we obtain that 
	\begin{align*}
	\cR_0 \leq b_0 \le O\left(   \sqrt{d^9(K+\cR_0)\log^5(Td)\log(1/\delta)} +d^9\log^6(Td)\log(1/\delta)\right),
	\end{align*}
	which implies 
	\begin{align*} b_0 \le O\left( d^{4.5}\sqrt{ K \log^5(Td)\log(1/\delta)}+d^9\log^6(Td)\log(1/\delta)\right)
	\end{align*}
	and completes the proof.
\end{proof}

\end{document}